\def\eqref#1{equation~\ref{#1}}
\def\1{\bm{1}}
\def\vzero{{\bm{0}}}
\def\vb{{\bm{b}}}
\def\ve{{\bm{e}}}
\def\vk{{\bm{k}}}
\def\vo{{\bm{o}}}
\def\vq{{\bm{q}}}
\def\vu{{\bm{u}}}
\def\vv{{\bm{v}}}
\def\vw{{\bm{w}}}
\def\vx{{\bm{x}}}
\def\vy{{\bm{y}}}
\def\vz{{\bm{z}}}
\def\mA{{\bm{A}}}
\def\mB{{\bm{B}}}
\def\mC{{\bm{C}}}
\def\mD{{\bm{D}}}
\def\mE{{\bm{E}}}
\def\mI{{\bm{I}}}
\def\mK{{\bm{K}}}
\def\mM{{\bm{M}}}
\def\mO{{\bm{O}}}
\def\mQ{{\bm{Q}}}
\def\mU{{\bm{U}}}
\def\mV{{\bm{V}}}
\def\mW{{\bm{W}}}
\def\mX{{\bm{X}}}
\def\mY{{\bm{Y}}}
\DeclareMathAlphabet{\mathsfit}{\encodingdefault}{\sfdefault}{m}{sl}
\SetMathAlphabet{\mathsfit}{bold}{\encodingdefault}{\sfdefault}{bx}{n}
\newcommand{\E}{\mathop{\mathbb{E}}}
\newcommand{\R}{\mathbb{R}}
\DeclareMathOperator*{\argmax}{arg\,max}
\DeclareMathOperator*{\argmin}{arg\,min}
\DeclareMathOperator{\sign}{sign}
\title{On the Benefits of Rank in Attention Layers}
\author[1]{Noah Amsel}
\author[2]{Gilad Yehudai}
\author[1,2,3]{Joan Bruna}
\affil[1]{Courant Institute of Mathematical Sciences, New York University}
\affil[2]{Center for Data Science, New York University}
\affil[3]{Flatiron Institute}
\begin{document}

\maketitle

\begin{abstract}
Attention-based mechanisms are widely used in machine learning, most prominently in transformers.
However, hyperparameters such as the rank of the attention matrices and the number of heads are scaled nearly the same way in all realizations of this architecture, without theoretical justification.
In this work we show that there are dramatic trade-offs between the rank and number of heads of the attention mechanism.
Specifically, we present a simple and natural target function that can be represented using a single full-rank attention head for any context length, but that cannot be approximated by low-rank attention unless the number of heads is exponential in the embedding dimension, even for short context lengths.
Moreover, we prove that, for short context lengths, adding depth allows the target to be approximated by low-rank attention. For long contexts, we conjecture that full-rank attention is necessary.
Finally, we present experiments with off-the-shelf transformers that validate our theoretical findings.

\end{abstract}

{\small 
\setcounter{tocdepth}{1}
\tableofcontents
}
\section{Introduction}
Attention-based architectures are ubiquitous in contemporary machine learning.
The most prominent examples are transformers, which are constructed by stacking several layers of attention with MLPs, residual connections, and normalization layers to represent functions on sequences or sets.
This basic skeleton leaves the user free to set several hyperparameters, although few of these have been carefully studied.
In fact, in the thousands of papers that use this architecture, many hyperparameters are kept the same or nearly the same as in the original paper \cite{NIPS2017_3f5ee243} (see \cref{sec:hyperparams} for a comparison). In this paper, we study the importance of the rank of the attention mechanism.

An attention layer is a map between sequences of vectors in $\R^d$.
The size of an attention layer is determined by the number of heads ($H$) and the rank of the query and key weight matrices ($r$), so that the total number of parameters is of order $d H r$. 
Notably, nearly every transformer architecture sets the number of heads to be $H = d / r$, and the few exceptions of which we are aware differ by a factor of 2 at most (see \cref{sec:hyperparams}).
In fact, this scaling is so standard that it is hard-coded into libraries like PyTorch \cite{paszke2019pytorch} and xFormers \cite{xFormers2022}, a fact which has probably discouraged experimentation with other scalings.
The original motivation for this scaling is to match the parameter count of a single full rank head, i.e. the case  $H=1, r=d$.
We know of no \emph{a priori} reason or experimental evidence that favors this scaling over any other, as the trade-offs between the rank and the number of heads are still not well-understood.
For example, most transformers in the literature use a small rank of between $64$ and $128$, despite the embedding dimension $d$ varying dramatically (e.g. $d=512$ in the original transformers paper \cite{NIPS2017_3f5ee243} and $d=8192$ in LLaMA \cite{touvron2023llama}).
It is not clear whether the expressive power of transformers is weakened by maintaining a fixed rank as the dimension is increased.

A long line of work in the theory of deep learning has studied the relative importance of width and depth in determining the expressive power of feedforward neural networks, as a first necessary step towards understanding the practical tradeoffs (that also include optimization aspects). 
This paper is analogous in that we study parameter trade-offs in transformers through the lens of expressive power, although transformers have more hyperparameters than just width and depth (see \cref{sec:hyperparams}).
For feedforward networks, depth $2$ suffices for universal approximation \cite{cybenko1989approximation}, but greater depth may be required for \emph{efficient} approximation.
That is, some functions can be efficiently represented by a three layer network but cannot be represented by a two layer network unless it is exponentially wide in the input dimension \cite{eldan2016power,daniely2017depth,safran2017depth}).
It is natural to ask a similar question about attention architectures. How should we set the hyperparameters to make our transformers efficient? In particular, is low-rank attention fundamentally weaker than high-rank attention, or is the expressive power driven solely by the parameter product $Hr$, acting as the analog of the width of an MLP layer?

In this paper, we study precisely these fine-grained trade-offs in the expressive capacity of attention layers. We present a simple target function arising naturally in semantic search that can be approximated up to any accuracy by a single full rank attention head regardless of the context length. On the other hand, approximating this target with low-rank attention requires the number of heads to be super-polynomial in the input dimension, even for short context lengths. Specifically, using full-rank heads the required total number of parameters is $dHr \simeq d^2$, while it becomes $\simeq d^{1+\epsilon^{-1}}$ if one uses low-rank heads instead, to reach relative accuracy $\epsilon$. 
Increasing the depth allows for better approximation using only polynomially many heads, at least for short context lengths. We complement these theoretical results with experiments on off-the-shelf transformer architectures. 
Our results demonstrate a very stark trade-off between the rank and number of heads in attention mechanisms and shed a new light on the standard scaling $H=d/r$ used in transformers. 

\subsection{Our Contributions}
\begin{itemize}
\item In \cref{sec:invariant}, we prove a rank separation for representing the nearest neighbor function using multi-head attention.
This function can be approximated to any accuracy using only a single full-rank head.
Yet in the high-dimensional regime, at least $\Omega\left((d/r)^{1/\epsilon}\right)$ heads of rank $r$ are required to achieve relative squared error $\epsilon$. 
Moreover, in the high-accuracy regime ($\epsilon$ going to zero with $d$ fixed), the required number of heads is exponential: $\Omega(\exp( d - r \log(d/r)))$. 
\item In \cref{sec:biased}, we use different techniques to establish exponential separation in the high-accuracy regime for the \emph{biased} nearest neighbor function.
This target function can be approximated up to any accuracy using single full-rank head with the addition of a bias, but $\Omega(\exp(d-r))$ rank-$r$ heads are required to approximate it with better than $O(1/d^4)$ relative squared error. 

\item In \cref{sec:majority}, we explore ways to circumvent the weakness of low-rank attention. We show that augmenting the attention architecture and adding a second, non-linear layer can achieve this using polynomially many heads, but unlike full-rank attention, such constructions may not scale to long sequence lengths.

\item In \cref{sec:experiments}, we support our theoretical results with experiments on standard transformer architectures with multiple layers of attention and MLPs.
We show that the full rank models easily learn the target to high accuracy --- even recovering our main construction --- but the low rank models struggle to do so.
Users of standard transformers may not think that setting $H=2$ could be much worse than $H=1$, but in this case, it is.
\end{itemize}

\section{Related Work}

\paragraph{Theory of transformers}
A growing line of work has sought to provide theoretical analysis of transformers and the attention mechanism.
Training dynamics, inductive biases, generalization, and in-context learning have all received significant attention.
However, papers in these areas nearly always assume that full-rank attention is used \cite{NEURIPS2023_0561738a,
cabannes2024scaling,
fu2023what,
sanford2024transformers,
edelman2022inductive,
NEURIPS2023_b2e63e36,
JMLR:v25:23-1042,
jelassi2024repeat,
chen2024training,
deora2023optimization,
tian2023scan}, even though many also assume there are multiple heads.
Our work provides important context for these results, showing that full-rank models may not be good proxies for the low-rank transformers used in practice.

\paragraph{Expressive power of transformers}
Our work belongs to a body of research studying the representational capacity of transformers.
Unlike other topics in transformer theory, results in this area often do apply to low-rank attention.
\cite{yun2019transformers} proves that (exponentially deep) transformers are universal approximators even with rank one.
\cite{wei2022statistically,merrill2023expresssive} show that transformers can simulate Turing machines if their size is allowed to grow with the sequence length.
\cite{kim2022provable,kajitsuka2023transformers} show that transformers are capable of memorizing data.
\cite{bhattamishra2024separationsrepresentationalcapabilitiestransformers} shows that transformers can efficiently implement a version of the nearest neighbor algorithm for in-context classification of points on the sphere, but their construction uses attention that is full-rank with respect to the input dimension.
Our formulation of the nearest neighbor task is slightly different and can be solved with full-rank attention almost trivially (see \cref{fact:invariant_upper}).
Finally, an important line of work analyzes the representational capacity of transformers using classes of formal languages, finite automata, and circuits
\cite{hahn2020theoretical,liu2022transformers,hao2022formal,merrill2022saturated,10.1162/tacl_a_00663},
but it does not capture separations in capacity due to rank.

\paragraph{Limitations of low-rank attention}
Several other studies have investigated the role of the rank of the attention mechanism.
\cite{bhojanapalli2020low} presents experiments that challenge the canonical $H = d/r$ scaling.
They argue that fixing $d$ and $r$ based on the context length $N$ and setting $H$ independently leads to more powerful and efficient models.
They also prove that a full-rank attention head can produce any attention pattern from any input (for \emph{some} setting of the weights), but a low-rank attention head cannot;
however, \cite{Likhosherstov_Choromanski_Weller_2023}
shows that even rank $r = \log(N)$ suffices to represent any \emph{sparse} attention pattern.
\cite{mahdavi2024memorization} asks how many input-output pairs a low-rank multi-head attention layer can exactly memorize.
For their problem, it is not worth setting $r > N$; furthermore the memorization capacity depends on $rH$ rather than on $r$ or $H$, supporting the standard scaling.
We study the more realistic and practically motivated setting of approximating a natural function over data drawn from a natural distribution.
Unlike \cite{Likhosherstov_Choromanski_Weller_2023,mahdavi2024memorization}, we show that high rank is sometimes essential, irrespective of $H$.

The paper closest to our own is \cite{sanford2024representational}, which proves two separations related to rank.
First, they present a function that can be well-approximated by a single attention head if and only if its rank is sufficiently large.
This result prompts the following question: can using multiple heads compensate for the weakness of low-rank attention? We answer this question in the negative.
Second, they present a one-dimensional function on $N$ inputs that is impossible to represent exactly unless $rHp > N$, where $p$ is the bits of precision.
We extend this result in that our lower bounds apply (1) even for $N=2$, (2) for infinite or finite precision (3) to function approximation over a natural distribution, not just exact representation.
Additionally, our target function engenders a stronger separation:
while $H \geq \Omega(1/r)$ suffices in their setting, ours requires $H$ to grow polynomially or even exponentially in $d/r$ to overcome the weakness of low-rank attention.
However, their target functions are more closely akin to the kinds of structured reasoning tasks to which transformers are often applied.
In particular, they highlight how attention is naturally suited to capturing pairwise interactions; recurrent architectures struggle to do this efficiently, while transformers struggle to capture third-order interactions.

\paragraph{Low rank compression and fine-tuning}
Much recent work in model compression \cite{lv-etal-2023-lightformer,hajimolahoseini2021compressing,ben-noach-goldberg-2020-compressing} and fine-tuning \cite{hu2022lora} is based on the empirical observation that the weight matrices of pretrained transformers (like those of other neural networks) can be replaced or fine-tuned by lower-dimensional proxies without sacrificing performance, and in some cases even helping it \cite{sharma2024the}.
Such results contextualize our work by showing that full-rank is not \emph{always} better than low-rank.

\paragraph{Depth-width trade-offs in neural networks}
Many previous works studied separation between neural networks of different depths, and between neural networks and kernel methods. \cite{eldan2016power,daniely2017depth,safran2017depth,venturi2022depth} constructed functions that can be approximated efficiently with a $3$-layer neural network, but for which $2$-layer networks require the width to be exponential in the input dimension. \cite{pmlr-v49-telgarsky16, chatziafratis2019depth} show depth separation for networks with constant input dimension and varying depths.
Our lower bounds are also closely related technically to separation results between neural networks and kernel methods. \cite{yehudai2019power} prove that random features (or any other kernel method) cannot learn even a single neuron unless the number of features or magnitude of the weights is exponential in the input dimension.
\cite{kamath2020approximate} improved on their result by removing the dependence on the magnitude of the weights.
\cite{ghorbani2021linearized,misiakiewicz2023six} study upper and lower bounds in approximating polynomials with kernel methods.
They show that essentially, it is necessary and sufficient for the number of features to be exponential in the degree of the approximated polynomial.
Our lower bounds are inspired by this work. 

\section{Setting and Notations}
\paragraph{Attention layers.}
A rank-$r$ attention head is parameterized by the weight matrices $\mQ, \mK, \mV, \mO \in \R^{d \times r}$.
(Some authors call these $\mW_Q, \mW_K, \mW_V$, and $\mW_O$.)
A multi-head attention layer is simply the sum of $H$ such attention heads.
The input to a multi-head attention layer is a sequence of vectors $\vx_1, \ldots \vx_N \in \R^d$ called the target points and a sequence $\vy_1 \ldots \vy_M$ called the source points.
(Note that the name ``target points'' is unrelated to that of the ``target function'' we wish to approximate.)
If the columns of $\mX \in \R^{N \times d}$ and $\mY \in \R^{M \times d}$ are the target and source points, respectively, then a softmax multi-head attention layer is a function of the form
\begin{equation} \sum_{h = 1}^H \mO_h\mV_h^\top \mX \sm\left(\mX^\top \mK_h \mQ_h^\top \mY \right) \in \R^{M \times d}~, \end{equation}
where $\sm(\cdot)$ computes the softmax of each column of its input; that is, for each $\vy$, it outputs a probability distribution over $[N]$ based on the scores $\mX^\top \mK_h \mQ_h^\top \vy \in \R^{N}$.
A hardmax attention layer is the same, except that the hardmax function $\hardmax(\cdot)$ outputs $\ve_{i^*}$, where $i^*$ is the index of the maximum score.
Note that hardmax heads are often considered to be a special case of softmax heads, since $\lim_{c \to \infty} \sm(\mX^\top c \mK_h \mQ_h^\top \mY) = \hardmax(\mX^\top \mK_h \mQ_h^\top \mY)$ in pointwise convergence.

Above, we have described so-called cross-attention, which takes both source points and target points as input.
The familiar self-attention layers are a special case in which the source points and target points are identical: $\mX = \mY$.
A given multi-head attention function can be applied to any number of source or target points, since no part of this definition depends on $N$ or $M$.
In addition, it is invariant to permutations of the target points and equivariant to permutations of the source points.

\paragraph{Generalized attention}
We prove our lower bounds against a class of functions that generalizes multi-head attention.
Rather than computing the attention distribution as $\sm(\mX^\top \mK_h \mQ_h \mY)$, we allow any function depending on $\vy$ and a rank-$r$ projection of $\mX$ that outputs a probability distribution over $[N]$.
In addition, we replace $\mO_h \mV_h$ with a single matrix $\mV_h \in \R^{d \times d}$.
Thus, our model is
\begin{equation} \sum_{h = 1}^H \mV_h \mX \phi_h\left(\mK_h^\top \mX, \mY \right) ~,\end{equation}
where $\mK_h \in \R^{d \times r}$, the function $\phi_h : \R^{r \times N} \times \R^d \to \Delta^N$ is applied column-wise to $\mY$ and $\Delta^N$ is the simplex.
Note that the function $\phi_h$ may vary between heads.
Moreover, we allow $\mV_h \in \R^{d \times d}$ to be full-rank.
Note that this class captures, beyond standard transformer architectures, the use of biases, additive positional encodings, and other encoding schemes like RoPE \cite{su2024roformer} and ALiBi \cite{press2022train} in the attention layer.
We also capture architectures from early works on attention \cite{Bahdanau, pmlr-v37-xuc15}, which used feedforward networks to compute the attention scores instead of the ``multiplicative'' or ``dot product'' attention scores $\mX^\top \mK \mQ \mY$ used in transformers.

\paragraph{Nearest neighbor function}
The input to the nearest neighbor function consists of a sequence of $N$ target points $\vx_1, \ldots, \vx_N \in \sphere$ (also denoted by $\mX \in \R^{d \times N}$) and a source point $\vy \in \sphere$. 

The nearest neighbor function outputs the target point that is closest to the source:
\begin{equation}
\label{eq:target}
    f(\vx_1, \ldots, \vx_N; \vy) := \argmin_{\vx \in \{\vx_1, \ldots \vx_N\}} \|\vx - \vy\|_2~.
\end{equation}
This function is analogous to performing a semantic search, in which the goal is to retrieve the entry or word in a database or context window that most closely matches a query.
This function is highly symmetric.
Like multi-head attention itself, it is defined for any $N$ and is invariant to permutations of the target points.
It is also invariant to simultaneous orthogonal transformations of $\mX$ and $\vy$, so it has no principal directions, subspaces, or scales.

\paragraph{Data distribution} We draw target and source points uniformly from the sphere.
For our lower bounds, it is convenient to assume that the target points are orthogonal.
For $N \leq d$, let $\mathcal D_N(\sphere)$ denote the uniform distribution over the set of sequences $\vx_1, \ldots \vx_N \in \sphere$ for which $i \neq j \implies \vx_i \perp \vx_j$.
Such samples can be generated by taking the first $N$ columns of a random orthonormal matrix.
Note that this is similar in essence to drawing the data points independently from the unit sphere, as isotropic random vectors in high dimension are nearly orthogonal.
This distribution is invariant to orthogonal transformations of $\mX$ and of $\vy$.

\section{Low-Rank Separation for Nearest Neighbors}
\label{sec:invariant}

In this section, we study the capacity of multi-head attention to represent the nearest-neighbor function.
We show a separation in representational power based on rank.
The target can be represented efficiently using full-rank attention, but under the assumptions below, approximating it using low-rank attention requires a much larger model. We begin with the upper bound using a single full-rank attention head:
\begin{fact}[Full-rank Efficient Approximation, Equivariant Case]
\label{fact:invariant_upper}
For the target function from \cref{eq:target}, any $\epsilon > 0$, $N,d\in\mathbb{N}$ there exist $\mK,\mQ,\mV\in\reals^{d\times d}$ such that:
\begin{equation}
\E_{\vy,\vx_1,\dots,\vx_N\sim\unif(\sd)}\left[\left\|f(\mX,\vy) - \mV \mX\sm(\mX^\top\mK\mQ^\top \vy)\right\|^2\right] \leq \epsilon~.
\end{equation}
\end{fact}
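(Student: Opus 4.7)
Since each $\vx_i$ and $\vy$ lie on the unit sphere, $\|\vx_i - \vy\|_2^2 = 2 - 2\vx_i^\top \vy$, and so the nearest neighbor is the $\vx_i$ maximizing $\vx_i^\top \vy$. The task thus reduces to implementing a hardmax-weighted average of the target points using a softmax head. The natural construction is to set $\mV = \mI$ and $\mK = \mQ = \sqrt{c}\,\mI$ for a scalar $c > 0$ to be chosen, producing the attention output $\mX \sm(c\,\mX^\top \vy)$. This choice is orthogonally invariant, matching the symmetries of both the target function and the data distribution.

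\textbf{Pointwise error bound.} Fix an input $(\mX, \vy)$ for which the maximum inner product is attained uniquely by some index $i^*$, and let $\Delta := \vx_{i^*}^\top \vy - \max_{i \neq i^*} \vx_i^\top \vy > 0$. A direct calculation on the softmax weights $\vp := \sm(c\,\mX^\top \vy)$ yields $1 - p_{i^*} \leq (N-1)\,e^{-c\Delta}$ and $p_j \leq e^{-c\Delta}$ for $j \neq i^*$, so $\|\vp - \ve_{i^*}\|_1 \leq 2(N-1)\,e^{-c\Delta}$. Since each column of $\mX$ has unit norm, the triangle inequality gives
\begin{equation}
    \left\| \mX \vp - \vx_{i^*} \right\|_2 \;\leq\; \|\vp - \ve_{i^*}\|_1 \;\leq\; 2(N-1)\,e^{-c\Delta}~,
\end{equation}
so the pointwise squared error decays exponentially in $c\Delta$.

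\textbf{Passing to expectation.} The joint distribution of $(\vx_1, \ldots, \vx_N, \vy)$ is continuous, so the gap $\Delta$ is strictly positive almost surely and the pointwise error vanishes as $c \to \infty$ on a set of full measure. Moreover, $\mX\vp$ is a convex combination of unit vectors, so the squared error $\|\mX\vp - \vx_{i^*}\|^2$ is uniformly bounded by $4$. By the dominated convergence theorem, $\lim_{c \to \infty} \E \|\mX\vp - f(\mX,\vy)\|^2 = 0$, so we may choose $c$ large enough to make this expectation at most $\epsilon$. The only subtlety is that $\Delta$ can be arbitrarily small with positive probability, which is what makes the softmax approximation error non-negligible on a small event; dominated convergence neatly sidesteps the need for precise anti-concentration estimates. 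To obtain an explicit quantitative bound on $c$ in terms of $(\epsilon, N, d)$, one would instead split the expectation at some threshold $\delta$ and bound $\Pr[\Delta < \delta]$ using the fact that differences of uniform-on-sphere inner products have densities bounded near zero.
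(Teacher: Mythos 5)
Your construction ($\mV=\mI$, $\mK\mQ^\top = c\,\mI$) and the softmax-to-hardmax limit argument are identical to the paper's proof of \cref{fact:invariant_upper}; the only difference is how you close. The paper partitions on a measurable ``small-gap'' bad set $A_\delta^c$ of probability $\leq\epsilon/2$ and appeals to uniform convergence on $A_\delta$, while you derive the explicit pointwise bound $\|\mX\vp - \vx_{i^*}\|_2 \leq 2(N-1)e^{-c\Delta}$ and then invoke dominated convergence --- the same argument packaged slightly more quantitatively on the pointwise step.
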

The construction is straightforward. Consider for simplicity the hardmax case. Set $\mV = \mK\mQ^\top = \mI$ so that $\|\vx_i - \vy\|_2 = 2 - \vx_i^\top\mK\mQ^\top\vy$.
Then $\hardmax(\mX^\top \mK\mQ^\top \vy) = \ve_{i^*}$ where $i^* = \argmin_{i \in [N]} \|\vx_i - \vy\|_2$ and $\ve_i$ is the $i$th standard basis vector.
Note that this construction using hardmax works for any input distribution on $\sphere$ and any number of points $N$, as it represents the target function exactly.
The softmax case is similar; for the formal statement see appendix \cref{appen:proof of upper invariant}.
This construction (or one very similar to it) is easily learned by gradient descent; see \cref{fig:perfect}.

We now turn to the lower bound.
We show that approximating the target function with rank-$r$ heads requires the number of heads to be large unless $r \sim d$.
For technical convenience, we set the number of target points to two and draw them from the distribution $\mathcal D_2(\sphere)$ in which they are always orthogonal.
Our main result establishes a strong quantitative separation 
between full-rank and low-rank self-attention layer, even when the total number of parameters is of the same order:
\begin{restatable}[Low-Rank Approximation Lower Bounds, Equivariant Case]{theorem}{invariantLower}
\label{thm:invariant_lower}
There exist universal constants $c, c', C$ and $C'$ such that if either of the following sets of assumptions hold:
\begin{enumerate}
\item \emph{High-accuracy regime}: $r \leq d-3$, $\epsilon \leq \frac{c}{d+1}$, and
\begin{equation}H \leq C \cdot 2^{d - (r+1)\log_2\left(2d/r\right)}~.\end{equation}
\item \emph{High-dimensional regime:} $d \geq 5$, $\epsilon \geq  \frac{c'}{d - 2e^2 \cdot r}$ and
\begin{equation} 
H \leq \frac12 \left(\frac1{2e} \cdot \frac{d}{r+C'/\epsilon}\right)^{C'/\epsilon}~. \end{equation}
\end{enumerate}
Then, for any choice of $H$ rank-$r$ generalized attention heads $\phi_h : \R^{r \times 2} \to \Delta^1, \mV_h \in \R^{d \times d}, \mK_h \in \R^{d \times r}$ the error of approximating the nearest neighbor function is bounded as follows
\begin{equation} 
\E_{\substack{\vx_1, \vx_2 \sim \mathcal{D}_2(\sphere)\\ \vy \sim \unif(\sphere)}} \left\|f(\mX; \vy) - \sum_{h=1}^H \mV_h \mX \phi_h\left(\mK_h^\top \mX, \vy\right)\right\|_2^2 \geq \epsilon~, \end{equation}
where $f$ is defined as in \cref{eq:target}.
\end{restatable}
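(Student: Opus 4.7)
The plan is to exploit the $O(d)$ rotational symmetry of the problem to reduce to a scalar discrimination question, and then obstruct low-rank approximation by a polynomial / spherical-harmonic argument in the spirit of \cite{ghorbani2021linearized,misiakiewicz2023six}.

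First I would simplify the target. Since $\vx_1 \perp \vx_2$ are unit vectors, $f(\mX,\vy) = \vx_{i^*}$ where $i^* = \argmax_i \langle \vx_i, \vy\rangle$, and writing $\alpha_i = \langle \vx_i, \vy\rangle$,
\begin{equation*}
f(\mX, \vy) = \tfrac12(\vx_1 + \vx_2) + \tfrac12(\vx_1 - \vx_2)\,\sign(\alpha_1 - \alpha_2).
\end{equation*}
Each head contributes $p_h \mV_h \vx_1 + (1-p_h)\mV_h \vx_2$ with a scalar $p_h = \phi_h(\mK_h^\top \mX, \vy)_1 \in [0,1]$, so only $p_h$ can encode the $\vx_1 \leftrightarrow \vx_2$ asymmetry. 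This pins the obstruction to showing that a sum of $H$ such scalar-valued heads cannot approximate $\tfrac12 \sign(\alpha_1 - \alpha_2)$ in the direction $\vx_1 - \vx_2$ to accuracy $\epsilon$.

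Next I would exploit symmetry by conditioning on $(\vx_1,\vx_2)$ and noting that, because the same $\mK_h,\mV_h,\phi_h$ must work simultaneously for every $(\vx_1,\vx_2)$ drawn from $\mathcal{D}_2(\sphere)$, the map $(\vx_1,\vx_2,\vy)\mapsto p_h$ factors as $\tilde\phi_h(\mK_h^\top\vx_1,\mK_h^\top\vx_2,\vy)$. Equivalently, it is invariant under the action of the subgroup $O(d-r)$ fixing $\mathrm{im}(\mK_h)$ pointwise (acting only on $\vx_1,\vx_2$). The target $f$ is \emph{not} invariant under this action, which is the source of the separation. Averaging the error bound against this subgroup (Jensen), each head's contribution to $(\vx_1,\vx_2)$-harmonic components at degree $k$ lives in the $SO(d-r)$-invariant zonal slice, of dimension $\sim\binom{r+k-1}{k}$ rather than the full $\sim\binom{d+k-1}{k}$.

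The core step is then a quantitative Gegenbauer / spherical-harmonic expansion on $(\vx_1,\vx_2,\vy)$. One shows that $\sign(\alpha_1 - \alpha_2)$ has substantial Fourier mass at every odd degree, with tail beyond degree $k$ decaying only polynomially in $k$. Comparing this target spectrum to the total capacity of $H$ rank-$r$ heads --- which is bounded by $H$ times the zonal dimension above at each relevant degree --- forces $H$ to exceed the ratio of target mass to per-head capacity. In the high-accuracy regime I would push the truncation degree all the way to $k\simeq d$ and use Stirling on $\binom{r+k}{k}/\binom{d+k}{k}$, together with a covering argument over rank-$r$ subspaces of $\R^d$, to produce the $2^{d-(r+1)\log_2(2d/r)}$ bound. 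In the high-dimensional regime I would truncate at $k=\Theta(1/\epsilon)$ and use $\binom{r+k}{k}\lesssim (e(r+k)/k)^k$ to obtain the $(d/r)^{\Omega(1/\epsilon)}$ bound.

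The main obstacle will be the full generality of $\phi_h$: it is an arbitrary simplex-valued function of $(\mK_h^\top \mX,\vy)$, not a softmax or a polynomial, so it admits no direct basis expansion. The device I expect to use is conditioning on $(\mK_h^\top\vx_1,\mK_h^\top\vx_2,\mK_h^\top\vy)$ and integrating $\vy$ and the $\vx_i$'s over the $(d-r)$-dimensional orthogonal complement of $\mathrm{im}(\mK_h)$; on this slice $\phi_h$ is frozen while $\sign(\alpha_1-\alpha_2)$ retains nontrivial conditional variance coming from the invisible coordinates. Quantifying this residual variance against the best fixed linear combination of $\mV_h\vx_1,\mV_h\vx_2$, and then summing cleanly over $h$ without losing a factor of $H$, is the delicate step; the tightness of this conditional-variance estimate is exactly what controls the exponents in both regimes of the theorem.
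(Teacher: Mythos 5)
Your high-level plan is well aligned with the paper's proof: both exploit the rotational symmetry to reduce to a scalar sign target, both expand in spherical harmonics, and both use the zonal dimension count $\binom{r+k}{k}$ against the full $N(d,k)$ to show a per-head capacity deficit, with a Stirling analysis at truncation level $k \asymp d$ (high-accuracy) or $k \asymp 1/\epsilon$ (high-dimensional). Two points where your proposal diverges from the paper's actual execution, the second of which is a genuine gap:

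First, a simplification you missed: the paper makes the change of variables $\vx = (\vx_1-\vx_2)/\sqrt{2}$, $\vw = (\vx_1+\vx_2)/\sqrt{2}$ (which has the same joint law as $(\vx_1,\vx_2)$ under $\mathcal D_2(\sphere)$), projects the error onto $\vx$, and shows the $\vw$-dependent cross term has zero correlation with the target. After this the problem lives entirely on $(\vx,\vy) \in \sphere \times \sphere$, with surrogate target $\sign(\vx^\top\vy)$ and surrogate head $\vx^\top\mV_h\vx\cdot\tilde\phi_h(\mK_h^\top\vx,\vy)$. Your parametrization stays on $(\vx_1,\vx_2,\vy)$, which is workable but heavier. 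Note also that the coefficient $\vx^\top\mV_h\vx$ depends on $\mV_h$ and is not constant; the paper needs a separate step (averaging over the orthogonal group fixing $\mathrm{col}(\mK_h)$, cf.\ their Lemma handling $\mathbb E[\mQ^\top\mV\mQ]$) to show this factor effectively only sees a rank-$(r+1)$ compression of $\mV_h$. Your proposal treats the head as purely $p_h$-valued and does not address the $\mV_h$-dependent prefactor.

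Second, and more centrally, you flag ``summing cleanly over $h$ without losing a factor of $H$'' as the delicate step, but the device you sketch --- per-head conditioning on $(\mK_h^\top\vx_1,\mK_h^\top\vx_2,\mK_h^\top\vy)$ and integrating out the orthogonal complement of $\mathrm{im}(\mK_h)$ --- does not combine across heads, because different heads use different subspaces and you cannot condition on all of them simultaneously. This is exactly where an Eldan--Shamir style single-ridge-function argument breaks down for multi-head attention. The paper resolves it by constructing a fixed orthogonal basis $\{\Tcal(Y_\ell\otimes Y_\ell)\}$ of $L^2(\sphere\times\sphere)$ built from a kernel feature map (rank-1 hardmax heads), in which the target has uniform mass $\eta_\ell^2/N(d,\ell)$ per degree-$\ell$ element, and then shows each head $h$ is orthogonal to a subspace $\Acal_\ell^h \subseteq \Fcal_\ell$ of codimension at most $M(r,\ell)\leq\binom{r+\ell}{\ell}$. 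Taking the \emph{intersection} $\Acal_\ell=\cap_h\Acal_\ell^h$ gives codimension at most $H\cdot M(r,\ell)$, and the unrepresentable mass is then $\sum_\ell(1-H\,M(r,\ell)/N(d,\ell))\,\eta_\ell^2$. This linear-algebraic union bound in a fixed basis is the missing ingredient in your outline; the ``covering argument over rank-$r$ subspaces'' you propose for the high-accuracy regime is not needed and does not obviously lead anywhere, since the continuum of rank-$r$ subspaces has no natural finite cover that preserves the harmonic structure.
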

For the proof of \cref{thm:invariant_lower}, see \cref{sec:proof_invariant_lower}.
Intuitively, the approximation problem becomes harder as $d \to \infty$ and as $\epsilon \to 0$.
\cref{thm:invariant_lower} combines guarantees in two different regimes. In the first regime, the desired accuracy $\epsilon$ is small.
In this case, the necessary number of heads grows exponentially with $d-r$.
In the second regime, the dimension $d$ is be large.
In this case, the necessary number of heads grows polynomially with $d/r$.
Informally, both regimes show that the error is at least $\epsilon$ whenever $H \lesssim (d/r)^{1/\epsilon}$.

We emphasize that the data distribution is $\frac1{\sqrt{d}}$-close to the uniform product measure in Wasserstein distance, and we expect our main proof techniques 
to generalise to this uniform measure, as well as other rotationally invariant distributions. Additionally, while $N=2$ is sufficient for our purposes to establish the separation, we also believe the framework should extend to the general setting of $N>2$, although this is out of the present scope.   

Our proof uses tools from harmonic analysis on the sphere.
It is reminiscent of the original depth separation work of Eldan and Shamir and Daniely \cite{eldan2016power,daniely2017depth}, which also exploited the inability of ridge functions to approximate radially-symmetric targets with substantial high-frequency energy.
Due to the rotational symmetry of the target function, attention function, and data distribution, we can transform our problem to depend on a pair of points $\vx = \vx_1 - \vx_2$ and $\vy$ drawn uniformly from the sphere, rather than $\vx_1, \vx_2$ and $\vy$.
Our target is essentially given by a step function of the form $(\vx, \vy) \mapsto \sgn(\vx^\top \vy)$, which has a slowly decaying spectrum with respect to the appropriate basis.
We construct this basis using spherical harmonics, and like them, our basis functions are organized into orthogonal subspaces based on degree $\ell$ polynomials.
Due to rotational symmetry, the energy of the target function is uniformly spread within each harmonic subspace.
In contrast, each attention head is tied to a few principal directions given by the span of $\mK_h$.
As a result, each head is spanned by only a fraction of the basis functions in each subspace.
Thus, with a limited number of heads, it is impossible to capture a substantial fraction of the energy of the target function.

We now comment on the tightness of this lower bound, focusing on the canonical setting of $r=1$.
In this case, our lower bound simplifies and strengthens slightly.
For fixed $\epsilon$ and large $d$, the error of approximation is at least $\epsilon$ whenever $H = O\left(d^{1/(4\epsilon)}\right)$.
We can construct an upper bound for our problem by considering rank-1 heads to be random features.
In \Cref{sec:random_features_proof}, we argue that we can approximate our target function in the RKHS associated with the feature map $(\vx_1-\vx_2, \vy) \mapsto \sgn\left((\vx_1-\vx_2)^\top \vk \vq^\top \vy\right)$, where $\vk$ and $\vq$ are drawn uniformly from the unit sphere.
The associated kernel integral operator diagonalizes in the same basis of tensorized spherical harmonics used to decompose the target function above, and thus the kernel ridge regression approximation can be explicitly analysed by bounding the spectral decay of the kernel. Then, via standard arguments from random feature expansions \cite{bach2017equivalence},
one can transfer the approximation guarantees from the RKHS to the random feature model, provided that $H = \widetilde{\Omega}(d^{2/\epsilon^2})$.
Thus, for $r=1$ and fixed $\epsilon$, the approximation lower bound of \Cref{thm:invariant_lower} captures the qualitatively correct behavior, though its precise dependence on $d$ may not be tight.

\section{Exponential Separation for Biased Nearest Neighbors}
\label{sec:biased}
In this section, we show another way to get exponential separation in the high-accuracy regime using different techniques and a modified target function.
Given $\vb = [b_1, \ldots b_N]^\top$, the biased nearest neighbor function is defined as follows:
\begin{equation}
\label{eq:biased_target}
    f_{\vb}(\vx_1, \ldots, \vx_N; \vy) = \argmin_{\vx_i \in \{\vx_1,\dots,\vx_N\}} \Big[\|\vx_i - \vy\|_2^2 + b_i\Big] ~.
\end{equation}

Like the unbiased nearest neighbor function of \cref{eq:target}, it is invariant to simultaneous orthogonal transformations of $\mX$ and $\vy$; however, it is not invariant to permutations of the target point $\mX$. 
We first show that a single full-rank attention head can approximate this target exactly, provided that biases are added to the architecture:
\begin{fact}[Full Rank Efficient Approximation, Biased Case]
\label{fact:biased_upper bound}
For any dimension $d$, number of points $N$, and bias $\vb \in \R^{N}$, a single biased full-rank hardmax attention head can exactly represent the biased nearest neighbor function defined in \cref{eq:biased_target}.
\end{fact}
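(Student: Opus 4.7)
The plan is to mimic the construction behind \cref{fact:invariant_upper} and simply absorb the per-position bias into the attention scores. The key identity is that for $\vx_i, \vy \in \mathbb{S}^{d-1}$ we have $\|\vx_i - \vy\|_2^2 = 2 - 2\vx_i^\top \vy$, so
\begin{equation*}
\argmin_{i \in [N]}\big[\|\vx_i - \vy\|_2^2 + b_i\big] \;=\; \argmax_{i \in [N]}\big[\vx_i^\top \vy - b_i/2\big].
\end{equation*}
Hence everything reduces to choosing attention weights whose scores, after adding a position-wise bias, realize the right-hand side.

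Concretely, I would set $\mK = \mQ = \mI_d$ so that the bilinear score for target point $i$ against source $\vy$ is $\vx_i^\top \mK \mQ^\top \vy = \vx_i^\top \vy$, and take the attention bias vector to be $-\vb/2 \in \mathbb{R}^{N}$. The resulting biased score at position $i$ is $\vx_i^\top \vy - b_i/2$, so the hardmax outputs $\ve_{i^*}$ with $i^* = \argmax_i[\vx_i^\top \vy - b_i/2]$, which by the identity above equals the minimizer appearing in \cref{eq:biased_target}. Finally, taking $\mO\mV^\top = \mI_d$, the head's output is $\mO\mV^\top \mX \ve_{i^*} = \vx_{i^*} = f_\vb(\vx_1,\ldots,\vx_N;\vy)$, as desired.

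There is no genuine obstacle; the construction represents the function exactly for every input on the sphere and every bias vector $\vb$. The only point worth noting is that adding a per-position bias to the attention scores is permitted by the architecture class considered here: as remarked in the Setting section, the generalized attention framework explicitly captures additive biases and positional encodings, so a biased full-rank hardmax head is a legitimate instance. The formal write-up is thus a one-paragraph verification of the three identities above.
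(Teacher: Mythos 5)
Your construction is the same as the paper's: set $\mK\mQ^\top=\mI$, $\mO\mV^\top=\mI$, and add a position-wise bias inside the hardmax, using $\|\vx_i-\vy\|_2^2 = 2-2\vx_i^\top\vy$ (valid because the $\vx_i$ all lie on a sphere) to turn the biased $\argmin$ into a biased $\argmax$ of inner products. You are in fact a bit more careful than the paper's informal statement about the sign and scaling of the bias (you correctly use $-\vb/2$ rather than $\vb$), so the write-up is correct and matches the paper's proof.
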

The construction is the same as that of \cref{fact:invariant_upper} with the addition of biases $\vb$ inside the hardmax. That is, the head implements $\mX\hardmax\left(\mX^\top\vy + \vb\right)$ in the hardmax case.
In \cref{appen:proofs from biased} we prove the softmax case.
Note that this architecture is a special case of standard attention with concatenated positional encodings.
Let the positional encoding for $\vx_i$ be the scalar $b_i$, let the positional encoding for $\vy_i$ be $1$, and let $\mK\mQ^\top = \begin{bmatrix}\mI_{d\times d} & \cdot \\ \cdot & 1\end{bmatrix}$.
Then $\begin{bmatrix}\mX^\top & \vb\end{bmatrix} \mK\mQ^\top \begin{bmatrix}\vy \\ 1\end{bmatrix} = \mX^\top \vy + \vb$.

We now present our main result for this section which shows that even for $N=2$, there exists a biased nearest neighbor function that is hard to approximate using low rank attention heads:

\begin{theorem}[Low-rank Approximation Lower Bounds, biased case]
\label{thm:biased_lower_bound}
    There exists $\vb = [b_1,b_2]^\top \in \R^2$ such that for the function $f_{\vb}$ defined in \cref{eq:biased_target} the following holds: For any choice of rank-$r$ heads $g_1,\dots,g_H$ where $g_h = \mV_h \mX \phi_h(\mK_h \mX,\vy)$, $\mK_h$ is rank-$r$ and $\phi_h$ are arbitrary functions that output a vector in the simplex $\Delta^{1}$, if $H\cdot\max_{h}\norm{\mV_{h}} \leq \frac{\exp(c_1 (d-r))}{d^2c_2}$ then:
    \begin{equation}\label{eq:main lower bound}
        \E_{\substack{\vx_1,\vx_2 \sim \mathcal D_2(d^2\sphere) \\ \vy \sim \mathcal N\left(0, \mI\right)}}\left[\norm{f_{\vb}(\vx_1,\vx_2,\vy) - \sum_{h=1}^H g_h(\vx_1,\vx_2,\vy)}_2^2\right] > \frac{1}{20}~,
    \end{equation}
    for some universal constants $c_1,c_2 > 0$. 
\end{theorem}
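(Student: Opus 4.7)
The plan is to choose the bias $\vb=(b_1,b_2)^\top$ so that, for $N=2$, the biased argmin reduces to a halfspace problem ``return $\vx_1$ iff $\vy^\top\vz>\beta$'' with $\vz:=\vx_1-\vx_2$ and $\beta:=(b_1-b_2)/2$, and then to establish a per-head correlation bound that is exponentially small in $d-r$. The target decomposes cleanly as
\begin{equation}
f_{\vb}(\mX,\vy)=\tfrac12(\vx_1+\vx_2)+\tfrac12\,\sign(\vy^\top\vz-\beta)\,\vz,
\end{equation}
and writing $\gamma^h:=\alpha_1^h-\tfrac12\in[-\tfrac12,\tfrac12]$ yields $\hat f=\tfrac12\mM(\vx_1+\vx_2)+\sum_h\gamma^h\mV_h\vz$ with $\mM:=\sum_h\mV_h$. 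The mean component is a linear functional of $\mX$ and could in principle be matched by tuning $\mM\approx\mI$; the entire hardness is concentrated in the sign component. I would pick $\beta$ to be a moderate multiple of $d^2$ so that, conditional on $\vz$, the halfspace has probability bounded away from $0$ and $1$ under $\vy\sim\mathcal N(0,\mI)$.

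The crux is the per-head estimate
\begin{equation}
\bigl|\E\bigl\langle\sign(\vy^\top\vz-\beta)\vz,\ \gamma^h\mV_h\vz\bigr\rangle\bigr|\ \le\ C\,\|\mV_h\|\,d^4\,e^{-c_1(d-r)},
\end{equation}
which I would prove by conditioning on $\mK_h^\top\vx_1,\mK_h^\top\vx_2$ and $\vy$. On this slice $\gamma^h$ is a fixed scalar, the conditional law of $\vz$ is the affine image of the uniform distribution on a sphere of dimension at least $d-r-1$ (after accommodating the orthogonality $\vx_1\perp\vx_2$ via the rotational invariance of $\mathcal D_2$), and $\vy^\top\vz-\beta$ is a fixed linear functional on that sphere. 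Integrating a bilinear form in $\vz$ against $\sign(\vy^\top\vz-\beta)$ reduces to an integral on $S^{d-r-1}$; a L\'evy-type concentration estimate, equivalent to the exponential decay of the spherical-harmonic coefficients of a sign indicator in high dimension, delivers the $e^{-c_1(d-r)}$ attenuation. Summing over $h$ bounds the total sign cross-term by $C\,H\max_h\|\mV_h\|\,d^4\,e^{-c_1(d-r)}$.

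The conclusion follows by expanding $\E\|f_{\vb}-\hat f\|^2=\E\|f_{\vb}\|^2-2\E\langle f_{\vb},\hat f\rangle+\E\|\hat f\|^2$ and splitting the cross term and $\E\|\hat f\|^2$ along the mean/sign decomposition. The sign/sign block is controlled by the per-head estimate above; the mean/mean block is handled via $\E[(\vx_1+\vx_2)(\vx_1+\vx_2)^\top]=2d^3\mI$ together with the operator-norm budget $\|\mM\|\le H\max_h\|\mV_h\|$; the mixed blocks are handled by the symmetry $\vx_1\leftrightarrow\vx_2$, which leaves $\vx_1+\vx_2$ invariant but flips the sign of $\vz$ and of $\gamma^h$. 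Inserting the hypothesis $H\max_h\|\mV_h\|\le e^{c_1(d-r)}/(d^2c_2)$ makes each non-leading contribution $O(1/c_2)$ in the relevant normalization: the factor $d^2$ in the denominator of the hypothesis cancels one of the $d^2$ powers from $\|\vx_i\|=d^2$ inside the sign bound, leaving an absolute constant. Picking $c_2$ sufficiently large then forces the residual above $1/20$.

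The main obstacle is the per-head correlation estimate. The weights $\gamma^h$ are arbitrary bounded measurable functions and $\mV_h$ may be full-rank with any direction, so the exponential decay cannot borrow any regularity from the head: it must come purely from the concentration of a scalar sign statistic on a high-dimensional residual sphere, uniformly over all $\gamma^h$ and all admissible $\mV_h$. Handling the orthogonality constraint on $(\vx_1,\vx_2)$ simultaneously with the Gaussian (rather than spherical) law of $\vy$, and upgrading the concentration from a polynomial to the sharp $e^{-c_1(d-r)}$ rate, is the main technical content of the proof.
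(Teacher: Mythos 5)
Your plan breaks at the step you correctly flag as the ``crux'': the per-head correlation bound
\begin{equation}
\bigl|\E\bigl\langle\sign(\vy^\top\vz-\beta)\vz,\ \gamma^h\mV_h\vz\bigr\rangle\bigr|\ \le\ C\,\|\mV_h\|\,d^4\,e^{-c_1(d-r)}
\end{equation}
is false for a single, fixed threshold $\beta$. Knowing $(\mK_h^\top\vz,\vy)$ reveals the component $\langle\mK_h^\top\vz,\mK_h^+\vy\rangle$ of $\vy^\top\vz$, whose variance is a constant fraction (on the order of $r/d$) of the variance of $\vy^\top\vz$ itself. The best rank-$r$ predictor $\sign(\text{known part}-\beta)$ therefore agrees with $\sign(\vy^\top\vz-\beta)$ with probability $\tfrac12+\Theta(\sqrt{r/d})$, not $\tfrac12+e^{-\Omega(d-r)}$; after projecting onto $\vz$ the correlation is only polynomially small. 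The appeal to ``exponential decay of the spherical-harmonic coefficients of a sign indicator'' also does not hold: those coefficients decay polynomially (as $\ell^{-3/4}$ up to dimension factors), which is exactly what the paper's \cref{lem:eta_decay} establishes and indeed uses in \cref{sec:invariant} as a feature rather than a bug. There is no $\beta$ and no fixed-threshold argument for which the claimed exponential attenuation is available, so the rest of your outline cannot close.

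The paper's proof sidesteps this obstruction by never working with a single threshold. It builds a rapidly oscillating periodic function $\psi_a$ on $[-a,a]$ (for $a=2d^2+1$) as an alternating sum of $2a+1$ thresholds $\mathds 1(\cdot+b_j\ge0)$ for $b_j\in\{-a,\dots,a\}$ (\cref{eq: def of psi}). Periodicity is what produces the exponential decorrelation: \cref{thm:exp bound correlation}, built on the random-feature separation machinery of \cite{yehudai2019power,shamir2018distribution}, shows that $\psi_a(\langle\vw,\vy\rangle)$ has correlation $e^{-\Omega(d-r)}$ with \emph{any} bounded function of the $r$-dimensional projection $\mK^\top\vw$ and $\vy$, uniformly over the head. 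A contradiction/pigeonhole argument then takes over: if every one of the $2a+1$ thresholds could be approximated to error $\epsilon$ by a rank-$r$ attention layer with $H$ heads, their signed combination $\psi_a(\cdot)\vx_1$ could be approximated by $(2a+2)H$ heads to error $O(a^2\epsilon^2)$, which for polynomial $a$ contradicts the exponential lower bound. Since $a$ is only polynomial, some $b^*$ must be hard, and this is the bias $\vb$ in the statement. The parts of your outline that survive are the reduction to the sign component, the need to scale $\|\vx_i\|$ by $d^2$, and the Gaussian vs.\ spherical bookkeeping; what is missing is the periodic-function construction, which is the load-bearing idea.
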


The full proof is deferred to \cref{appen:proofs from biased}. The theorem states that unless the number of attention heads or the magnitude of the output weights (or both) are exponential in $d-r$, then rank-$r$ attention heads cannot approximate the target, even up to a constant accuracy. This is in contrast to the fact that a single full-rank head (with positional encoding) can approximate the target up to any given accuracy.
Note that the exponential separation is very strong in terms of the rank of the attention heads. Namely, having rank $O(d)$ is not enough to break this separation, for example even if $r=\frac{99}{100}\cdot d$ there is still an exponential separation between full rank and rank-$r$ attentions heads for a large enough input dimension $d$.

\begin{remark}[Bound on the weights]
    Note that in contrast to \cref{thm:invariant_lower}, here we have an exponential upper bound on the weights of the linear combination $\mV_h$, namely either the number of heads or the norm of the weights needs to be exponential to break the separation. This bound is also found in \cite{yehudai2019power} which inspires our proof. In \cite{kamath2020approximate} the authors were able to remove this bound by applying a more intricate analysis using SQ-dimension arguments, however in our case it is not clear how to extend their technique because of the dependence on $r$. We conjecture that it is still possible to remove this bound, and leave it for future work.
\end{remark}

\paragraph{Proof intuition.}
The crux of the proof of \cref{thm:biased_lower_bound} is to create a linear combination of many threshold functions which behaves like a periodic function with high frequency.
Our proof is inspired by and extends the proof method of \cite{yehudai2019power} for separation between kernel methods and $2$-layer neural networks.
In more details, note that the target can be re-written as a sum of two threshold functions:
 \begin{equation}
    f_{\vb}(\vx_1,\vx_2,\by) = \arg\max_{\vx_i}\inner{\vx_i,\by} + b_i = \mathds{1}(\inner{\vx_1 - \vx_2,\by} + b^* > 0)\vx_1 + \mathds{1}(\inner{\vx_1 - \vx_2,\by} + b^* < 0)\vx_2~,
\end{equation}
where $b^* = b_1-b_2$ will be determined later. Denote by $\vx:=\vx_1-\vx_2$; we will focus on showing hardness of approximation for the first threshold function $\mathds{1}(\inner{\vx,\vy} + b^* > 0)$, from which hardness of approximation for $f_{\vb}$ follows by standard arguments. We define a periodic function $\psi_a(z):\reals\rightarrow\reals$ in the interval $[-a,a]$ that is a linear combination of $a$ threshold functions (at different break points), where $a=\Omega(d^2)$ , and show that for any function $g$ which depends only on a projection $\mK$ of $\vx$ onto and $r$-dimensional subspace
we have:

 \begin{equation}
        \E_{\vx,\vy}\left[\left|\psi_a(\inner{\bx,\by})\cdot g(\mK \vx,\by)\right|\right]\leq \norm{g}\cdot\exp(-\Omega(d-r))~.
\end{equation}

In particular, if any single threshold function that is used to construct $\psi_a$ can be approximated by a rank-$r$ attention layer with $H/a$ heads, then also $\psi_a$ can be approximated by a rank-$r$ attention layer with $H$ heads. However, this is not possible if $r$ is small since $a$ is only polynomial in $d$, and the correlation between each head and $\psi_a$ is exponentially small. Hence, there exists some threshold function with a break point at  $b^*$ which is hard to approximate, unless the number of heads is of the order $O\left(\frac{\exp(d-r)}{a}\right)$.
In \cref{thm:biased_lower_bound}, the inputs $\vx_1$ and $\vx_2$ are drawn from the unit sphere scaled by a factor of $d^2$. We note that re-scaling the inputs is similar to decreasing the required accuracy by the same factor. Hence, this exponential separation result is akin to the high-accuracy regime of \cref{thm:invariant_lower}, although the techniques used in the proof are very different.

\section{Efficient Approximation Using Depth}
\label{sec:majority}
In the previous sections, 
we showed that
a single layer of low-rank attention fails to represent the target unless the number of heads is very large. In this section, we take up the question of whether 
additional layers 
of depth can overcome this weakness.
Depth can mean either adding an MLP after the attention layer or just another attention layer; in this section we consider both options.
We present a construction that approximates the target function (with slightly modified inputs) using two layers and only polynomially many rank-1 heads.
However, we present constructions only for the case where the context length $N = 2$, which is also the setting of our lower bounds.
We conjuncture that any construction using low-rank heads introduces an unfavorable dependence on $N$, a significant weakness compared to full-rank attention.

Our constructions are based on the strategy we call ``majority voting'', which we briefly describe here.
Consider the case of $N=2$ target points and hardmax attention.
The output of each head, like the target function itself, is either $\vx_1$ or $\vx_2$.
A random rank-$1$ head is weakly correlated with the target; the probability that it outputs the correct answer is $1/2 + \Omega(1/\sqrt{d})$.
Thus, combining many such random heads together, their mode (the output with the most ``votes'') matches the target function with high probability.
We use a second layer to calculate the ``majority vote'' of the heads in the attention layer.

Standard attention mechanisms make it difficult to count the number of votes each target point received---or even to remember what the target points $\vx_1$ and $\vx_2$ were---since the next layer gets only a linear combination of them with unknown coefficients.
Therefore, we slightly modify the attention layer to facilitate the majority voting strategy.
We concatenate labels to the vectors that allow us to count how many times $\vx_1$ and $\vx_2$ appear in the sum.
We then use a second layer of attention to look up the full vector corresponding to the majority label.
This labeling can be implemented by concatenating positional encodings to the input points.
That is, instead of inputting $\vx_1, \ldots \vx_N \in \sphere$ to the transformer, we now input $\begin{bmatrix}\vx_1 \\ \vb_1\end{bmatrix}, \ldots, \begin{bmatrix}\vx_N \\ \vb_N\end{bmatrix}$ for $\vb_i \in \R^{e}$.
A linear transformation can be used to map the output of this $(d+e)$-dimensional transformer back to $\R^d$.
Note that our target function is permutation-invariant, so the order of the points is irrelevant to the task at hand.
Thus, these concatenated ``positional encodings'' function more like a modification to the architecture.
They provide extra input dimensions that serve as scratch space in which the model can perform discrete operations like counting and indexing without corrupting the input data.
Also note that, because they change the dimension of the inputs and of the transformer, these concatenated positional encodings are different from the positional encodings used in practice (including RoPE \cite{su2024roformer} and ALiBi \cite{press2022train}), which are included in our framework of generalized attention.

Below, we give the formal definition of the multi-layer transformer architecture used in our construction.
It uses self-attention, meaning that the source and target points are the same.
We modify the attention mechanism by adding a self-excluding mask so that each input point cannot attend to itself (see below, where we form $\tilde{\mX_i}$ by deleting the $i$th column of $\mX$).
Following standard practice, we also use a skip connection.
We do not need a MLP or normalization layer, though 
our construction can easily be extended to include them.

\begin{definition}\label{def:two_layer_posn_transformer}
    A rank-$r$ \textbf{self-masked transformer layer} with $H$ heads is a function  $T: \R^{d \times N} \to \R^{d \times N}$ parameterized by rank-$k$ attention heads $\{(\mM_h, \mV_h)\}_{h=1}^H$ and defined as follows:
\begin{align}
\tilde{\mX_i} &:= \begin{bmatrix}\vline && \vline & \vline && \vline \\ \vx_1 & \cdots & \vx_{i-1} & \vx_{i+1} & \cdots & \vx_N \\ \vline && \vline & \vline && \vline\end{bmatrix}\\
T_i(\mX) &:= \vx_i + \sum_{h=1}^H \mV_h \tilde{\mX_i} \sm\left(\tilde{\mX_i}^\top \mM_h \vx_i\right) \\ 
\end{align}
Here, $T_i$ denotes the $i$th output (or $i$th column of the output) $\begin{bmatrix}T_1(\mX) & \cdots & T_N(\mX) \end{bmatrix}$.

A \textbf{two layer, rank-$r$ transformer with concatenated positional encodings} is a function $T : \R^{d \times N} \to \R^{d \times N}$ parameterized by a positional encoding matrix $\mE = \R^{d_e \times N}$ and two $(d+d_e)$-dimensional self-masked transformer layers, $T^{(1)}$ and $T^{(2)}$, and an output-layer matrix $\mA \in \R^{d \times (d+d_e)}$ and defined as follows:
\begin{equation} T(\mX) = \mA \cdot T_N^{(2)}\left(T^{(1)}\left(\begin{bmatrix}\mX \\ \mE \end{bmatrix}\right)\right) ~.\end{equation}
\end{definition}

The following theorem describes our majority voting construction using random rank-1 heads and concatenated positional encodings. For the proof, see \cref{sec:majority_positional_proof}.
\begin{restatable}{theorem}{majoritypositional}
\label{thm:majority_positional}
There exist universal constants $c_1, c_2$ such that for all $d > c_1$, and $\epsilon \in \left(0, \frac{1}{2}\right)$, and $H \geq c_2\cdot\frac{d^3}{\epsilon^2}$, there exists a two layer, rank-1 transformer $T$ with $H$ heads and $d_e = 2$ (as defined in \cref{def:two_layer_posn_transformer}) for which
\begin{equation}
\E_{\vx_1,\vx_2,\vy\sim \unif(\sphere)}\left\|f(\vx_1,\vx_2;\vy) - T\left(\begin{bmatrix}\vx_1 & \vx_2 & \vy\end{bmatrix}\right)\right\|_2^2 \leq \epsilon~.
\end{equation}
\end{restatable}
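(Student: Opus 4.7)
The plan is to implement the ``majority voting'' strategy explicitly: Layer 1 aggregates the votes of $H$ random rank-$1$ heads into the positional coordinates of the $\vy$-position, and Layer 2 performs a soft lookup on that tally.

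First I would fix positional encodings in $\mathbb{R}^2$ that distinguish the three positions (say $\vb_1 = \ve_1$, $\vb_2 = \ve_2$, $\vb_3 = \vzero$), and build Layer~1 from $H$ random rank-$1$ voting heads plus a small constant number of deterministic ``designer'' heads. For each voting head $h$, sample $\vk_h \sim \mathrm{Unif}(\mathbb{S}^{d-1})$, lift it to $\widetilde\vk_h = (\vk_h,\vzero)\in\mathbb{R}^{d+2}$, and take $\mM_h^{(1)} = \beta_1 \widetilde\vk_h\widetilde\vk_h^\top$ with $\beta_1$ large enough that softmax approximates hardmax. Choose $\mV_h^{(1)}$ to project onto the two positional slots, scaled by $1/H$. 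At the $\vy$-position the scores are $\beta_1 (\vk_h^\top \vx_j)(\vk_h^\top \vy)$ for $j\in\{1,2\}$, so the head writes $(1/H)\vb_{j_h^\star}$ into the positional coordinates, where $j_h^\star$ is its hardmax vote. Summing over heads deposits the empirical vote distribution $(n_1/H, n_2/H)$ into the positional coordinates at position $N$.

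The next step is the probabilistic correctness analysis. For a random $\vk$, head $h$ agrees with the true nearest neighbor $j^\star$ exactly when $\vk^\top(\vx_1-\vx_2)$ and $\vk^\top \vy$ share sign, so the standard random-hyperplane identity gives $\Pr_{\vk}[j_h = j^\star] = 1 - \theta/\pi$ with $\theta = \angle(\vx_1-\vx_2,\vy)$. The bias toward correctness is thus $b = \tfrac{1}{\pi}\arcsin(|\gamma|)$ where $\gamma := \langle \vx_1-\vx_2,\vy\rangle/\|\vx_1-\vx_2\|$, which is $\Theta(|\gamma|)$ in the relevant regime. Conditional on the inputs the $H$ votes are i.i.d.\ Bernoulli$(1/2+b)$, so by Hoeffding the majority is wrong with probability at most $\exp(-\Omega(Hb^2))$. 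To get the unconditional expected squared error I would integrate this failure probability against the density of $\gamma$, which on the sphere is approximately $\mathcal{N}(0,1/d)$; the resulting integral is of order $\sqrt{d/H}$. Hence $H = \widetilde\Omega(d/\epsilon^2)$ is already enough to drive the voting contribution below $\epsilon$, with the additional polynomial slack in the theorem absorbing softmax-temperature errors, Hoeffding constants, and the lower-order contributions from the residual-cancellation scaffolding described below, yielding the stated $H \ge c_2 d^3/\epsilon^2$. Layer~2 is then a constant-size lookup: take a single rank-$1$ head with $\mM^{(2)} = \beta_2\begin{bmatrix}\mathbf{0} & \mathbf{0}\\ \mathbf{0} & \mI_2\end{bmatrix}$ so that the query at the $\vy$-position produces scores $\beta_2 n_j/H$ toward source $\vx_j$; for $\beta_2$ large the softmax concentrates on $j^\star=\arg\max_j n_j$, and $\mV^{(2)}$ together with $\mA$ copies out the first $d$ coordinates of the winning Layer-$1$ output.

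The main obstacle is architectural rather than probabilistic. The self-masked residual stream at the $\vy$-position carries $\vy$ into Layer~2, and Layer-$2$ heads can only output linear combinations of $\vx_1,\vx_2$, so one cannot directly cancel this $\vy$ with the lookup head. The remedy is to spend a constant number of deterministic rank-$1$ designer heads in Layer~1 whose attention is driven by the positional encodings (rather than by the $\vx$'s): these pull $-\vy$ from position 3 into the first $d$ coordinates of the Layer-$1$ outputs at positions $1$ and $2$, so that the Layer-$2$ winner-lookup returns $\vx_{j^\star}-\vy$, cancelling the $\vy$ residual once added at position $N$. Verifying that this scaffolding fits within $d_e = 2$ and rank-$1$ $\mM$'s, and that the (predictable) spillover of the designer heads into position $3$ can be neutralised by a few auxiliary heads, is the most delicate part of the proof; after that is in place, the majority-voting analysis is standard concentration.
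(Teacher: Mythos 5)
Your high-level strategy matches the paper's: use $H$ random rank-1 heads as voters in Layer 1, write the votes into the extra coordinates, then do a lookup in Layer 2. The probabilistic part is fine --- in fact the Grothendieck identity $\Pr[\text{head agrees}] = 1-\theta/\pi$ is exact and cleaner than the paper's Lemma, which only shows a bias of $\Omega(a/\sqrt d)$, so your concentration analysis is at least as good. But the explicit rank-1 construction has concrete bugs that your higher-level plan does not resolve.

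First, your Layer-2 key/query matrix $\mM^{(2)} = \beta_2\begin{bmatrix}\mathbf{0} & \mathbf{0}\\ \mathbf{0} & \mI_2\end{bmatrix}$ has rank 2, not 1, and there is a structural reason you cannot reduce it to rank 1 given your choice of encodings. With orthogonal labels $\vb_1=\ve_1,\vb_2=\ve_2$ and a 2D tally $(n_1/H,n_2/H)$, a rank-1 $\mM=\widetilde\vk\widetilde\vq^\top$ gives scores of the form $(\widetilde\vk^\top\mathrm{repr}_j)\cdot(\widetilde\vq^\top\mathrm{repr}_\vy)$; the query-side factor is a common scalar multiplying both candidate scores, so a rank-1 head cannot compare $n_1$ against $n_2$. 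Second, your $\mV_h^{(1)}$ projects onto both positional slots, which is again rank 2 rather than rank 1. Third, writing the tally directly into the label slots corrupts the keys at positions 1 and 2: after Layer 1, position 1's positional coordinates are $\ve_1$ plus an $O(1)$ vote tally (from votes over $\{\vx_2,\vy\}$), so the Layer-2 scores are no longer simply $\beta_2 n_j/H$.

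The paper's construction sidesteps all of this with a different allocation of the two extra dimensions: one holds a scalar label $+1/-1/0$ (rather than two orthogonal vectors), and the other is reserved scratch space. The Layer-1 value matrix reads the label coordinate and writes to the scratch coordinate --- a genuinely rank-1 operation that never corrupts the labels --- and the Layer-2 head $\mM^{(2)}=\begin{bmatrix}\vzero\\1\\0\end{bmatrix}\begin{bmatrix}\vzero^\top & 0 & 1\end{bmatrix}$ is rank 1 precisely because the two-way comparison has been collapsed to the sign of a single scalar $s_\vy$. Finally, for the residual $\vy$ at position 3, the paper does not need your elaborate ``designer head'' cancellation scaffolding (which you acknowledge is unverified and which I do not think fits rank-1 $\mM$'s and $d_e=2$): it simply scales $\mV^{(2)}$ by a large $\beta$ and lets the output matrix $\mA = \frac1\beta[\mI_d\,|\,0]$ divide through, so the residual contributes error $\|\vy\|^2/\beta^2$ which is made smaller than $\epsilon$ by choosing $\beta$ large. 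You correctly identified the residual obstruction, but missed this much simpler resolution.
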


One might wonder whether the concatenated positional encodings are necessary to make this construction work, especially since they break permutation invariance in order to represent a permutation invariant target.
In \cref{sec:majority proofs}, we present an alternative construction (\cref{thm:majority_wide_hidden}) that is permutation invariant.
However, it modifies the architecture by applying the MLP to the concatenation of the outputs of the attention heads rather than to their sum.

Although our constructions assume for $N=2$ source points, it seems feasible to generalize them to larger $N$. However, the major drawback of such a generalization is that the size of the transformer will depend on $N$. Even the simple step of calculating the majority between $N$ possible terms does not seem to be possible without at least a linear dependence on $N$. On the other hand, \cref{fact:invariant_upper} shows that the target function can be approximated for any $N$ using a single full rank attention. We conjecture that such a dependence on $N$ is necessary when using low-rank attention:

\begin{conjecture}
\label{conjecture:N}
    There is no multi-layer transformer (with fixed size and weight matrices) of rank $r < d$ that approximates the target of \cref{eq:target} for all $N$.
\end{conjecture}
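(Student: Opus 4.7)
The plan is to extend the harmonic-analytic technique of \cref{thm:invariant_lower} to the multi-layer setting, combined with a rank-bottleneck argument controlling how rank constraints propagate through stacked attention layers. Fix any rank-$r$ multi-layer transformer $T$ with $L$ layers and $H$ heads per layer, where $r < d$; the goal is to exhibit some $N$ and a distribution on inputs for which $T$ fails to approximate $f$ of \cref{eq:target}.

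First I would exploit rotational symmetry by drawing orthonormal target points from $\mathcal{D}_N(\sphere)$ and $\vy \sim \unif(\sphere)$ (extending the $N=2$ setting of \cref{thm:invariant_lower} to arbitrary $N \leq d$), and decompose $L^2(\sphere^N \times \sphere)$ via tensor products of spherical harmonics. The core technical step would be a \emph{rank-bottleneck lemma}: the output of $T$ can be written as a finite, $(H,L)$-controlled sum of terms whose dependence on $\mX$ factors through a fixed family of rank-$r$ projections. Inductively, each layer's attention distribution depends on its layer input only through a rank-$r$ key projection; although the full-rank $\mV_h^{(\ell)}$ matrices allow a layer output to span $\R^d$, composing the next layer's key projection with the previous layer's value matrix yields another sum of rank-$r$ operators applied to the original $\mX$. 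Propagating this through the $L$ layers should show that $T(\mX, \vy)$ lies in a bounded family of functions generated by compositions of softmax kernels evaluated on rank-$r$ projections of $\mX$ and $\vy$.

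With the bottleneck in hand, I would then mirror the harmonic-analytic argument of \cref{thm:invariant_lower}. The nearest neighbor function, expanded in spherical harmonics on $\sphere^N \times \sphere$, places mass on subspaces whose effective dimension grows with $N$; each rank-$r$ feature of $T$ can capture only a $(r/d)^{O(1)}$ fraction of each harmonic subspace, so summing over the $(HL)^{O(1)}$ terms generated by the bottleneck lemma yields an energy budget that is exhausted once $N$ is sufficiently large, giving the desired approximation gap.

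The hard part will be the rank-bottleneck lemma itself: rigorously tracking rank-$r$ structure across the softmax nonlinearity and the full-rank $\mV$ matrices is delicate, since attention mixes rank-$r$ projections in ways not directly amenable to linear analysis. A finer tool, such as Hermite or Taylor expansions of the softmax tailored to the attention mechanism, or a structured operator-space decomposition tracking the joint action of $(\mK, \mV)$ pairs across layers, will likely be needed to turn the intuition into a proof. An additional subtlety is that the softmax sharpens as $N$ grows while inputs remain on the sphere; one must handle this scaling carefully so that the harmonic estimates remain uniform in $N$ over the regime of interest.
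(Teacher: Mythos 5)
The statement you have attempted to prove is stated in the paper as \cref{conjecture:N}, i.e., as an \emph{open conjecture}: the authors explicitly say that proving or refuting it is left for future work, and the paper provides no proof (or disproof) of it. So there is no ``paper's own proof'' against which to compare yours; what you have written is a research program, not a verified argument.

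As such a program your sketch is sensible in outline but contains a genuine gap at its core, which you yourself flag: the \emph{rank-bottleneck lemma}. The claim that composing the next layer's key projection with the previous layer's value matrix ``yields another sum of rank-$r$ operators applied to the original $\mX$'' is not correct as stated, because the attention weights are nonlinear in $\mX$ and the skip connection passes the full-rank $\vx_i$ through to the next layer untouched. Concretely, a layer-$2$ key score has the form $\bigl(\vx_i + \sum_h \mV_h \mX\,\mathrm{sm}(\mX^\top\mK_h \vx_i)\bigr)^\top \mK'_{h'}(\cdots)$; the skip term $\vx_i^\top \mK'_{h'}(\cdots)$ cannot be absorbed into a bounded family of rank-$r$ projections of the original $\mX$, and the softmax-weighted terms carry nonlinear dependence on $\mX$ through the rank-$r$ projections $\mK_h^\top\mX$ in ways that do not compose into a single rank-$r$ projection. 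Indeed, \cref{thm:majority_positional} of this paper shows that a two-layer rank-$1$ transformer with concatenated positional encodings \emph{does} approximate the target for $N=2$, so any correct lower bound along your lines must make essential use of $N$ growing large and cannot hinge on a rank constraint that persists layer-by-layer in a uniform, $N$-independent way. A second gap is the spherical-harmonic step: the decomposition in \cref{thm:invariant_lower} is carried out over a pair $(\vx,\vy)\in\sphere\times\sphere$ after a change of variables special to $N=2$, and it is not clear what the correct tensor basis or spectral-energy budget is on $\sphere^N\times\sphere$ for general $N$, nor how it interacts with the softmax normalization over $N$ keys (where each logit contributes weight roughly $1/N$ unless the scores spread). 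Until the bottleneck lemma and the $N$-uniform harmonic estimates are actually established, the conjecture remains open, as the paper states.
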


That is, while it may be possible to construct a transformer that approximates the target for a given fixed $N$ (as we do above), we conjecture that there is no such construction that is independent of $N$.
Proving or refuting the above conjecture would have very different implications.
A counterexample would mean that the the weakness of low-rank can be compensated by depth, and thus the rank does not play a decisive role in the expressive power of multi-layer transformers.
A proof would show that, even in the multi-layer case, low-rank attention is fundamentally weaker than high-rank attention.

\section{Experiments}
\label{sec:experiments}
In this section, we complement our theoretical results with experiments on a broader class of architectures.
We train off-the-shelf transformers---which include multiple layers of self-attention, MLP layers, skip connections, and normalization---on a slight modification of the nearest neighbor function.
Our experiments confirm the weakness of low-rank attention in this setting.
They also show that the full-rank construction of \cref{fact:invariant_upper} is easily learned by gradient descent.
All code is available at \url{https://github.com/NoahAmsel/attention-formers}.

\paragraph{Model and training details} We use the Pytorch implementation of transformer encoders \cite{paszke2019pytorch} with two modifications.
First, we generalize the standard scaling $H = d/r$, allowing $H$ to be any multiple of $d/r$.
(In particular, we try $H = d^{1.5}/r$ and $H = d^2/r$.)
Second, we replace the layer normalization with RMSNorm \cite{NEURIPS2019_1e8a1942}, a standard choice in modern transformers \cite{touvron2023llama,10.5555/3648699.3648939} that is also better suited to our target function.
We train with biases, but preliminary experiments showed that these make little difference.\footnote{Note that biases in the key, query, and value transformations have a different role from additive positional encodings. These biases differ between heads but are constant across tokens; in contrast, the positional encodings differ between tokens but not heads.
The biases implemented by Pytorch are also slightly different from those studied in \cref{sec:biased}.}
We run each experiment on a single Nvidia GPU (usually a V100) for no more than a few hours.

Since we are using self-attention, there is no distinction between the source and target points.
The $N$ input points are drawn uniformly and i.i.d. from $\sphere$, and they are not constrained to be orthogonal.
We change our target function accordingly.
For each input point, the target now outputs whichever of the other points is \emph{farthest} from it.
We output the farthest instead of the nearest point because otherwise, each point would map to itself.
The loss function is the average mean squared error over the $N$ points.
We do not use any attention mask.
In particular, we allow points to attend to themselves.
Our dataset is synthetic, so we train and test on a stream of freshly generated samples that never repeat.
We train on $10^5$ batches of size 256 each.
For all experiments, we use AdamW with the same learning rate of 0.01 and a learning rate schedule of cosine annealing with a linear warm-up.

\begin{figure}
\centering
\includegraphics[width=\textwidth]{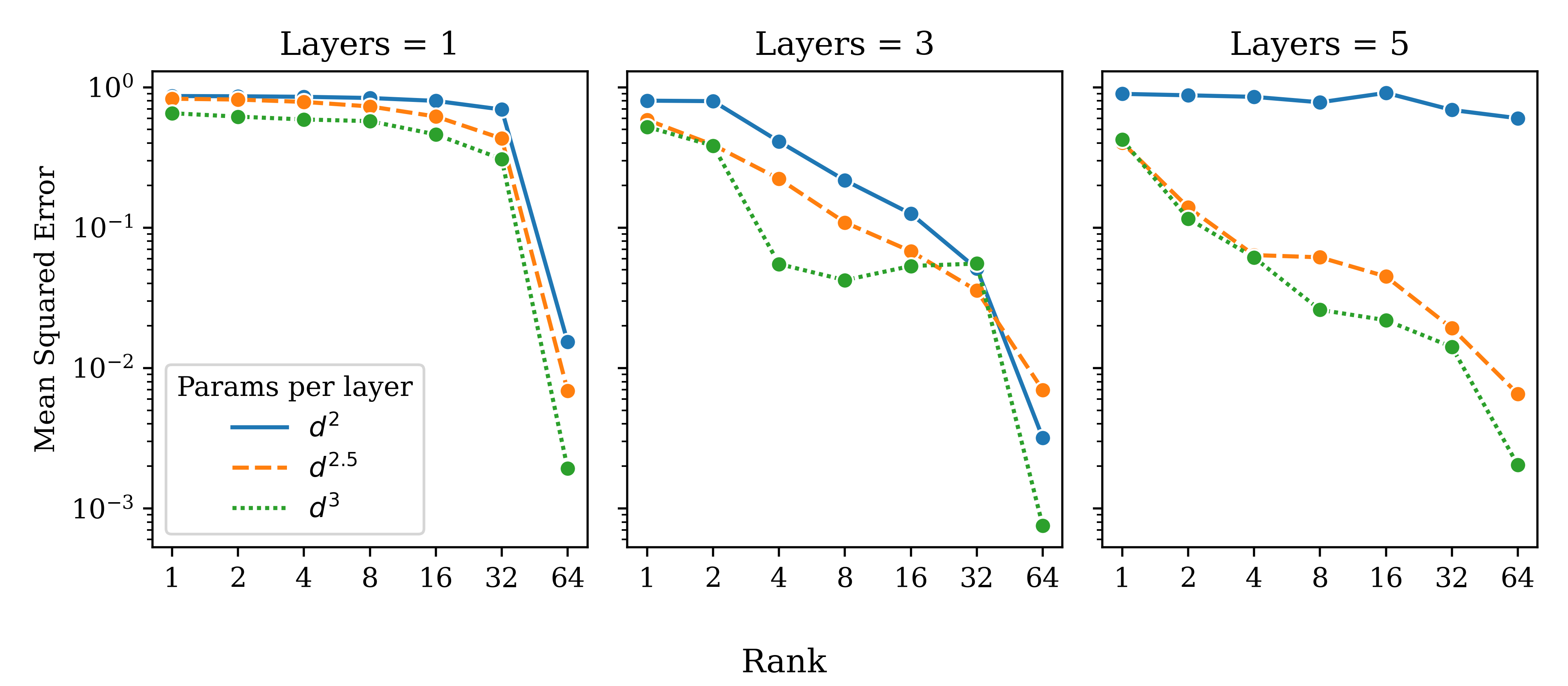}
\caption{Standard transformers trained on the farthest neighbor function. 
The dimension is $d = 64$ and the number of input points is $N = 16$.
Line shows best of five runs (except for $L=3, \text{params} = d^3, r \in \{16, 32\}$, which are best of eight).
Across different numbers of layers and heads, high-rank models significantly outperform low-rank models with the same number of parameters.}
\label{fig:main}
\end{figure}

\paragraph{Rank separation} Our first experiment studies the importance of rank across various numbers of heads ($H$) and layers ($L$).
We fix the dimension $d = 64$ and the number of points $N = 16$.
In this experiment, we use no positional encodings.
\cref{fig:main} plots the results, showing the best of five runs for each setting.
Each line uses a different number of heads, but the number of parameters per attention layer, $rdH = d^{c+1}$, is kept constant within each.
The standard scaling is $d^2$ parameters per layer.
When $L=1$, the results suggest that using full-rank ($r=64$) is necessary and sufficient to learn the target function accurately; even $2d$ heads of rank $d/2$ fails.
For $L>1$, the trade-off between rank and accuracy is more favorable, but low-rank attention still significantly underperforms full-rank attention, even when it gets to use more parameters.
The standard five layer transformers (that is, $L=5$, parameters per layer $= d^2$) seem to suffer from optimization difficulties on this problem.
Excluding that case, the best-performing model that is not full-rank ($L=5$, $d^3$ parameters per layer, $r = 32$) performs no better than the worst full-rank model ($L=1$, $d^2$ parameters per layer, $r=64$) despite having 80x more parameters in its attention layers.
In short, a standard transformer with $H=1$ performs much better on this task than one with $H$ even moderately larger.

\begin{figure}
\centering
\includegraphics[width=\textwidth]{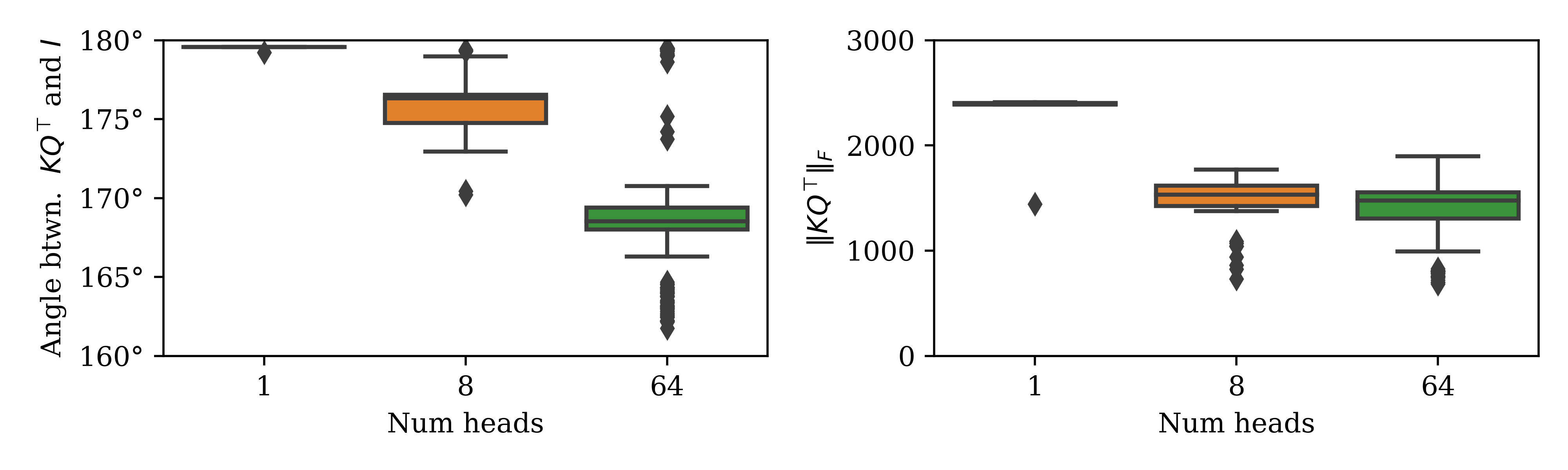}
\caption{Properties of learned $\mK \mQ^\top$ matrices for full-rank models with one layer.
Boxplots show distribution over heads from five runs, each on a model which has between 1 and 64 full-rank heads. Left panel plots Frobenius angle with the identity: $\arccos\left(\inner{\mK \mQ^\top, \mI}_{\mathsf F} / (\|\mK \mQ^\top\|_{\mathsf F} \|\mI\|_{\mathsf F})\right)$. Results show that $\mK\mQ^\top$ nearly equals $-c\mI$ for $c > 1000$ in all cases.}
\label{fig:perfect}
\end{figure}

\paragraph{Full-rank solution}
In the full-rank case the transformer learns the target, but what representation has it learned?
\cref{fig:perfect} suggests that, in some cases, it is very nearly the construction of \cref{fact:invariant_upper}.
Recall that in \cref{fact:invariant_upper}, we use a hardmax attention head with $\mK_h \mQ_h^\top = \mI$.
In our experiments however, we use the farthest neighbor target function and softmax heads, so the corresponding construction is $\mK_h \mQ_h^\top = -c\mI$ for $c \gg 1$.
The first panel shows the median Frobenius angle between the matrices $\mK_h \mQ_h^\top$ and $\mI$ learned by the full-rank, single layer models in the previous experiment.
This shows that $\mK_h \mQ_h^\top$ very nearly equals $-\mI$ up to a constant factor.
Moreover, as the second panel shows, the norm of this matrix is large, which causes the softmax to act like a hardmax.
Results are similar for three layer networks with a single full-rank head, but when $L > 1$ and $H > 1$, it seems the network learns some other, less interpretable strategy to represent the target.

\begin{figure}
\centering
\includegraphics[width=.8\textwidth]{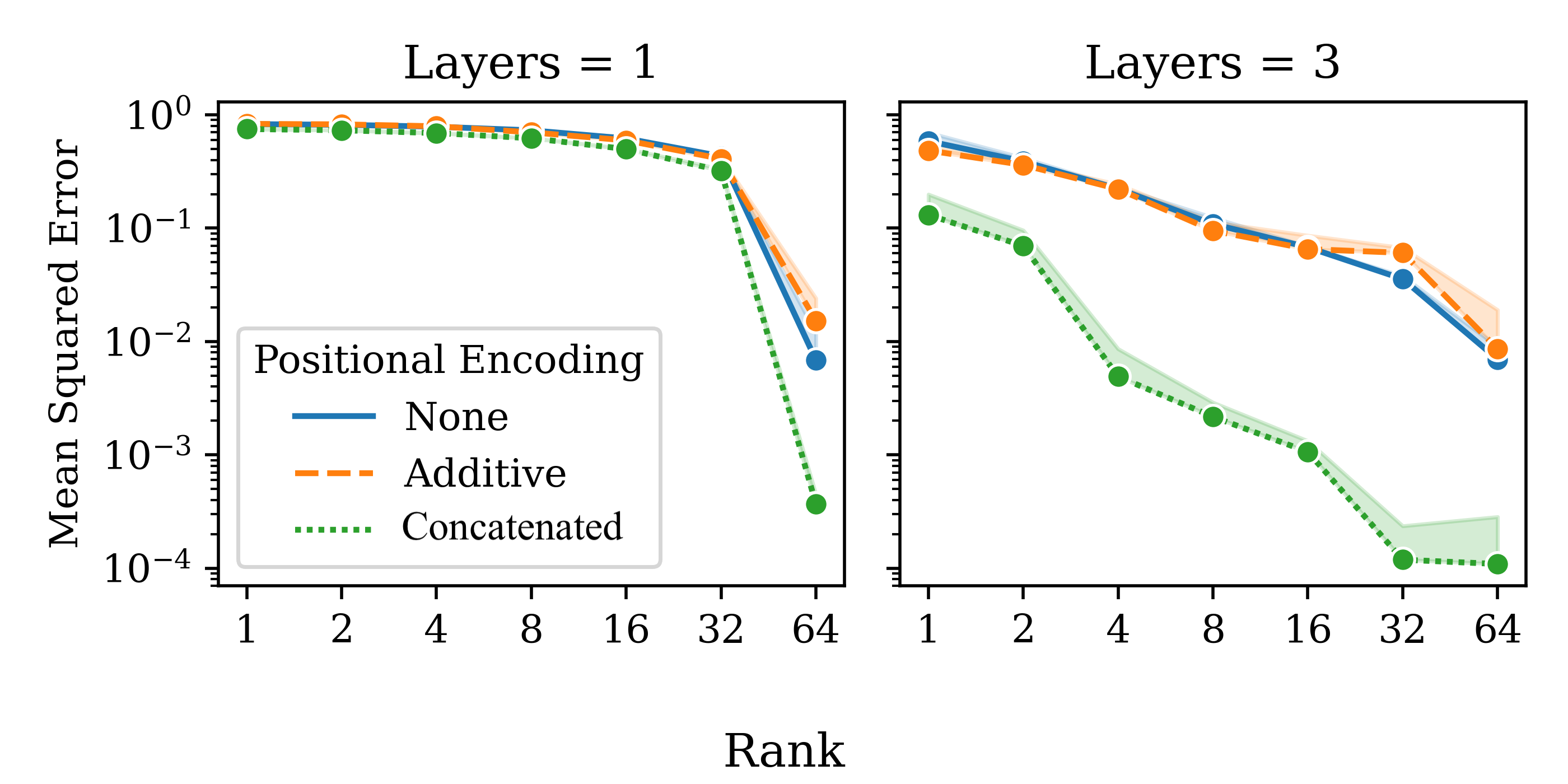}
\caption{Standard transformers with positional encodings ($d=64$, $N=16$).
Line shows best of five runs; shaded region shows range over five runs.
Positional encodings help when the encodings are concatenated to the inputs and there are multiple layers (cf. \cref{thm:majority_positional}). Otherwise, they do not help.}
\label{fig:posn}
\end{figure}

\paragraph{Positional encodings}
Since our target function is permutation-invariant, no positional information exists in the data.
However, in \cref{sec:majority}, we showed that concatenated positional encodings can help low-rank attention succeed when $L>1$ by giving the model extra dimensions of scratch space.
The positional encoding schemes used in practice, like additive encodings \cite{NIPS2017_3f5ee243}, RoPE \cite{su2024roformer} and ALiBi \cite{press2022train}, cannot be used in this way, being versions of the generalized attention heads studied in this paper.
In \cref{fig:posn}, we experiment with positional encodings.
As expected, additive attention fails to help low-rank attention at all.
The left panel shows that when $L=1$, concatenated positional encodings fail too.
However, when $L=3$, concatenated positional encodings yield dramatic improvements, a finding that accords with \cref{thm:majority_positional}.

\begin{figure}
\centering
\includegraphics[width=.8\textwidth]{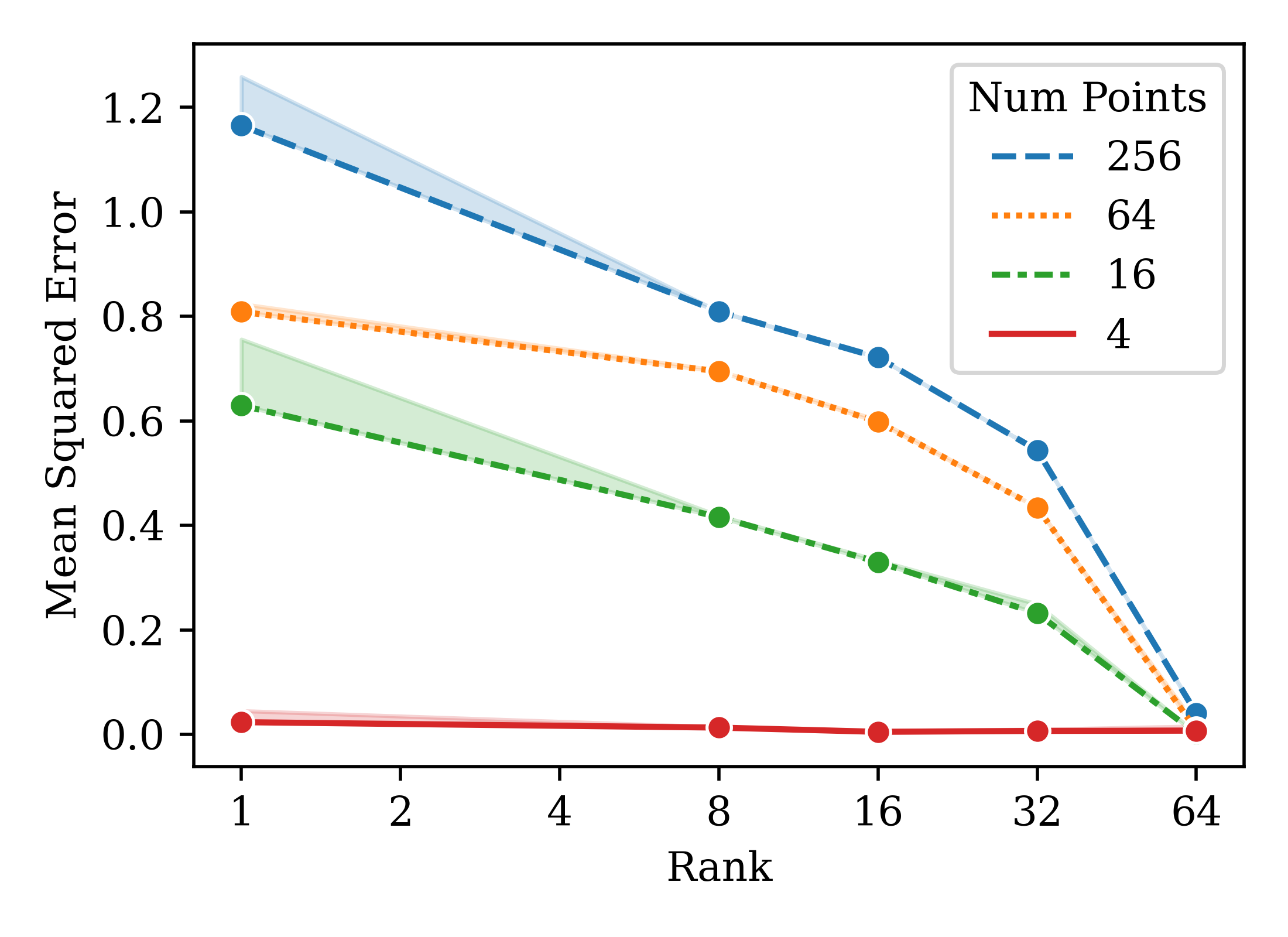}
\caption{
Effect of the number of points ($N$) on the difficulty of learning the farthest neighbor function.
Full-rank attention learns an accurate representation across many $N$s, but the performance of low-rank attention degrades as $N$ grows. 
Dimension is $64$. All models have two layers with $H = d^2 / r$ heads each. Line shows best of five runs; shaded region shows range over five runs.
}
\label{fig:role_of_N}
\end{figure}

\paragraph{Role of $N$} In \cref{fig:role_of_N}, we explore how the number of input points $N$ affects the difficulty of learning the target function.
We fix $d = 64$, $H = d^2/r$, and the number of layer $L = 2$.
The results show that, as predicted by \cref{fact:invariant_upper}, the full-rank heads learn the target accurately across a range of $N$.
However, the low-rank heads suffer declining accuracy as $N$ grows.
This accords with \cref{conjecture:N}, which predicts that low-rank transformers of a fixed size fail to accurately represent the target for sufficiently large $N$.

\section{Conclusions and Limitations}
\label{sec:conclusion}

In this paper, we have investigated the role of rank in attention mechanisms.
We question the nearly universal practice of trading off the rank and the number of heads according to $H = d/r$.
We show that for a simple and natural target function inspired by semantic search, low-rank attention is fundamentally weaker than full-rank attention, even when $H \gg d/r$.
We demonstrate this strict separation between the low-rank and high-rank regimes both theoretically, by proving hardness of approximation in the shallow setting, and empirically, through experiments with off-the-shelf transformers.
Our results thus hint at a potentially beneficial tradeoff between number of heads and rank that remains largely unexplored in applications. 

That said, our theoretical analysis is inherently limited to the study of shallow transformers, and our results of \Cref{sec:majority} illustrate how adding depth may overcome the limitations of low-rank self-attention in some cases.
However, we hope that our results will motivate theoreticians and practitioners to more carefully consider the settings and scalings of transformer hyperparameters.
In particular, they suggest that theoretical models that use full-rank attention may not accurately describe transformers used in practice, and that much remains to be understood about the successes and failure modes of attention-based architectures.

Several open questions remain for future work.
The basic transformer architecture of \cite{NIPS2017_3f5ee243} allows the user to set a number of hyperparameters.
Despite the ubiquity of this architecture, hyperparameter settings other than the embedding dimension and number of layers are almost never significantly changed; see \cref{sec:hyperparams}.
While considerable prior work has studied scaling laws for the dimension and number of layers, we believe that future research should also consider the other hyperparameters and seek to understand the trade-offs, dependencies, and scaling laws between them.
Here, we focus on the query/key rank and its relationship to the number of heads, but the depth and width of the MLPs and value/output rank are also of interest.

Additionally, the rotational invariance of the input data distribution is instrumental in establishing our lower bounds. Given the inherently discrete nature of text-based transformers, a natural question is to understand how to generalize our techniques beyond the rotationally-invariant setting. 
Another direction for future work is to understand the relationship between the rank and the context length.
Focusing on the $N=2$ case suffices for us to prove rank separation, but we believe a similar result should hold at least for all $N \leq d$;
\cref{fig:role_of_N} provides preliminary experimental evidence.
Understanding the $N>2$ case may also help address a final open question:
What is the relationship between rank and depth?
In particular, does \cref{conjecture:N} hold?

\paragraph{Acknowledgements:} 
This work was partially supported by the Alfred P. Sloan Foundation, and awards NSF RI-1816753, NSF CAREER CIF 1845360, NSF CHS-1901091 and NSF DMS-MoDL
2134216. We thank Ohad Shamir for useful discussions while this work was being completed. 

\bibliography{final_bib}
\bibliographystyle{alpha}

\appendix

\section{Hyperparameters of Transformer}
\label{sec:hyperparams}
The transformer architecture \cite{NIPS2017_3f5ee243} leaves the user free to set the following hyperparameters:
\begin{itemize}
\item The embedding dimension ($d$) 
\item The number of layers ($L$)
\item The width of the MLPs ($w$)
\item The depth of the MLPs ($D$)
\item The rank of the $\mW_Q$ and $\mW_K$ matrices for each head ($r$)
\item The rank of the $\mW_V$ and $\mW_O$ matrices for each head ($r_2$)
\item The number of attention heads in each layer ($H$)
\end{itemize}
In this paper, we consider the dimension $d$ to be given by the domain of the target function, rather than being a hyperparameter as in language modeling.
As \cref{table:hyperparams} shows, only $d$ and $L$ have been significantly changed relative to the original model.
For all models of which we are aware, $w$ lies within a factor of two from \cite{NIPS2017_3f5ee243}, $r$ lies within a factor of four, and $D$ and $r_2$ are not changed at all.
$H$ has been scaled, but always according to the standard scaling (up to a factor of 2).

\begin{table}[ht]
    \caption{
    Hyperparameter settings of popular transformer models (largest versions reported).
    Except for $d$ and $L$, they are strikingly consistent.
    See text of \cref{sec:hyperparams} for notation.}
  \label{table:hyperparams}
  \centering
  \begin{tabular}{clccccccc}
    \toprule
    Year & Model & $d$ & $L$ & $w$ & $D$ & $r$ & $r_2$ & $H$ \\
    \midrule
    2017 & Attention is all you need \cite{NIPS2017_3f5ee243} & 512 & 6 & $4d$ & 2 & 64 & $r$ & $d/r$ \\
    2018 & GPT, GPT-2 \cite{radford2018improving,radford2019language} & 768 & 12 & $4d$ & 2 & 64 & $r$ & $d/r$ \\
    2019 & Bert-Large \cite{devlin-etal-2019-bert} & 1,024 & 24 & $4d$ & $2$ & 64 & $r$ & $d/r$ \\
    2021 & ViT-Huge \cite{dosovitskiy2021an} & 1280 & 32 & $4d$ & 2 & 80 & $r$ & $d/r$\\
     & CLIP (text encoder) \cite{pmlr-v139-radford21a} & 1,024 & 12 & $4d$ & 2 & 64 & $r$ & $d/r$ \\
     &Jurassic-1 & 13,824 & 76 & $4d$ & 2 & 144 & $r$ & $d/r$\\
     & Gopher 280B \cite{rae2021scaling} & 16,384 & 80 & $4d$ & 2 & 128 & $r$ & $d/r$ \\
     &LaMDA \cite{51115} & 8192 & 64 & $8d$ & 2 & 128 & $r$ & $2d/r$ \\
    2022 & Chinchilla 70B \cite{NEURIPS2022_c1e2faff} & 8,192 & 80 & $4d$ & 2 & 128 & $r$ & $d/r$\\
     &GPT-3 \cite{brown2020language} & 12,288 & 96 & $4d$ & 2 & 128 & $r$ & $d/r$\\
    2023 & PaLM \cite{10.5555/3648699.3648939} & 18,432 & 118 & $4d$ & 2 & 256 & $r$ & $2d/3r$ \\
     & LLaMA, Llama-2 \cite{touvron2023llama,touvron2023bllama} & 8,192 & 80 & $8d/3$ & $2$ & 128 & $r$ & $d/r$ \\
    2024 & OLMo \cite{groeneveld2024olmo} & 8,192 & 80 & $8d/3$ & 2 & 128 & $r$ & $d/r$\\
    \bottomrule
  \end{tabular}
\end{table}

\section{Proofs from \texorpdfstring{\cref{sec:invariant}}{Section 4}}
\label{sec:proof_invariant_lower}
In this section, we prove the upper bound \cref{fact:invariant_upper}, the lower bound \cref{thm:invariant_lower} and some important properties relating to the approximation of the target by random heads.

We begin with the proof of \cref{fact:invariant_upper} in \cref{appen:proof of upper invariant}.
In \cref{sec:spherical_harm}, we review the basics of spherical harmonics and describe the corresponding family of ultraspherical orthogonal polynomials on the interval.
In \cref{sec:ortho_basis}, we construct a basis for functions of pairs of points on the sphere that we will use to analyze the target and the attention mechanism.
In \cref{sec:target_expansion}, we show how to expand the target function in this basis, proving the critical properties of slow spectral decay and rotational invariance between basis elements of the same degree.
In \cref{sec:head_expansion}, we expand a single attention head in this basis, showing that the number of basis elements with which it is correlated is limited by the rank of the attention head.
In \cref{sec:main_proof_main_step}, we use these results to obtain a lower bound on the error of approximation that depends only on certain universal constants related to the spherical harmonics, particularly the number of spherical harmonics of a given degree and the coefficients of the ultraspherical expansion of the sign function.
In \cref{sec:asymptotics}, we analyze this expression to derive a bound on the necessary number of heads that depends only on the dimension $d$, the rank $r$, and the error level $\epsilon$.
Finally, in \cref{sec:random_features_proof}, we analyze a construction that approximates the target function using random rank-1 heads.

\subsection{Proof of \texorpdfstring{\cref{fact:invariant_upper}}{Fact 1}}\label{appen:proof of upper invariant}

Let $\epsilon > 0$. We set $\mV = \mI$, $\mK \mQ^\top = \alpha \mI$ for $\alpha > 0$ to be chosen later. Since $\vx_i,\vy\sim\unif(\sd)$, for every $i\in\{1,\dots,N\}$, there exists $\delta > 0$ (which depends on $\epsilon$) such that for the set:
\begin{equation}
A_\delta:=\{(\vx_1,\dots,\vx_N,\vy)\in(\sd)^{N+1}: \forall i\neq j,~ |(\vx_i-\vx_j)^\top \vy| > \delta\}~,
\end{equation}
we have that $\Pr((\vx_1,,\dots,\vx_N,\vy) \notin A_\delta) \leq \frac{\epsilon}{2}$. 
Note that:
\begin{equation}
\mX\sm(\alpha\mX^\top  \vy)\underset{\alpha\rightarrow\infty}{\longrightarrow}\arg\max_{\vx_i} (\vx_i^\top \vy) = \arg\max_{\vx_i}\norm{\vx_i - \vy}^2~,
\end{equation}
where the convergence is uniform 
on $A_\delta$, and the equality follows since all the vectors are from the unit sphere. In particular, there exists $\alpha > 0$ such that:
\begin{equation}
\sup_{(\vx_1,\dots,\vx_N,\vy)\in A_\delta}\norm{\mX\sm(\alpha\mX^\top  \vy) - \arg\max_{\vx_i}\norm{\vx_i - \vy}^2}^2 \leq \frac{\epsilon}{2}~.
\end{equation}
Combining both bounds and taking expectation over the vectors finishes the proof.

\subsection{Spherical Harmonics}
\label{sec:spherical_harm}
We begin by reviewing some basic results from the theory of spherical harmonics.
Let $\tau(\cdot)$ denote the uniform distribution over $\sphere$ and define the inner product $\inner{\cdot, \cdot}_\tau$ over $L^2(\sphere)$ as follows
\begin{equation} \inner{f, g}_{\tau} := \int_{\sphere} f(\vx)g(\vx) d\tau(\vx) \end{equation} 
A polynomial $H: \R^d \to \R$ is called harmonic and degree-$\ell$ homogeneous if
\begin{equation} \nabla^2 H = 0, \qquad H(a\vx) = a^{\ell}H(\vx) \end{equation}
A spherical harmonic of degree $\ell$ is the restriction of a harmonic homogeneous polynomial to the sphere $\sphere$.
That is, a function $Y : \sphere \to \R$ is a spherical harmonic of degree $\ell$ if and only if the $\R^d \to \R$ function defined by
\begin{equation} \vx \mapsto \|\vx\|^{\ell} Y\left(\frac{\vx}{\|\vx\|}\right) \end{equation}
is a harmonic homogeneous polynomial of degree $\ell$.
The set of spherical harmonics of degree $\ell$ on $\sphere$ form a function space $\Fcal_\ell \subset L^2(\sphere)$.
These subspaces have the following dimensions (Theorem 4.4 of \cite{frye2012spherical}):
\begin{equation}
\label{eq:ndl}
N(d, \ell) := \dim \Fcal_\ell = \frac{2\ell + d - 2}{\ell} \binom{\ell + d - 3}{\ell - 1}~.
\end{equation}
The reason spherical harmonics are so useful is that the $\Fcal_\ell$ are linearly independent, and their direct sum is $L^2(\sphere)$.
That is, if $\{Y_\ell^j\}_{j=1}^{N(d,\ell)}$ is an orthonormal basis of $\Fcal_\ell$, then $\cup_{\ell=0}^{\infty} \{Y_\ell^j\}_{j=1}^{N(d,\ell)}$ is an orthonormal basis of $L^2(\sphere)$ with respect to $\inner{\cdot, \cdot}_\tau$.

For a unit vector $\ve$, let $u_d$ denote the distribution of $\vx^\top \ve$ when $\vx \sim \tau$.
Then for $t \in [-1, 1]$,
\begin{equation} u_d(t) := \frac{A_{d-2}}{A_{d-1}} \cdot (1-t^2)^{\frac{d-3}2} \end{equation}
where $A_{d-1}$ is the surface area of $\sphere$ (see Lemma 4.17 of \cite{frye2012spherical}).
Define the following inner product over functions mapping $[-1,1] \to \R$:
\begin{equation} \inner{f,g}_{u_d} := \int_{-1}^1 f(t)g(t)u_d(t)dt \end{equation}
The ultraspherical polynomials $P_\ell : [-1,1] \to \R$ for $\ell \in \mathbb N_{\geq 0}$ are defined by the following properties:
\begin{enumerate}
\item $P_\ell$ has degree $\ell$
\item $\ell \neq \ell' \Longleftrightarrow \inner{P_\ell, P_{\ell'}}_{u_d} = 0$
\item $P_\ell(1) = 1$
\end{enumerate}
These polynomials form an orthogonal basis for $L^2([-1, 1], u_d)$, which includes all bounded functions on $[-1,1]$.
Moreover, they are intimately connected to the spherical harmonics.
We exploit three such connections.
First (Equation 4.30 of \cite{frye2012spherical})
\begin{equation}\label{eq:geg_norm}
\|P_\ell\|_{u_d}^2 = \frac1{N(d, \ell)}
\end{equation}
Second, the addition formula states that each ultraspherical polynomial can be expressed in terms of the spherical harmonics of the same degree and vice versa (Theorem 4.11\footnote{Note that \cite{frye2012spherical} has an extra factor of $A_{d-1}$ in the theorem statement. This is because they use a different normalization for the spherical harmonics.} of \cite{frye2012spherical})
\begin{equation}\label{eq:addition}
P_\ell(\vx^\top \vy) = \frac1{N(d, \ell)}\sum_{j=1}^{N(d,\ell)} Y_\ell^j(\vx) Y_\ell^j(\vy)
\end{equation}
Finally, the Hecke-Funk formula (Theorem 4.24 of \cite{frye2012spherical}) gives the relationship between the ultraspherical expansion of $t \mapsto f(t)$ and the spherical harmonic expansion of $\vy \mapsto f(\vx^\top \vy)$.
For any degree-$\ell$ spherical harmonic $Y_\ell$,
\begin{equation}\label{eq:hecke_funk}
\Big\langle f\big(\inner{\vx, \cdot}\big), Y_\ell\Big\rangle_\tau := \int_{\sphere} f(\vx^\top \vy) Y_{\ell}(\vy) d\tau(\vy) = Y_{\ell}(\vx) \inner{f, P_\ell}_{u_d}
\end{equation}

We will make use of the ultraspherical expansion of two particular functions:
\begin{definition}\label{def:geg_coeffs}
Let $\{\alpha_\ell\}$ be the ultraspherical series for $\arcsin$ and let $\{\eta_\ell\}$ be the ultraspherical series for $\sign$.
That is,
\begin{align}
\arcsin(t) &= \sum_{\ell = 0}^\infty \alpha_\ell \frac{P_\ell(t)}{\|P_\ell\|_{u_d}}\\
\sign(t) &= \sum_{\ell = 0}^\infty \eta_\ell \frac{P_\ell(t)}{\|P_\ell\|_{u_d}}
\qquad \forall t \in [-1,1]
\end{align}
\end{definition}

\subsection{Orthonormal Basis for Target and Attention Heads}
\label{sec:ortho_basis}
The goal of this section is to define the orthonormal basis that we will use to analyze the (surrogate) target and attention functions.
We define the input space for these functions as follows: $\Xcal = \sphere \times \sphere$.
We denote elements of this set by $(\vx, \vy)$ or $z$ for short.
For any two functions, define their tensorization by
\begin{equation} (f \otimes g)(z) = f(\vx)f(\vy) \end{equation}
We let $\bar \tau = \tau \otimes \tau$ be the uniform measure on $\Xcal$.
We also define a feature space $\Omega = \sphere \times \sphere$ and denote elements of this space by $(\vq, \vk)$ or $\omega$. Of course, $\Omega = \Xcal$, but since they are used in different contexts, we use separate notation for readability.

We define the feature mapping that we will use to analyze the surrogate target and attention functions:
\begin{definition}
\label{def:Tcal}
Define the ``rank-1 head'' function $\rho : \Xcal \times \Omega \to \{\pm 1\}$ by
\begin{equation}\label{eq:rank1_hm}\rho(z, \omega) := \sign\left(\vx^\top \vk \vq^\top \vy\right) \end{equation}
and the feature map linear operator $\Tcal : L^1(\Omega) \to L^2(\Xcal)$ by
\begin{equation} (\Tcal u)(z) := \int_{\Omega}\rho(z,\omega)u(\omega)d\bar\tau(\omega) \end{equation}
\end{definition}
The intuition is as follows.
For a fixed value of $\omega = (\vk, \vq)$, the function $\rho(\cdot, \omega)$ acts like a hardmax attention head with rank 1.
More precisely, if $\vx = \vx_1 - \vx_2$ and $\mV = \mI$, then $\rho(z, \omega)$ is the output of the head applied to the source $\vy$ and targets $\vx_1$ and $\vx_2$, projected onto $\vx$.
Furthermore, $\Tcal u$ is a weighted linear combination of all possible rank-1 hardmax heads.

We will construct a basis using functions of the form $\Tcal(Y \otimes Y')$ for spherical harmonics $Y$ and $Y'$.
The rationale for choosing this basis is as follows.
$\Tcal$ defines a positive semidefinite operator $\Tcal^* \Tcal : L^1(\Omega) \to L^2(\Omega)$, which is described by the following formula:
\begin{equation} (\Tcal^*\Tcal u)(\omega) = \int_{\Omega} \E_{z \sim \bar\tau} [\rho(z, \omega)\rho(z, \omega')] \cdot  u(\omega')d\bar{\tau}(\omega') \end{equation}
Functions of the form $Y \otimes Y'$ will turn out to be eigenfunctions of this operator.
To see why, we must first analyze the kernel $\E_{z \sim \bar\tau} [\rho(z, \omega)\rho(z, \omega')]$, which we do in the following lemma.

\begin{lemma}
\label{lem:dot_repr}
\begin{equation} \E_{z \sim \bar\tau} [\rho(z, \omega)\rho(z, \omega')] = \frac4{\pi^2} \arcsin(\vq^\top \vq')\arcsin(\vk^\top \vk') \end{equation}
\end{lemma}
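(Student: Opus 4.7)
The plan is to exploit two simple structural facts: (i) the sign of a product factors, so $\rho$ decomposes into a piece depending only on $\vx$ and a piece depending only on $\vy$; and (ii) the hyperplane (Grothendieck) identity, which evaluates $\E_{\vx\sim\tau}[\sign(\vx^\top\va)\sign(\vx^\top\vb)]$ in closed form when $\va,\vb$ are unit vectors.

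Concretely, I would proceed in three steps. First, since $\vx^\top \vk \vq^\top \vy$ is a product of two scalars and $\sign(ab)=\sign(a)\sign(b)$ (and neither factor vanishes almost surely under $\bar\tau$), write
\begin{equation}
\rho(z,\omega)=\sign(\vx^\top\vk)\,\sign(\vq^\top\vy),
\qquad
\rho(z,\omega')=\sign(\vx^\top\vk')\,\sign(\vq'^{\top}\vy).
\end{equation}
Second, use that $\bar\tau=\tau\otimes\tau$ factors as a product measure in $\vx$ and $\vy$, so
\begin{equation}
\E_{z\sim\bar\tau}\bigl[\rho(z,\omega)\rho(z,\omega')\bigr]
= \E_{\vx\sim\tau}\bigl[\sign(\vx^\top\vk)\sign(\vx^\top\vk')\bigr]\cdot
  \E_{\vy\sim\tau}\bigl[\sign(\vq^\top\vy)\sign(\vq'^{\top}\vy)\bigr].
\end{equation}
Third, evaluate each factor via the standard half-space angle formula: for unit vectors $\va,\vb\in\sphere$ and $\vx\sim\tau$, one has $\Pr[\sign(\vx^\top\va)\neq\sign(\vx^\top\vb)]=\arccos(\va^\top\vb)/\pi$, hence
\begin{equation}
\E_{\vx\sim\tau}\bigl[\sign(\vx^\top\va)\sign(\vx^\top\vb)\bigr]
= 1-\tfrac{2}{\pi}\arccos(\va^\top\vb)
= \tfrac{2}{\pi}\arcsin(\va^\top\vb),
\end{equation}
using $\arccos(t)+\arcsin(t)=\pi/2$. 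Applying this to both factors with $(\va,\vb)=(\vk,\vk')$ and $(\vq,\vq')$ respectively, and multiplying, gives exactly $\tfrac{4}{\pi^2}\arcsin(\vq^\top\vq')\arcsin(\vk^\top\vk')$.

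There is no real obstacle here: the only mildly subtle point is justifying the factorization of the sign, which requires that $\vx^\top\vk$ and $\vq^\top\vy$ are almost surely nonzero under $\bar\tau$. This is immediate because the set $\{\vx:\vx^\top\vk=0\}$ is a great subsphere of $\sphere$, hence has $\tau$-measure zero (and likewise for $\vy$), so the product-of-signs identity holds $\bar\tau$-a.e. Everything else is a direct application of Fubini and the classical angle formula, so the argument is essentially two lines of symbol manipulation.
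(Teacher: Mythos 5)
Your proof is correct and relies on the same two ingredients as the paper (independence of $\vx$ and $\vy$ under the product measure $\bar\tau$, plus the classical half-space angle formula), but you organize the computation more cleanly. You first factor the kernel via $\sign(ab)=\sign(a)\sign(b)$ so that $\rho(z,\omega)=\sign(\vx^\top\vk)\sign(\vq^\top\vy)$, then split the expectation into a product of two one-dimensional Grothendieck-type expectations and apply the identity $\E_{\vx\sim\tau}[\sign(\vx^\top\va)\sign(\vx^\top\vb)]=\tfrac{2}{\pi}\arcsin(\va^\top\vb)$ to each factor. The paper instead computes $\Pr[\rho(z,\omega)=\rho(z,\omega')]$ by partitioning into the two agreement events (both $\vx$- and $\vy$-factors positive, or both negative), combines the two cases, and only then converts probability to expectation. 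The content is the same, but your route avoids the case split and the probability-to-expectation conversion, and it also surfaces explicitly (and correctly justifies) the one measure-zero subtlety needed for the sign factorization. Either version is fine; yours is a touch shorter.
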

\begin{proof}
To begin, we compute a closely related property -- the probability that the signs are equal:
\begin{align}
\Pr_{\vz \sim \bar\tau}\left[\rho(\vz, \omega) = \rho(\vz, \omega') \right]
&= \Pr_{\vz \sim \bar\tau}\left[\inner{\vx, \vk}\inner{\vq, \vy}\inner{\vx, \vk'}\inner{\vq', \vy} > 0\right]\\
\end{align}
Let $\theta$ be the angle between $\vq$ and $\vq'$ and let $\phi$ be the angle between $\vk$ and $\vk'$. We have
\begin{align}
    \Pr_{\vy}[\inner{\vy, \vq}\inner{\vy, \vq'} \geq 0] &= 1 - \frac{\theta}{\pi} \\
    \Pr_{\vx}[\inner{\vx, \vk}\inner{\vx, \vk'} \geq 0] &= 1 - \frac{\phi}{\pi} \\
    \Pr_{\vx,\vy}[\inner{\vy, \vq}\inner{\vy, \vq'} \geq 0 \wedge \inner{\vx, \vk}\inner{\vx, \vk'} \geq 0] &= \left(1 - \frac{\theta}{\pi}\right)\left(1 - \frac{\phi}{\pi}\right) \\
    \Pr_{\vx,\vy}[\inner{\vy, \vq}\inner{\vy, \vq'} \leq 0 \wedge \inner{\vx, \vk}\inner{\vx, \vk'} \leq 0] &= \frac{\theta}{\pi}\frac{\phi}{\pi} \\
    \Pr_{\vx,\vy}\left[
    \inner{\vx, \vk}
    \inner{\vx, \vk'}
    \inner{\vy, \vq}
    \inner{\vy, \vq'}
    \geq 0
    \right] &= \left(1 - \frac{\theta}{\pi}\right)\left(1 - \frac{\phi}{\pi}\right) + \frac{\theta}{\pi}\frac{\phi}{\pi}
\end{align}
A bit of algebra now shows
\begin{align}
\Pr_{\vz \sim \bar\tau}\left[\rho(\vz, \omega) = \rho(\vz, \omega') \right]
&= \left(1 - \frac{\theta}{\pi}\right)\left(1 - \frac{\phi}{\pi}\right) + \frac{\theta}{\pi}\frac{\phi}{\pi}\\
&= \frac12 + \frac2{\pi^2}\left(\frac{\pi}2 - \theta\right)\left(\frac{\pi}2 - \phi\right)
\end{align}
By definition, $\theta = \arccos(\inner{\vq,\vq'})$ and $\phi = \arccos(\inner{\vk, \vk'})$. Using the identity $\arcsin(z) = \pi/2 - \arccos(z)$, we obtain
\begin{equation} \Pr_{\vz \sim \bar\tau}\left[\rho(\vz, \omega) = \rho(\vz, \omega') \right] = \frac12 + \frac2{\pi^2}\arcsin(\vq^\top \vq')\arcsin(\vk^\top \vk') \end{equation}
Finally,
\begin{align}
\E_{\vz \sim \bar\tau}\left[\rho(\vz, \omega)\rho(\vz, \omega') \right]
&= \Pr_{\vz \sim \bar\tau}\left[\rho(\vz, \omega) = \rho(\vz, \omega') \right] - \Pr_{\vz \sim \bar\tau}\left[\rho(\vz, \omega) \neq \rho(\vz, \omega') \right]\\
&= 2\Pr_{\vz \sim \bar\tau}\left[\rho(\vz, \omega) = \rho(\vz, \omega') \right] - 1\\
&= \frac4{\pi^2}\arcsin(\vq^\top \vq')\arcsin(\vk^\top \vk')
\end{align}
\end{proof}

The above lemma gives us a handy expression for $\Tcal^* \Tcal$ that allows to show the following:
\begin{lemma}\label{lem:eigenfunctions}
Let $Y, Y'$ be spherical harmonics of degrees $\ell$ and $\ell'$, respectively. Then $Y\otimes Y'$ is an eigenfunction of the operator $\Tcal^* \Tcal$:
\begin{equation} \Tcal^* \Tcal(Y\otimes Y') =  \frac4{\pi^2}\frac{\alpha_{\ell}\alpha_{\ell'}}{\sqrt{N(d,\ell)N(d,\ell')}} \cdot Y\otimes Y' \end{equation}
\end{lemma}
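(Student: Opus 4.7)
The plan is to compute $\mathcal{T}^*\mathcal{T}(Y \otimes Y')$ directly by applying the integral operator to the tensor product $Y \otimes Y'$, exploiting the product structure of the kernel on $\Omega = \sphere \times \sphere$. Everything reduces to two independent integrals on the sphere, one for each factor, and the Hecke-Funk formula handles each one immediately.

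First, I substitute $u = Y \otimes Y'$ into the formula
\begin{equation} (\mathcal{T}^*\mathcal{T}u)(\omega) = \int_\Omega \E_{z \sim \bar\tau}[\rho(z,\omega)\rho(z,\omega')] u(\omega') d\bar\tau(\omega')~, \end{equation}
and invoke \cref{lem:dot_repr} to replace the inner expectation with $\frac{4}{\pi^2}\arcsin(\vq^\top\vq')\arcsin(\vk^\top\vk')$. Since both the kernel and $u$ factor across the two coordinates, Fubini lets me write
\begin{equation} (\mathcal{T}^*\mathcal{T}(Y \otimes Y'))(\vq,\vk) = \frac{4}{\pi^2}\left(\int_{\sphere} \arcsin(\vq^\top\vq') Y(\vq') d\tau(\vq')\right)\left(\int_{\sphere} \arcsin(\vk^\top\vk') Y'(\vk') d\tau(\vk')\right)~. \end{equation}

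Next, I evaluate each factor. Since $Y$ is a spherical harmonic of degree $\ell$ and $\arcsin$ is a bounded function on $[-1,1]$, the Hecke-Funk formula \eqref{eq:hecke_funk} gives
\begin{equation} \int_{\sphere} \arcsin(\vq^\top\vq') Y(\vq') d\tau(\vq') = Y(\vq) \cdot \inner{\arcsin, P_\ell}_{u_d}~. \end{equation}
From \cref{def:geg_coeffs}, the ultraspherical expansion of $\arcsin$ is $\sum_k \alpha_k P_k / \|P_k\|_{u_d}$, so by orthogonality of the $P_k$ with respect to $\inner{\cdot,\cdot}_{u_d}$, $\inner{\arcsin, P_\ell}_{u_d} = \alpha_\ell \|P_\ell\|_{u_d}$. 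Combining with the normalization $\|P_\ell\|_{u_d}^2 = 1/N(d,\ell)$ from \eqref{eq:geg_norm} yields
\begin{equation} \int_{\sphere} \arcsin(\vq^\top\vq') Y(\vq') d\tau(\vq') = \frac{\alpha_\ell}{\sqrt{N(d,\ell)}} Y(\vq)~, \end{equation}
and similarly for the $\vk$ integral with $\ell'$.

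Multiplying the two factors and the $4/\pi^2$ together gives exactly $\frac{4}{\pi^2} \frac{\alpha_\ell \alpha_{\ell'}}{\sqrt{N(d,\ell)N(d,\ell')}} \cdot Y(\vq)Y'(\vk)$, which is the claimed eigenvalue times $(Y \otimes Y')(\omega)$. There is no real obstacle here: the main ingredients (\cref{lem:dot_repr}, Hecke-Funk, and the two identities for ultraspherical polynomials) are all in place, and the only care needed is to match the normalization conventions so that the coefficient $\alpha_\ell/\sqrt{N(d,\ell)}$ emerges correctly rather than an extra factor of $\|P_\ell\|_{u_d}$.
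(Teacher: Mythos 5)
Your proposal is correct and follows essentially the same path as the paper's proof: substitute into the integral formula for $\Tcal^*\Tcal$, invoke \cref{lem:dot_repr} to get the product kernel $\frac{4}{\pi^2}\arcsin(\vq^\top\vq')\arcsin(\vk^\top\vk')$, factor the integral via Fubini, and apply Hecke-Funk together with $\|P_\ell\|_{u_d}^2 = 1/N(d,\ell)$ to each factor. The only cosmetic difference is that you are slightly more explicit about unpacking $\inner{\arcsin, P_\ell}_{u_d} = \alpha_\ell\|P_\ell\|_{u_d}$ from \cref{def:geg_coeffs}.
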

\begin{proof}
It is easily seen that
\begin{equation} (\Tcal^* f)(\cdot) = \int_{\Xcal} \rho(z,\cdot) f(z)d\bar\tau(z) \end{equation}
and thus, substituting and changing the order of integration
\begin{align}
[\Tcal^* \Tcal(Y\otimes Y')](\omega)
= \int_{\Omega} \E_{z \sim \bar\tau} [\rho(z, \omega)\rho(z, \omega')]\cdot (Y \otimes Y')(\omega')d\bar\tau(\omega')
\end{align}
Applying \cref{lem:dot_repr} and expanding $d\bar\tau(\omega)$ and $Y\otimes Y'$,
\begin{align}
&= \frac4{\pi^2} \int_{\Omega} \arcsin(\vq^\top \vq')\arcsin(\vk^\top \vk')\cdot (Y \otimes Y')(\omega')d\bar\tau(\omega')\\
&= \frac4{\pi^2} \int_{\sphere} \arcsin(\vq^\top \vq') Y(\vq')d\tau(\vq') \cdot \int_{\sphere} \arcsin(\vk^\top \vk') Y'(\vk')d\tau(\vk') 
\end{align}
Applying the Hecke-Funke formula (\cref{eq:hecke_funk}) to the first integral,
\begin{align}
\int_{\sphere} \arcsin(\vq^\top \vq') Y(\vq')d\tau(\vq')
&= Y(\vq) \inner{\arcsin, P_\ell}_{u_d}\\
&= Y(\vq) \inner{\arcsin, \frac{P_\ell}{\|P_\ell\|_{u_d}}}_{u_d} \cdot \|P_\ell\|_{u_d}\\
&= Y(\vq) \frac{\alpha_\ell}{\sqrt{N(d,\ell)}}
\end{align}
By the same logic, the second integral equals $Y'(\vk') \cdot \alpha_{\ell'} / \sqrt{N(d,\ell)}$.
Combining these proves the lemma.
\end{proof}

The previous lemma immediately implies that the functions $\Tcal(Y \otimes Y')$ form an orthogonal basis:
\begin{lemma}
\label{lem:ortho_basis}
Let $B$ be a set of orthonormal spherical harmonics.
Then the elements of $\{\Tcal(Y \otimes Y') \mid Y, Y' \in B\}$ are also orthogonal.
Furthermore, if $Y$ and $Y'$ have degrees $\ell$ and $\ell'$, then
\begin{equation} \|\Tcal(Y \otimes Y')\|_{\bar\tau}^2 = \frac4{\pi^2}\frac{\alpha_{\ell}\alpha_{\ell'}}{\sqrt{N(d,\ell)N(d,\ell')}} \end{equation}
\end{lemma}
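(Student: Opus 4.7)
The plan is to reduce the claim to \cref{lem:eigenfunctions} via the adjoint relationship. Since the kernel $\rho$ is bounded by $1$, the operator $\mathcal{T}$ extends to a bounded operator between $L^2(\Omega,\bar\tau)$ and $L^2(\mathcal{X},\bar\tau)$, with adjoint $\mathcal{T}^*$ given (as already used in the proof of \cref{lem:eigenfunctions}) by
\begin{equation}
(\mathcal{T}^* f)(\omega) = \int_{\mathcal{X}} \rho(z,\omega)\,f(z)\,d\bar\tau(z)~.
\end{equation}
In particular, for any $u,v \in L^2(\Omega,\bar\tau)$ we have the standard identity $\langle \mathcal{T}u, \mathcal{T}v\rangle_{\bar\tau} = \langle u, \mathcal{T}^* \mathcal{T} v\rangle_{\bar\tau}$.

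First I would apply this identity to $u = Y_1 \otimes Y_1'$ and $v = Y_2 \otimes Y_2'$, where $Y_1, Y_1', Y_2, Y_2' \in B$ and $Y_2, Y_2'$ have degrees $\ell_2, \ell_2'$ respectively. By \cref{lem:eigenfunctions},
\begin{equation}
\mathcal{T}^* \mathcal{T}(Y_2 \otimes Y_2') = \frac{4}{\pi^2}\frac{\alpha_{\ell_2}\alpha_{\ell_2'}}{\sqrt{N(d,\ell_2)N(d,\ell_2')}}\,(Y_2 \otimes Y_2')~,
\end{equation}
so the inner product becomes a scalar multiple of $\langle Y_1 \otimes Y_1', Y_2 \otimes Y_2'\rangle_{\bar\tau}$. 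By the product structure of $\bar\tau = \tau \otimes \tau$, this factorizes as $\langle Y_1, Y_2\rangle_\tau \langle Y_1', Y_2'\rangle_\tau$, which vanishes whenever $(Y_1,Y_1') \neq (Y_2,Y_2')$ by orthonormality of $B$. This establishes orthogonality.

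For the norm, I would specialize the same computation to $Y_1 = Y_2 = Y$ and $Y_1' = Y_2' = Y'$. Then $\langle Y \otimes Y', Y \otimes Y'\rangle_{\bar\tau} = \|Y\|_\tau^2 \|Y'\|_\tau^2 = 1$, so
\begin{equation}
\|\mathcal{T}(Y \otimes Y')\|_{\bar\tau}^2 = \frac{4}{\pi^2}\frac{\alpha_{\ell}\alpha_{\ell'}}{\sqrt{N(d,\ell)N(d,\ell')}}~,
\end{equation}
as claimed. I do not anticipate any real obstacle here: the heavy lifting was done in \cref{lem:dot_repr} and \cref{lem:eigenfunctions}, and this lemma is essentially the bookkeeping step that records their consequences for the basis $\{\mathcal{T}(Y\otimes Y')\}$. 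The only mild technicality is justifying Fubini and the adjoint formula, which follows from the boundedness of $\rho$ and the fact that spherical harmonics are bounded (and hence in every $L^p$) on the compact space $\Omega$.
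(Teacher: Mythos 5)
Your proposal is correct and takes essentially the same route as the paper's proof: apply the adjoint identity $\langle \Tcal u, \Tcal v\rangle_{\bar\tau} = \langle u, \Tcal^*\Tcal v\rangle_{\bar\tau}$, invoke \cref{lem:eigenfunctions} to replace $\Tcal^*\Tcal(Y\otimes Y')$ with the corresponding eigenvalue times $Y\otimes Y'$, and then use the factorization $\langle Y_1\otimes Y_1', Y_2\otimes Y_2'\rangle_{\bar\tau} = \langle Y_1,Y_2\rangle_\tau\langle Y_1',Y_2'\rangle_\tau$ together with orthonormality of $B$. The extra remarks on boundedness and Fubini are fine but not needed beyond what the paper already implicitly assumes.
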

\begin{proof}
Let $Y_i, Y_j, Y_{i'}, Y_{j'} \in B$.
Let $Y_i'$ have degree $\ell$ and $Y_j'$ have degree $\ell'$.
Then
\begin{align}
\inner{\Tcal(Y_i \otimes Y_j), \Tcal(Y_{i'} \otimes Y_{j'})}
&= \inner{Y_i \otimes Y_j, \Tcal^* \Tcal(Y_{i'} \otimes Y_{j'})}\\
&= \inner{Y_i \otimes Y_j, Y_{i'} \otimes Y_{j'}} \cdot \frac4{\pi^2}\frac{\alpha_{\ell}\alpha_{\ell'}}{\sqrt{N(d,\ell)N(d,\ell')}}
\end{align}
But $\inner{Y_i \otimes Y_j, Y_{i'} \otimes Y_{j'}}$ is one if $Y_i = Y_{i'}$ and $Y_j = Y_{j'}$, and zero otherwise.
\end{proof}

\subsection{Expansion of the Target Function}
\label{sec:target_expansion}
We define a surrogate target function that will turn out to be the relevant one for our analysis.
\begin{definition}
\label{def:surrogate_target}
The surrogate target function $\tilde f: \Xcal \to \R$ is
\begin{equation} \tilde f(z) := \sign(\vx^\top \vy) \end{equation}
\end{definition}

After a change of variables $(\vx, \vw) = (\vx_1 - \vx_2, \vx_1 + \vx_2)$, our original target function reduces simply to $\tilde f(z) \vx + \vw$.
We now wish to expand $\tilde f$ in the basis $\{\Tcal(Y \otimes Y')\}$.
We will first need the following lemma, which describes the correlation of a rank-1 head with the surrogate target function.

\begin{lemma}
\label{lem:target_T}
Fix $\omega = (\vq, \vk) \in \Omega$.
Then
\begin{equation}\inner{\tilde f, \rho(\cdot, \omega)}_{\bar\tau} = \sum_{\ell=0}^{\infty} c_\ell P_\ell(\vq^\top \vk) \end{equation}
where
\begin{equation} c_\ell = \frac2{\pi} \eta_\ell \alpha_\ell \end{equation}
\end{lemma}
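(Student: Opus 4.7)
The plan is to reduce the two-dimensional integral to a one-dimensional one by exploiting the multiplicativity of $\sign$ and then expanding in the ultraspherical/spherical harmonic basis. First, I would rewrite the rank-1 head as a product that separates the $\vx$ and $\vy$ dependencies:
\begin{equation}
\rho(z,\omega) = \sign(\vx^\top \vk)\,\sign(\vq^\top \vy)~,
\end{equation}
using $\sign(ab)=\sign(a)\sign(b)$ for scalars. By Fubini, pulling $\sign(\vx^\top\vk)$ out of the $\vy$-integral gives
\begin{equation}
\inner{\tilde f,\rho(\cdot,\omega)}_{\bar\tau} = \int_{\sphere}\sign(\vx^\top\vk)\left[\int_{\sphere}\sign(\vx^\top\vy)\sign(\vq^\top\vy)\,d\tau(\vy)\right] d\tau(\vx)~.
\end{equation}

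Next I would evaluate the inner integral. This is precisely the two-sign computation already carried out in the proof of \cref{lem:dot_repr}: the probability that $\sign(\vx^\top\vy)$ and $\sign(\vq^\top\vy)$ agree over $\vy\sim\tau$ equals $1-\theta/\pi$ where $\theta=\arccos(\vx^\top\vq)$, which yields
\begin{equation}
\int_{\sphere}\sign(\vx^\top\vy)\sign(\vq^\top\vy)\,d\tau(\vy) = \tfrac{2}{\pi}\arcsin(\vx^\top\vq)~.
\end{equation}
So the problem reduces to computing $\frac{2}{\pi}\int \sign(\vx^\top\vk)\arcsin(\vx^\top\vq)\,d\tau(\vx)$.

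For the remaining single integral I would invoke the ultraspherical expansion $\arcsin(t)=\sum_\ell \alpha_\ell P_\ell(t)/\|P_\ell\|_{u_d}$ (with $\|P_\ell\|_{u_d}=1/\sqrt{N(d,\ell)}$) and convert each $P_\ell(\vx^\top\vq)$ into spherical harmonics via the addition formula (\cref{eq:addition}). Then, for each fixed $(\ell,j)$, the Hecke-Funk formula (\cref{eq:hecke_funk}) gives
\begin{equation}
\int_{\sphere} \sign(\vx^\top\vk)\, Y_\ell^j(\vx)\, d\tau(\vx) = Y_\ell^j(\vk)\,\inner{\sign,P_\ell}_{u_d} = Y_\ell^j(\vk)\,\tfrac{\eta_\ell}{\sqrt{N(d,\ell)}}~.
\end{equation}
Summing over $j$ and applying the addition formula in reverse turns $\frac{1}{N(d,\ell)}\sum_j Y_\ell^j(\vk)Y_\ell^j(\vq)$ back into $P_\ell(\vq^\top\vk)$, and the $N(d,\ell)$ factors telescope to leave simply $\alpha_\ell\eta_\ell$ as the coefficient. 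Multiplying by the outer $\frac{2}{\pi}$ yields $c_\ell = \frac{2}{\pi}\eta_\ell\alpha_\ell$ as claimed.

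The only technical obstacle is justifying the term-by-term integration of the $\arcsin$ expansion. Since $\arcsin$ is bounded on $[-1,1]$, its ultraspherical series converges in $L^2([-1,1],u_d)$, which via the addition formula transfers to an $L^2(\tau)$ convergent expansion of $\vx\mapsto\arcsin(\vx^\top\vq)$; pairing with the bounded function $\sign(\vx^\top\vk)$ then makes the interchange of sum and integral immediate by Cauchy-Schwarz. Everything else is a mechanical bookkeeping of the normalization constants $\|P_\ell\|_{u_d}$ and $N(d,\ell)$, which cancel to produce the clean expression $c_\ell=\frac{2}{\pi}\eta_\ell\alpha_\ell$.
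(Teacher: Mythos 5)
Your proof is correct and follows essentially the same route as the paper's: both reduce $\inner{\tilde f,\rho(\cdot,\omega)}_{\bar\tau}$ to $\frac{2}{\pi}\E_{\vx\sim\tau}[\sign(\vx^\top\vk)\arcsin(\vx^\top\vq)]$ using the two-sign agreement probability, and then expand via Hecke--Funk and the addition formula. Your use of $\sign(\vx^\top\vk\vq^\top\vy)=\sign(\vx^\top\vk)\sign(\vq^\top\vy)$ plus Fubini is a mild streamlining of the paper's hemisphere decomposition, but the ingredients and the structure of the calculation are identical.
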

\begin{proof}
By definition,
\begin{align}
\inner{\tilde f, \rho(\cdot, \omega)}_{\bar\tau}
= \E_{\vx, \vy \sim \tau}\left[\sign(\vx^\top \vy)\sign(\vx^\top \vk \vq^\top \vy) \right]
\end{align}
Let $\tau_+$ denote the uniform measure on the hemisphere $\{ \vx \in \sphere \mid \vx^\top\vk \geq 0\}$, and $\tau_-$ the uniform measure on the opposite hemisphere.
Then we can decompose the expectation as follows:
\begin{align}
\E_{\vx, \vy \sim \tau}[\sign(\vx^\top \vy)\sign(\vx^\top \vk \vq^\top \vy)]
&= \frac12 \E_{\substack{\vx \sim \tau_+\\\vy \sim \tau}}[\sign(\vx^\top \vy)\sign(\vq^\top \vy)]\\
&- \frac12 \E_{\substack{\vx \sim \tau_-\\\vy \sim \tau}}[\sign(\vx^\top \vy)\sign(\vq^\top \vy)]
\end{align}
Given any fixed unit vectors $\vx$, $\vq$ we have that 
\begin{equation}
    \Pr_{\vy} [\sign(\vx^\top \vy) = \sign(\vq^\top \vy)] = 1 - \frac{\arccos (\vx^\top \vq)}{\pi}
\end{equation}
Therefore,
\begin{align}
\E_{\vy}[\sign(\vx^\top \vy) \sign(\vq^\top \vy)]
&= \Pr_{\vy} [\sign(\vx^\top \vy) = \sign(\vq^\top \vy)]
- \Pr_{\vy} [\sign(\vx^\top \vy) \neq \sign(\vq^\top \vy)]\\
&= 2\Pr_{\vy} [\sign(\vx^\top \vy) = \sign(\vq^\top \vy)] - 1\\
&= 1 - \frac{2\arccos (\vx^\top \vq)}{\pi}
\end{align}
Plugging this into the expression above,
\begin{align}
&= \frac12 \E_{\vx \sim \tau_+}\left[1 - \frac{2\arccos (\vx^\top \vq)}{\pi}\right]
- \frac12 \E_{\vx \sim \tau_-}\left[1 - \frac{2\arccos (\vx^\top \vq)}{\pi}\right]\\
&= -\frac2{\pi}\left(\frac12 \E_{\vx \sim \tau_+}\left[\arccos (\vx^\top \vq)\right] - \frac12\E_{\vx \sim \tau_-}\left[\arccos (\vx^\top \vq)\right]\right)\\
&= -\frac2{\pi}\left(\frac12\E_{\vx \sim \tau_+}\left[\sign(\vx^\top \vk)\arccos (\vx^\top \vq)\right] + \frac12\E_{\vx \sim \tau_-}\left[\sign(\vx^\top \vk)\arccos (\vx^\top \vq)\right]\right)\\
&= -\frac2{\pi}\left(\E_{\vx \sim \tau}\left[\sign(\vx^\top \vk)\arccos (\vx^\top \vq)\right]\right)\\
\end{align}
Using the identity $\arccos(t) = \frac{\pi}2 - \arcsin(t)$ and the fact that $\E_{\vx}[\sign(\vx^\top \vk)] = 0$,
\begin{align}
&= \frac2{\pi} \E_{\vx \sim \tau}\left[\sign(\vx^\top \vk)\arcsin (\vx^\top \vq)\right]\\
&= \frac2{\pi} \Big\langle\sign(\inner{\cdot, \vk}), \arcsin(\inner{\cdot, \vq})\Big\rangle_\tau
\end{align}
We now expand $\sign(\inner{\cdot, \vk})$ and $\arcsin(\inner{\cdot, \vq})$ in a basis of spherical harmonics.
By Hecke-Funk,
\begin{align}
\Big\langle\sign(\inner{\cdot, \vk}), Y_\ell^j\Big\rangle_\tau &= Y_\ell^j(\vk) \inner{\sign, P_\ell}_{u_d} = Y_\ell^j(\vk) \eta_\ell \|P_\ell\|_{u_d} \\
\Big\langle\arcsin(\inner{\cdot, \vq}), Y_\ell^j\Big\rangle_\tau &= Y_\ell^j(\vq) \inner{\arcsin, P_\ell}_{u_d} = Y_\ell^j(\vq) \alpha_\ell \|P_\ell\|_{u_d} \\
\end{align}
Thus, writing the inner product in the basis of spherical harmnoics,
\begin{align}
\frac2{\pi} \Big\langle\sign(\inner{\cdot, \vk}), \arcsin(\inner{\cdot, \vq})\Big\rangle_\tau
&= \frac2{\pi} \sum_{\ell=0}^{\infty}\sum_{j=1}^{N(d, \ell)} \left(Y_\ell^j(\vk) \eta_\ell \|P_\ell\|_{u_d}\right)\left(Y_\ell^j(\vq) \alpha_\ell \|P_\ell\|_{u_d}\right)\\
&= \frac2{\pi} \sum_{\ell=0}^{\infty}\left( \eta_\ell \alpha_\ell  \|P_\ell\|_{u_d}^2 \sum_{j=1}^{N(d, \ell)} Y_\ell^j(\vk) Y_\ell^j(\vq)\right)
\end{align}
Applying the addition formula (\cref{eq:geg_norm}),
\begin{align}
&= \frac2{\pi} \sum_{\ell=0}^{\infty}\eta_\ell \alpha_\ell  \|P_\ell\|_{u_d}^2 N(d,\ell) P_\ell(\vk^\top \vq)\\
&= \sum_{\ell=0}^{\infty}\frac2{\pi} \eta_\ell \alpha_\ell  P_\ell(\vk^\top \vq)\\
\end{align}
\end{proof}

We now expand our surrogate target function $\tilde f$ in our basis $\{\Tcal(Y \otimes Y') \}$.
The following lemma shows that $\tilde f$ is orthogonal to any basis element for which $Y \neq Y'$, and that the coefficient of $\Tcal(Y \otimes Y')$ only depends only on the degree of $Y$.
That is, the energy of $\tilde f$ is evenly spread across all elements of $\{\Tcal(Y_\ell \otimes Y_\ell) \mid Y_\ell \in \Fcal_\ell\}$.
\begin{lemma}
\label{lem:target_expansion}
Let $Y, Y'$ be spherical harmonics of odd degree.
Let $\ell$ be the degree of $Y$.
Then
\begin{equation} \inner{\tilde f, \frac{\Tcal(Y \otimes Y')}{\|\Tcal(Y \otimes Y')\|_{\bar\tau}}}_{\bar\tau} = \frac{\eta_{\ell}}{\sqrt{N(d,\ell)}} \delta_{Y,Y'} \end{equation}
where $\delta_{Y,Y'} = \ones[Y = Y']$.
That is, if the basis element is built from two identical spherical harmonics of degree $\ell$, then its correlation with the target function depends only on $\ell$; otherwise it is zero.
\end{lemma}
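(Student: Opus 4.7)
The plan is to start from the definition of $\Tcal$ and interchange the order of integration via Fubini. Specifically,
\begin{equation}
\inner{\tilde f, \Tcal(Y\otimes Y')}_{\bar\tau} = \int_{\Xcal}\tilde f(z)\int_{\Omega}\rho(z,\omega)(Y\otimes Y')(\omega)d\bar\tau(\omega)\,d\bar\tau(z) = \int_{\Omega}(Y\otimes Y')(\omega)\,\inner{\tilde f, \rho(\cdot,\omega)}_{\bar\tau}\,d\bar\tau(\omega).
\end{equation}
The inner correlation has just been computed in \cref{lem:target_T} as $\sum_{\ell''\geq 0} c_{\ell''}P_{\ell''}(\vq^\top\vk)$ with $c_{\ell''}=\frac{2}{\pi}\eta_{\ell''}\alpha_{\ell''}$. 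I would substitute this expansion into the outer integral and then immediately invoke the addition formula (\cref{eq:addition}) to split the zonal $P_{\ell''}(\vq^\top\vk)$ into a sum of products $Y_{\ell''}^j(\vq)Y_{\ell''}^j(\vk)$, at which point the integral over $\Omega=\sphere\times\sphere$ factorizes as a product of two integrals on the sphere, namely $\int_\sphere Y(\vq)Y_{\ell''}^j(\vq)d\tau(\vq)\cdot \int_\sphere Y'(\vk)Y_{\ell''}^j(\vk)d\tau(\vk)$.

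Now I would apply orthonormality of spherical harmonics. If $Y$ has degree $\ell$ and $Y'$ has degree $\ell'$, the first factor forces $\ell''=\ell$ and the second forces $\ell''=\ell'$, so the whole thing vanishes unless $\ell=\ell'$. When $\ell=\ell'$, summing over $j\in[N(d,\ell)]$ produces $\inner{Y,Y'}_\tau$, which equals $\delta_{Y,Y'}$ under the implicit convention that $Y,Y'$ are drawn from a common orthonormal basis (as in \cref{lem:ortho_basis}). Putting everything together,
\begin{equation}
\inner{\tilde f, \Tcal(Y\otimes Y')}_{\bar\tau} = \frac{c_\ell}{N(d,\ell)}\,\delta_{Y,Y'} = \frac{2\eta_\ell\alpha_\ell}{\pi N(d,\ell)}\,\delta_{Y,Y'}.
\end{equation}

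Finally, I would normalize using \cref{lem:ortho_basis}: when $Y=Y'$ has degree $\ell$, $\|\Tcal(Y\otimes Y)\|_{\bar\tau} = \tfrac{2\alpha_\ell}{\pi\sqrt{N(d,\ell)}}$. Dividing cancels the factor of $\alpha_\ell$ and yields exactly $\eta_\ell/\sqrt{N(d,\ell)}$, as claimed. The hypothesis that the degree is odd is what legitimizes this division: since $\arcsin$ and $\sign$ are odd functions, $\alpha_\ell$ and $\eta_\ell$ vanish for even $\ell$, so restricting to odd $\ell$ ensures that $\|\Tcal(Y\otimes Y)\|_{\bar\tau}\neq 0$ and that the final coefficient is nonzero and meaningful. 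The only nonroutine step I anticipate is justifying the interchange of the infinite $\ell''$-sum with the integration; this follows from the $L^2(u_d)$-convergence of the ultraspherical expansion of $\arcsin$ combined with the fact that $Y\otimes Y'$ is bounded on the compact domain $\Omega$, so dominated convergence or Parseval applied to the finite-energy kernel handles it cleanly.
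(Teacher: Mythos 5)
Your proof is correct and follows essentially the same route as the paper: interchange the order of integration, substitute the zonal-kernel expansion from Lemma~\ref{lem:target_T}, reduce the $\Omega$-integral to spherical-harmonic orthogonality, and normalize by $\|\Tcal(Y\otimes Y)\|_{\bar\tau}$ from Lemma~\ref{lem:ortho_basis}. The only minor variation is that you decompose $P_{\ell''}(\vq^\top\vk)$ via the addition formula and apply orthonormality in both $\vq$ and $\vk$ symmetrically, whereas the paper applies the Hecke--Funk formula to the $\vk$-integral; the two are interchangeable here and yield the same collapse of the $\ell''$-sum.
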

\begin{proof}
Expanding, switching the order of the integrals, and applying \cref{lem:target_T},
\begin{align}
\inner{\tilde f, \Tcal(Y \otimes Y')}_{\bar \tau}
&= \int_{\Xcal} \int_{\Omega} \tilde f(\vz) \rho(\vz, \omega)(Y \otimes Y')(\omega) d\bar\tau(\omega) d\bar\tau(\vz)\\
&= \int_{\Omega} \inner{\tilde f, \rho(\cdot, \omega)}_{\bar\tau} (Y \otimes Y')(\omega) d\bar\tau(\omega)\\
&= \sum_{\ell'=0}^{\infty} c_{\ell'} \int_{\Omega} P_{\ell'}(\vq^\top \vk) (Y \otimes Y')(\omega) d\bar\tau(\omega)
\end{align}
Expanding the integral over $\Omega$ and applying Hecke-Funk (\cref{eq:hecke_funk}),
\begin{align}
&= \sum_{\ell'=0}^{\infty} c_{\ell'} \int_{\sphere}\int_{\sphere} P_{\ell'}(\vq^\top \vk) Y'(\vk) Y(\vq) d\tau(\vk) d\tau(\vq)\\
&= \sum_{\ell'=0}^{\infty} c_{\ell'} \int_{\sphere}\Big(Y'(\vq) \inner{P_{\ell'}, P_{\ell'}}_{u_d}\Big) Y(\vq) d\tau(\vq)\\
&= \sum_{\ell'=0}^{\infty} c_{\ell'} \|P_{\ell'}\|_{u_d}^2 \inner{Y, Y'}_{\tau}\\
&= \frac{c_{\ell}}{N(d,\ell)}
\end{align}
Finally, applying the formula for $c_\ell$ from \cref{lem:target_T} and the formula for $\|\Tcal(Y \otimes Y')\|_\tau$ from \cref{lem:ortho_basis},
\begin{align}
\inner{\tilde f, \frac{\Tcal(Y \otimes Y)}{\|\Tcal(Y \otimes Y)\|_{\bar\tau}}}_{\bar\tau}
= \frac{c_\ell}{N(d,\ell)} \cdot \frac{1}{\|\Tcal(Y \otimes Y)\|_{\bar\tau}}
= \frac{\frac2{\pi} \eta_\ell \alpha_\ell}{N(d,\ell)} \cdot \frac1{\sqrt{\frac4{\pi^2}\alpha_{\ell(i)}^2 /N(d,\ell)}}
= \frac{\eta_\ell}{\sqrt{N(d,\ell)}}
\end{align}
\end{proof}

Up to now, we have constructed a basis without showing that its span includes our target function.
\cref{lem:target_in_image} (in \cref{sec:random_features_proof}) verifies that, in fact, $\tilde f$ lies in this span.
This lemma is not needed for the proof of \cref{thm:invariant_lower}, but is used in the kernel approximation of \cref{sec:random_features_proof}.
It also shows that this step of the proof is tight.
We do not lose anything by lower bounding the error only on the part of $\tilde f$ that lies in the span of our basis functions.

\subsection{Expansion of the Head Functions}
In this section, we expand the low-rank attention head function in our basis $\{\Tcal(Y \otimes Y')\}$.
Unlike the target function, the energy of an attention head is not spread out, but concentrated on a few basis elements in each harmonic.
We first need the following lemma, which we will use to bound the number of these special basis elements.

\label{sec:head_expansion}
\begin{lemma}
\label{lem:num_marginal}
Let $\Acal_\ell$ be the span of the harmonics of degree $\ell$ on $\sphere$ that are zero after marginalizing onto the first $r$ coordinates.
Then
\begin{equation} \dim(\Fcal_\ell / \Acal_\ell) := M(r, \ell) \leq \binom{r+\ell}{\ell} \end{equation}
where $\Fcal_\ell / \Acal_\ell$ is the orthogonal complement of $\Acal_\ell$ in $\Fcal_\ell$.
Furthermore, $M(1,\ell) = 1$.
\end{lemma}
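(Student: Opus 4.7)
My plan is to compute the marginal map onto the first $r$ coordinates explicitly and then bound the dimension of its image by counting polynomials in $r$ variables. Throughout, I identify $\Fcal_\ell$ with the space of harmonic homogeneous polynomials of degree $\ell$ on $\R^d$, and I split coordinates as $\vx = (\vx', \vx'') \in \R^r \times \R^{d-r}$. By definition, $\Acal_\ell = \ker(M)$ for the marginalization operator $M$, so $M(r, \ell) = \dim(\mathrm{Im}\,M)$, and I only need an upper bound on the latter.

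\textbf{General bound.} First I would expand a typical $H \in \Fcal_\ell$ as $H(\vx',\vx'') = \sum_{|\alpha'|+|\alpha''| = \ell} c_{\alpha',\alpha''}(\vx')^{\alpha'}(\vx'')^{\alpha''}$. Substituting $\vx'' = \sqrt{1-\|\vx'\|^2}\,\vu$ with $\vu \in \mathbb{S}^{d-r-1}$ and integrating, the key observation is that $\int_{\mathbb{S}^{d-r-1}} \vu^{\alpha''} d\sigma(\vu) = 0$ unless every component of $\alpha''$ is even (so in particular $|\alpha''|$ is even). After integration, the marginal takes the form
\begin{equation}
m(\vx') = \sum_{\substack{0 \leq k \leq \ell \\ k \equiv \ell\,(\mathrm{mod}\,2)}} (1-\|\vx'\|^2)^{(d-r-1+\ell-k)/2}\, P_k(\vx')~,
\end{equation}
where each $P_k$ is homogeneous of degree $k$ in $r$ variables and depends linearly on $H$. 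Hence $\mathrm{Im}\,M$ lies in the span of the functions $(\vx')^{\alpha'}(1-\|\vx'\|^2)^{(d-r-1+\ell-|\alpha'|)/2}$ with $|\alpha'|\leq \ell$. Counting such multi-indices and applying the hockey-stick identity yields
\begin{equation}
M(r,\ell) \leq \sum_{\substack{0\leq k\leq \ell \\ k \equiv \ell\,(\mathrm{mod}\,2)}} \binom{r+k-1}{k} \leq \sum_{k=0}^\ell \binom{r+k-1}{k} = \binom{r+\ell}{\ell}~.
\end{equation}

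\textbf{Case $r = 1$.} The bound above only gives $\ell + 1$ in this case, so a separate argument is needed. I would invoke the Hecke--Funk formula~\eqref{eq:hecke_funk} applied at $\ve_1$: for every test function $f$ on $[-1,1]$,
\begin{equation}
\int_{\sphere} Y_\ell(\vy)\,f(\inner{\ve_1,\vy})\, d\tau(\vy) = Y_\ell(\ve_1)\,\inner{f,P_\ell}_{u_d}~.
\end{equation}
Writing the left-hand side by marginalization as $\int_{-1}^1 m(t)\,f(t)\,u_d(t)\,dt$ and matching against the right-hand side forces $m(t) = Y_\ell(\ve_1)\,P_\ell(t)$. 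Thus every marginal is a scalar multiple of $P_\ell$, so $M(1,\ell) \leq 1$; the reverse inequality is immediate by choosing any $Y_\ell$ with $Y_\ell(\ve_1) \neq 0$ (e.g., the zonal harmonic).

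\textbf{Main obstacle.} The computation is elementary once the coordinate split is in place; the only real subtlety is tracking the parity constraint $|\alpha'| \equiv \ell \pmod 2$, which is what prevents the count from blowing up by a factor of two and is genuinely needed for the downstream head-expansion bounds. The bound $\binom{r+\ell}{\ell}$ ignores both this parity constraint and the harmonicity of $H$, so it is not tight (as the $r=1$ case already illustrates), but it is sufficient for the use in \cref{lem:ortho_basis} and the rest of the proof of \cref{thm:invariant_lower}.
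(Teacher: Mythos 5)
Your proof is correct and takes essentially the same route as the paper's: split coordinates, push a harmonic through the marginalization map, observe that the image lies in polynomials of degree at most~$\ell$ in $r$ variables, and count. The one place you genuinely diverge is the $r=1$ case: you apply the Hecke--Funk formula once and directly read off $m(t) = Y_\ell(\ve_1)P_\ell(t)$, whereas the paper constructs a zonal harmonic $Y_1 = P_\ell(\inner{\hat\ve_1,\cdot})$, completes it to an orthonormal basis of $\Fcal_\ell$, and tests $\mathcal{L}Y_j$ against each $P_{\ell'}$ to show it vanishes for $j\ne 1$. Your version is more direct and closes the loop in one line, whereas the paper's argument makes the ``rank one, nullspace of dimension $N(d,\ell)-1$'' structure more explicit. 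Two small inaccuracies worth noting, neither of which affects the result: the exponent you write, $(1-\|\vx'\|^2)^{(d-r-1+\ell-k)/2}$, should be $(1-\|\vx'\|^2)^{(\ell-k)/2}$ if $\mathcal{L}$ is the conditional expectation as in the paper (the extra $(d-r-1)/2$ looks like an inadvertently imported marginal-density factor); and your closing remark that the parity constraint is ``genuinely needed for the downstream head-expansion bounds'' is not right --- the bound actually used downstream is $\binom{r+\ell}{\ell}$, which your own hockey-stick step derives without the parity restriction, so parity is a tightening you discard rather than a necessary ingredient.
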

\begin{proof}
Let $\mathcal L : \mathcal F_\ell \to L^2(B_r)$ be the linear operator which marginalizes a degree $\ell$ spherical harmonic function on the first $r$ coordinates. (Here, $B_r$ is the unit $r$-ball.)
That is,
\begin{equation} (\mathcal Lf)(\vx) := \E_{\vy \sim \mathbb S^{d-r-1}} f\left(\begin{bmatrix}\vx \\ \vy \sqrt{1-\|\vx\|^2}\end{bmatrix}\right) \end{equation}
By definition, $\Acal_\ell$ is the null space of $\mathcal L$.
We will show below that the range of $\mathcal L$ contains only polynomials of the first $r$ coordinates of degree at most $\ell$.
The dimension of the space of polynomials in dimension $r$ of degree at most $\ell$ is $\binom{r+\ell}{\ell}$. Thus, by the rank-nullity theorem,
\begin{align}
\dim(\Fcal_\ell) \leq \dim(\Acal_\ell) + \binom{r+\ell}{\ell}
\end{align}
and therefore
\begin{align}
\dim(\Fcal_\ell / \Acal_\ell) = \dim(\Fcal_\ell) - \dim(\Acal_\ell) \leq \binom{r+\ell}{\ell}
\end{align}

We will now show that the range of $\mathcal L$ contains only polynomials in the first $r$ coordinates of degree at most $\ell$.
Each spherical harmonic is the restriction to $\sphere$ of a harmonic homogeneous polynomial on $\R^d$, so it suffices to show that $\mathcal L$ maps monomials of degree exactly $\ell$ in $\R^d$ to polynomials of degree at most $\ell$ in the first $r$ coordinates.
Let
\begin{equation}Y\left(\begin{bmatrix}\vx \\ \vy\end{bmatrix}\right) := x_1^{p_1}\cdots x_r^{p_r} y_{r+1}^{p_{r+1}}\cdots y_d^{p_d} = \left(\prod_{i=1}^r x_i^{p_i}\right)\left(\prod_{i=r+1}^d y_i^{p_i}\right) \end{equation}
be one such monomial.
If any of $p_{r+1}, \ldots, p_d$ is odd, then $L[Y] = 0$.
If all are even, then
\begin{align}
L[Y](\vx)
&= \left(\prod_{i=1}^r x_i^{p_i}\right)\left(\E_{\vy \sim \mathbb S^{d-r-1}}\prod_{i=r+1}^d \left(y_i \sqrt{1-\|\vx\|^2}\right)^{p_i}\right)\\
&= \left(\prod_{i=1}^r x_i^{p_i}\right)\left(\prod_{i=r+1}^d \left(1-\|\vx\|^2\right)^{p_i/2}\right)\left(\E_{\vy \sim \mathbb S^{d-r-1}}\prod_{i=r+1}^d y_i^{p_i}\right)
\end{align}
is a polynomial in $\vx$ whose highest degree term has degree $\left(\sum_{i=1}^r p_i\right) + \left(\sum_{i=r+1}^d p_i\right)$, which equals the degree of the original monomial.
    
For the special case of $r=1$, it suffices to show that $\mathcal L$ has rank one, or equivalently that its nullspace has dimension $N(d,\ell) - 1$.
Let $Y_1 = P_{\ell}(\inner{\hat \ve_1, \cdot})$, where $\hat{\ve}_1 \in \R^d$ is the first standard basis vector.
By Theorem 4.10 of \cite{frye2012spherical}, $Y_1$ is a spherical harmonic of degree $\ell$.
Complete an orthonormal basis $\{Y_1, \ldots Y_{N(d,\ell)}\}$ of $\Fcal_{\ell}$.
Our goal is to show that $\mathcal L Y_j = 0$ for all $j \in \{2, \ldots N(d,\ell)\}$ (with equality in the weak sense).

To do this, it suffices to show that $\inner{P_\ell, \mathcal L Y_j} = 0$ for all $\ell$:
\begin{align}
\inner{P_\ell, \mathcal L Y_j}
&= \E_{x \sim u_d}\left[P_\ell(x)(\mathcal L Y_j)(x)\right]\\
&= \E_{x \sim u_d}\left[P_\ell(x) \E_{\vy \in \mathbb S^{d-2}} Y_j\left(\begin{bmatrix}x \\ \vy \sqrt{1 - |x|^2}\end{bmatrix}\right)\right]\\
&= \E_{\vz \sim \tau} \left[P_\ell(x) Y_j\left(\vz\right)\right]
\end{align}
where $\vz := \begin{bmatrix}x \\ \vy \sqrt{1 - |x|^2}\end{bmatrix} \in \sphere$.
But by definition, $P_\ell(x) = Y_1\left(\begin{bmatrix}x \\ \vy \sqrt{1 - |x|^2}\end{bmatrix}\right)$ for all $\vy \in \mathbb S^{d-2}$. Continuing from above,
\begin{equation}
= \E_{\vz \sim \tau} \left[Y_1(\vz) Y_j(\vz)\right]
= \inner{Y_1, Y_j}_\tau = 0
\end{equation}
for all $j \neq 1$.
\end{proof}

\begin{lemma}
\label{lem:ortho_kills}
Let $\mX$ be a square matrix.
Let $\mathcal D$ be the uniform distribution over orthogonal matrices.
Then,
\begin{equation} \E_{\mQ \sim \mathcal D} [\mQ^\top \mX \mQ] = \mathrm{tr}(\mX) \cdot \mI \end{equation}
\end{lemma}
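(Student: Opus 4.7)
The plan is to exploit the bi-invariance of the Haar measure on the orthogonal group. Set $\mA := \E_{\mQ \sim \mathcal{D}}[\mQ^\top \mX \mQ]$. The first step is to show that $\mA$ commutes with every orthogonal matrix. For any fixed $\mR \in O(d)$, the change of variables $\mQ \mapsto \mQ \mR$ preserves $\mathcal{D}$ by right-invariance of the Haar measure, so
\[
\mR^\top \mA \mR \;=\; \E_{\mQ}\bigl[(\mQ\mR)^\top \mX (\mQ\mR)\bigr] \;=\; \E_{\mQ'}\bigl[(\mQ')^\top \mX \mQ'\bigr] \;=\; \mA,
\]
and consequently $\mA \mR = \mR \mA$ for all $\mR \in O(d)$.

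The second step is to conclude that $\mA$ must be a scalar multiple of the identity. This is a special case of Schur's lemma applied to the standard irreducible representation of $O(d)$, but it can also be checked by hand: applying the commutation relation to permutation matrices forces all diagonal entries of $\mA$ to agree, and applying it to reflections along coordinate axes forces all off-diagonal entries to vanish. Hence $\mA = c\,\mI$ for some scalar $c \in \R$.

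The final step is to identify the scalar by taking traces. By linearity of expectation and cyclicity of the trace,
\[
\operatorname{tr}(\mA) \;=\; \E_{\mQ}\bigl[\operatorname{tr}(\mQ^\top \mX \mQ)\bigr] \;=\; \operatorname{tr}(\mX),
\]
so comparing with $\operatorname{tr}(c\,\mI) = cd$ yields $c = \operatorname{tr}(\mX)/d$. (This suggests that the intended right-hand side of the lemma carries a factor of $1/d$; the argument above delivers the correctly normalized constant regardless.)

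There is no real obstacle here: the whole argument is a textbook application of Haar invariance plus Schur. The only place a novice might slip is confusing left- and right-invariance when performing the change of variables, or forgetting to normalize the trace by $d$ in the final step.
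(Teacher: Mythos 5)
Your proof is correct, and it takes a genuinely different, more conceptual route than the paper. The paper computes entrywise: it writes $\E[\mQ^\top\mX\mQ]_{ij}=\sum_{k,\ell}x_{k\ell}\E[q_{ki}q_{\ell j}]$, kills the off-diagonal terms by a sign-flip symmetry of the Haar measure, and evaluates the surviving $\E[q_{ki}^2]$. You instead observe that $\mA:=\E[\mQ^\top\mX\mQ]$ satisfies $\mR^\top\mA\mR=\mA$ for every orthogonal $\mR$ by right-invariance of Haar measure, invoke Schur's lemma (or the elementary permutation/reflection argument) to get $\mA=c\mI$, and pin down $c$ by taking traces. Your route is shorter, independent of a basis, and generalizes cleanly to other compact groups; the paper's is more elementary and self-contained.

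More importantly, you have correctly spotted a real error in the paper. The lemma as stated is wrong: it should read $\E_{\mQ}[\mQ^\top\mX\mQ]=\tfrac{\mathrm{tr}(\mX)}{d}\,\mI$. The paper's own proof contains the slip in its final sentence, which asserts that ``the variance of each entry is $1$''; in fact, if a column of $\mQ$ is uniform on $\sphere$, then $\sum_k q_{ki}^2=1$, so $\E[q_{ki}^2]=1/d$, giving $\E[\mQ^\top\mX\mQ]_{ii}=\mathrm{tr}(\mX)/d$. Your trace computation delivers the correct normalization with no extra work. Happily, this missing factor of $1/d$ does not propagate: in \cref{lem:head_ortho} the conclusion is only used to establish that the averaged block is \emph{some} scalar multiple of the identity, which is absorbed into the definition of $\mA'=\mA-\mathrm{tr}(\mD)\mI$, and the exact value of the scalar never matters downstream. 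Still, the lemma and proof should be corrected.
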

\begin{proof}
Let $q_{ki}$ denote the entry in the $k$th row and $i$th column of $\mQ$.
Then the $(i,j)$ entry of the expectation is
\begin{equation} \E_{\mQ \sim \mathcal D} [\mQ^\top \mX \mQ]_{ij} = \sum_k \sum_{\ell} x_{k\ell} \E_Q[q_{ki}q_{\ell j}] \end{equation}
So long as $(k,i) \neq (\ell,j)$, then conditional distribution of $q_{\ell j}$ given $q_{ki}$ is symmetric, since negating the $\ell$th row (or $j$th column) of $\mQ$ would produce another orthonormal matrix.
Thus, if $(k,i) \neq (\ell, j)$, then the expectation is zero.
The only non-zero terms are
\begin{equation} \E_{\mQ \sim \mathcal D} [\mQ^\top \mX \mQ]_{ii} = \sum_k x_{kk} \E_Q[q_{ki}^2] \end{equation}
Since the marginal distribution of each row (or column) is uniform on the unit sphere, the variance of each entry is 1.
\end{proof}

\begin{lemma}
\label{lem:head_ortho}
Define $M(r,\ell)$ as in \cref{lem:num_marginal}.
Assume the rank $r < d$ and consider the functions $g_h(z) = \vx^\top \mV_h \vx \cdot \tilde \phi_h(\mK_h^\top \vx, \vy)$ for $\tilde \phi_h : \R^r \times \sphere \to \R$ and $\mK_h \in \R^{d \times r}$ for $h \in [H]$.
Then there exists a subspace $\Acal_\ell \subseteq \Fcal_\ell$ of dimension at least $N(d,\ell) - H\cdot M(r,\ell)$ such that $\Tcal (Y_\ell \otimes Y_\ell)$ is orthogonal to $g_h$ for any $Y_\ell \in \Acal_\ell$ and any $h \in H$.
\end{lemma}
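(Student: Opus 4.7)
The plan is to reduce the orthogonality condition to a quadratic form on $\Fcal_\ell$ whose rank (per head) is bounded by $M(r,\ell)$, and then intersect the resulting subspaces over the heads.

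First I would unfold $\inner{\Tcal(Y_\ell \otimes Y_\ell), g_h}_{\bar\tau}$ by interchanging the order of integration (to move $\Tcal$ inside) and then applying the Hecke-Funk formula (\cref{eq:hecke_funk}) in each of the feature variables $\vq$ and $\vk$, exactly as in the derivation of \cref{lem:target_expansion}. Each application contributes a factor of $\eta_\ell$, and the inner product reduces (up to the nonzero constant $\eta_\ell^2/N(d,\ell)$) to the quadratic form
\[
B_h(Y, Y) := \int_{\sphere}\int_{\sphere} Y(\vx)\,Y(\vy)\,(\vx^\top \mV_h \vx)\,\tilde\phi_h(\mK_h^\top \vx, \vy)\,d\tau(\vx)\,d\tau(\vy).
\]
So it suffices to exhibit a subspace of $\Fcal_\ell$ of codimension at most $H\cdot M(r,\ell)$ on which $B_h(Y,Y)=0$ for every $h$.

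Fix a head $h$ and set $V_h := \mathrm{range}(\mK_h)$. The key observation is that $\tilde\phi_h(\mK_h^\top\vx,\vy)$ is invariant under the subgroup $O(V_h^\perp)\subset O(d)$ of orthogonal transformations fixing $V_h$ pointwise. Performing the change of variables $\vx\mapsto \mR\vx$ for $\mR\in O(V_h^\perp)$ and then averaging over $\mR$ leaves $B_h(Y,Y)$ unchanged and yields
\[
B_h(Y, Y) = \int_{\sphere}\int_{\sphere} G_Y(\vx)\,\tilde\phi_h(\mK_h^\top\vx, \vy)\,Y(\vy)\,d\tau(\vx)\,d\tau(\vy),
\]
where $G_Y(\vx) := \E_{\mR\in O(V_h^\perp)}\!\bigl[Y(\mR\vx)\cdot \vx^\top \mR^\top \mV_h \mR\vx\bigr]$. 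Applying \cref{lem:ortho_kills} to the $V_h^\perp$ block of $\mV_h$ and using $\E[\mR]=0$ for the cross block shows that $G_Y$ is $O(V_h^\perp)$-invariant, hence depends on $\vx$ only through $\hat\vx=\mK_h^\top\vx$ (after eliminating $\|\check\vx\|^2 = 1 - \|\hat\vx\|^2$ by sphericity).

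Next I would argue that the linear map $\Phi_h: Y \mapsto G_Y$ from $\Fcal_\ell$ into polynomial functions of $\hat\vx$ has rank at most $M(r,\ell)$, following the same counting principle as in \cref{lem:num_marginal}: after the radial reduction, $G_Y$ becomes a polynomial of effective degree $\ell$ in $r$ variables, matching the marginalization bound. Letting $\Acal_\ell^{(h)} := \ker\Phi_h$, we obtain $\mathrm{codim}\,\Acal_\ell^{(h)} \leq M(r,\ell)$, and for every $Y\in\Acal_\ell^{(h)}$ the integrand vanishes pointwise in $\vy$, so $B_h(Y,Y)=0$. Finally, setting $\Acal_\ell := \bigcap_{h=1}^H \Acal_\ell^{(h)}$ yields a subspace of $\Fcal_\ell$ of dimension at least $N(d,\ell) - H\cdot M(r,\ell)$ on which $\Tcal(Y_\ell\otimes Y_\ell)\perp g_h$ for every $h$, proving the lemma.

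The main obstacle I expect is the rank bound on $\Phi_h$: the degree-$2$ multiplier $\vx^\top\mV_h\vx$ naively inflates the polynomial degree of $G_Y$ to $\ell+2$, which would give a weaker bound of order $\binom{r+\ell+2}{\ell+2}$. Recovering the sharper $M(r,\ell)=\binom{r+\ell}{\ell}$ requires combining the trace-free decomposition $\mV_h = \tfrac{\mathrm{tr}(\mV_h)}{d}\mI + \tilde{\mV}_h$ with the harmonicity of $Y$: the trace part multiplies $Y$ by a scalar on $\sphere$ and is handled directly by \cref{lem:num_marginal}, while the traceless-quadratic part, when coupled with $Y$ and averaged over $O(V_h^\perp)$, contributes only to the $V_h$-block structure so that the resulting polynomial in $\hat\vx$ still lies in the span counted by $M(r,\ell)$.
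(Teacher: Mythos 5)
Your high-level strategy is genuinely different from the paper's, and the difference is precisely where a real gap opens up.

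The paper never forms the bilinear form $B_h$ nor the operator $\Phi_h: Y \mapsto G_Y$. Instead, it fixes $\vq,\vy$ and defines the $\vk$-variable function $h_{\mK}(\vk) = \E_{\vx}[\rho(\vz,\omega)g_h(\vz)]$, so that the quadratic factor $\vx^\top\mV_h\vx$ is \emph{absorbed into the $\vx$-expectation} and never multiplied against the spherical harmonic $Y$. The paper then argues (via an averaging step that reduces $\mV_h$ to $\mathrm{tr}(\mD)\mI + \mU\mA'\mU^\top$ on $\mathrm{range}([\mK,\vk])$, followed by an equivariance/Weyl argument) that $h_\mK(\vk) = \tilde h(\mK^\top\vk)$, and defines $\Acal_\ell^h$ as the kernel of the \emph{marginalization} operator $\mathcal L$ onto $\mathrm{range}(\mK_h)$ from \cref{lem:num_marginal}. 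The codimension bound $M(r,\ell) \leq \binom{r+\ell}{\ell}$ of $\ker\mathcal L$ is a statement about averaging a degree-$\ell$ harmonic over fibers, with no quadratic multiplier, so there is no degree inflation to control. Your route, by contrast, first applies Hecke–Funk in both $\vq$ and $\vk$ to reduce $\inner{\Tcal(Y\otimes Y), g_h}_{\bar\tau}$ to the $(\vx,\vy)$-side quadratic form $B_h$, then averages the $\vx$-integral over $O(V_h^\perp)$ to get $G_Y$, and then needs $\operatorname{rank}(\Phi_h)\leq M(r,\ell)$. This requires a sharper structural fact than $\ker\mathcal L$ provides, because $Y$ is now multiplied by the degree-$2$ factor $\vx^\top\mV_h\vx$ \emph{before} being marginalized.

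The rank bound on $\Phi_h$ is the genuine gap, and your proposed fix does not close it. The decomposition $\mV_h = \frac{\mathrm{tr}\mV_h}{d}\mI + \tilde\mV_h$ handles the scalar part cleanly (as you note, it reduces to $\mathcal L$), but the traceless part still produces a degree-$(\ell+2)$ polynomial in $\hat\vx$, and neither tracelessness of $\tilde\mV_h$ nor harmonicity of $Y$ forces the resulting $G_Y$ to live in the $M(r,\ell)$-dimensional span of degree-$\leq\ell$ polynomials. A concrete check: take $d=2$, $r=1$, $\ell=1$, $\mK_h=\ve_1$, $\mV_h = \begin{bmatrix}0&1\\1&0\end{bmatrix}$ (traceless). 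One computes $\Phi_h(x_1) = -x_1^3$ and $\Phi_h(x_2) = 2x_1(1-x_1^2)$ on the circle, which are linearly independent, so $\operatorname{rank}(\Phi_h)=2 > M(1,1)=1$. Naively bounding by the space of degree-$\leq(\ell+2)$ polynomials in $r$ variables would give $\binom{r+\ell+2}{\ell+2}$ in place of $\binom{r+\ell}{\ell}$, which is weaker than the lemma claims; whether that weaker bound still suffices downstream in \cref{lem:tiny_eps,lem:big_eps} would require reworking the asymptotics. As written, your argument for the stated bound is incomplete, whereas the paper's route (defining $\Acal_\ell^h$ via the $\vk$-side marginalization kernel and proving $h_{\mK}(\vk)=\tilde h(\mK^\top\vk)$) is specifically designed so that the codimension is governed by $\mathcal L$ alone.
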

\begin{proof}
The first part of the proof gives a construction for $\Acal_\ell$.
Fix $\vy$, $\vq$ and $h$ and define
\begin{equation} h_{\mK}(\vk) := \E_{\vx \sim \tau} \left[\rho( \vz, \omega) g_h(\vx) \right] = \E_{\vx \sim \tau} \left[\sign(\vx^\top \vk \vq^\top \vy) \vx^\top \mV \vx \cdot \tilde \phi_h(\mK^\top \vx, \vy) \right] \end{equation}
Define $\overline \mK = \begin{bmatrix}\mK & \vk\end{bmatrix}$.
As a first step, we show that this function only depends on a particular projection of $\mV$, not on $\mV$ itself.
Choose a basis such that the column span of $\overline \mK$ is $\Span(\{\ve_1, \ldots, \ve_{r'}\}$, where $1 \leq r' \leq \min(r+1, d)$.
Then we can rewrite $\mV = \begin{bmatrix}\mA & \mB \\ \mC & \mD\end{bmatrix}$ where $\mA \in \R^{r' \times r'}$.
The distribution of $\vx$ is isotropic and independent of $\vy$.
Therefore, we can rotate it without affecting the expectation.
In fact, we can draw a random orthogonal matrix from any distribution, and $\E_{\vx, \mQ}[f(\mQ\vx)]$ will equal $\E_{\vx}[f(\vx)]$.
We draw random orthogonal matrices that fix the column span of $\overline \mK$, that is, matrices of the form $\mQ = \begin{bmatrix}\mI & \cdot \\ \cdot & \tilde \mQ \end{bmatrix}$, where $\tilde \mQ \in \R^{(d-r')\times(d-r')}$ is a uniformly distributed orthogonal matrix.
Then,
\begin{align}
h_{\mK}(\vk)
&= \E_{\vx, \mQ} \left[\sign(\vx^\top \mQ^\top \vk \vq^\top \vy) \vx^\top \mQ^\top \mV_h \mQ \vx \cdot \tilde \phi_h(\mK^\top \mQ \vx, \vy) \right]\\
&= \E_{\vx, \tilde \mQ} \left[\sign(\vx^\top \vk \vq^\top \vy) \vx^\top \begin{bmatrix}\mA & \mB\tilde \mQ \\ \tilde \mQ^\top \mC & \tilde \mQ^\top \mD \tilde \mQ\end{bmatrix} \vx \cdot \tilde \phi_h(\mK^\top \vx, \vy) \right]
\end{align}
Moving the expectation over $\tilde \mQ$ inside, the off-diagonal blocks are both 0.
Applying \cref{lem:ortho_kills}, the bottom right block becomes $\mathrm{tr}(\mD) \cdot \mI$.
Thus, letting $\mA' = \mA - \mathrm{tr}(\mD)\cdot \mI$,
\begin{equation} \E_{\tilde \mQ} \begin{bmatrix}\mA & \mB\tilde \mQ \\ \tilde \mQ^\top \mC & \tilde \mQ^\top \mD \tilde \mQ\end{bmatrix} = \mathrm{tr}(\mD) \cdot \mI + \mU \mA' \mU^\top \end{equation}
where $\mU = \begin{bmatrix}\mI \\ \cdot\end{bmatrix}$ is defined to be the column span of $\overline \mK$.
In all,
\begin{equation}h_{\mK}(\vk)  = \E_{\vx} \left[\sign(\vx^\top \vk \vq^\top \vy) \vx^\top \left(\mathrm{tr}(\mD) \cdot \mI + \mU \mA' \mU^\top\right) \vx \cdot \tilde \phi_h(\mK^\top \vx, \vy) \right] \end{equation}
Now that we have reduced $\mV$, we can more clearly see the implications of the rotational invariance of the distribution of $\vx$.
Let $\mO$ be an arbitrary orthonormal matrix.
Then
\begin{align}
h_{\mK}(\vk)
&= \E_{\vx \sim \tau} \left[\sign(\vx^\top \mO^\top \vk \vq^\top \vy) \vx^\top \mO^\top \left(\mathrm{tr}(\mD) \cdot \mI + \mU \mA' \mU^\top \right)\mO\vx\cdot \tilde \phi_h(\mK^\top \mO \vx, \vy) \right]\\
&= \E_{\vx \sim \tau} \left[\sign(\vx^\top \mO^\top \vk \vq^\top \vy) \vx^\top \left(\mathrm{tr}(\mD) \cdot \mI +  \mO^\top\mU \mA' \mU^\top\mO \right)\vx\cdot \tilde \phi_h(\mK^\top \mO \vx, \vy) \right]\\
&= h_{\mO^\top\mK}(\mO^\top \vk) 
\end{align}
where the last step follows because $\mO^\top \mU$ is precisely the column span of $\mO^\top \mK$.
Thus by Weyl's fundamental theorem of invariant functions,
there exists $\tilde h : \R^r \to \R$ such that 
\begin{equation} h_{\mK}(\vk) = \tilde h(\mK^\top \vk) \end{equation}

Let $\tau_{\mK}$ denote the marginal distribution of $\tau$ on the column space of $\mK$ and let $\tau_{\mK^\perp}$ denote its marginal distribution on the orthogonal complement of the column space of $\mK$.
Then the random vector $\vv + \vv^\perp\sqrt{1-\|\vv\|}$, where $\vv \sim \tau_{\mK}$ and $\vv^\perp \sim \tau_{\mK^\perp}$ is distributed uniformly on the sphere.
Let $Y$ be a spherical harmonic that is zero after marginalizing the onto the column space of $\mK$.
(For example, if $\mK^\top = \begin{bmatrix}\tilde \mK^\top & \vzero_{r \times d-r} \end{bmatrix}$, then marginalizing onto the column space means taking the average of the function over the final $d-r$ coordinates.)
Then
\begin{align}
\inner{h_{\mK}, Y}
&= \int_{\sphere} h_{\mK}(\vk) Y(\vk) d\tau(\vk)\\
&= \int\int h_{\mK}(\vv + \vv^\perp\sqrt{1-\|\vv\|}) Y(\vv + \vv^\perp\sqrt{1-\|\vv\|}) d\tau_{\mK^\perp}(\vv^\perp)d\tau_{\mK}(\vv)\\
&=\int\tilde h_{\mK}(\vv) \left(\int Y(\vv + \vv^\perp\sqrt{1-\|\vv\|}) d\tau_{\mK^\perp}(\vv^\perp)\right)d\tau_{\mK}(\vv)\\
&= 0
\end{align}
Let $\Acal_\ell^h \subset \Fcal_\ell$ be the space of spherical harmonics of degree $\ell$ that have this marginalization property with respect to $\mK_h$.
Let $\Acal_\ell = \cap_h \Acal_\ell^h$.
Recall that $N(d,\ell)$ is the dimension of $\Fcal_\ell$, and $M(r,\ell)$ is the dimension of the orthogonal complement of $\Acal_\ell^h$ in $\Fcal_\ell$, denoted $\Fcal_\ell / A_\ell^h$.
Thus,
\begin{equation} \dim(\Acal_\ell) = \dim(\Fcal_\ell) - \dim(\Fcal_\ell / \Acal_\ell) = N(d,l) - \dim(\oplus_h (\Fcal_\ell / \Acal_\ell^h)) \geq N(d,\ell) - H\cdot M(r,\ell) \end{equation}
It remains to show that for all $Y \in \Acal_\ell$, $\Tcal (Y_\ell \otimes Y_\ell)$ is orthogonal to $g_h$. 
\begin{align}
\inner{\Tcal(Y \otimes Y), g_h}_{\bar\tau}
&= \int_\Omega \E_{\vz}[\rho(\vz, \omega)g_h(\vz)] Y(\vk)Y(\vq)d\tau(\vk)d\tau(\vq)\\
&= \int_{\sphere}\E_{\vy}\left(\int_{\sphere}\E_{\vx}[\rho(\vx,\vy, \omega)g_h(z)] Y(\vk)d\tau(\vk)\right)Y(\vq)\tau(\vq)
\end{align}
But for any fixed $\vy$ and $\vq$,
\begin{align}
\int_{\sphere}\E_{\vx}[\rho(\vx,\vy, \omega)g_h(z)] Y(\vk)d\tau(\vk)
= \inner{h_{\mK_h}, Y} 
= 0
\end{align}
by the calculation above, where the final step follows because $Y \in \Acal_\ell \subset \Acal_\ell^h$.
\end{proof}

\subsection{Proof of \texorpdfstring{\cref{thm:invariant_lower}}{Theorem 2}}
\label{sec:main_proof_main_step}
\invariantLower*
\begin{proof}
We lower bound the error by projecting it onto the unit vector $(\vx_1 - \vx_2)/(\sqrt 2)$.
For convenience, we define a basis
\begin{equation} \vx = \frac{\vx_1 - \vx_2}{\sqrt 2} \qquad \vw = \frac{\vx_1 + \vx_2}{\sqrt 2} \end{equation}
The joint distribution of $\vx$ and $\vw$ is the same as that of $\vx_1$ and $\vx_2$.
They are each uniformly distributed on the sphere, and they are always orthogonal.
The projection of the target function onto $\vx$ yields the surrogate target function of \cref{def:surrogate_target}:
\begin{equation} \inner{\frac{\vx_1 - \vx_2}{\sqrt 2}, f(\mX; y)} =\frac1{\sqrt 2}\sign\left(\inner{\vx_1 - \vx_2, \vy}\right) =: \frac1{\sqrt 2}\tilde f(\vx, \vy) \end{equation}
Let the attention weights produced by a softmax head be $t_1$ and $t_2 = 1-t_1$.
Then the output of the head before multiplication with $\mV$ is
\begin{equation} t\vx_1 + (1-t)\vx_2 = \frac{t_1 - t_2}{\sqrt 2} \vx + \frac1{\sqrt 2}\vw \end{equation}
Letting $\tilde \phi(\mK^\top \vx, \vy) = (t_1 - t_2)/\sqrt{2}$, the inner product of the head with $\vx$ is
\begin{equation} \vx^\top \mV \vx \cdot \tilde \phi(\mK^\top \vx, \vy) + \vx^\top \mV \vw \end{equation}
Notice that, since the conditional distribution of $\vw$ given $\vx$ is symmetric, the correlation of the second term above with the surrogate target is zero:
\begin{equation} \E_{\vx_1, \vx_2 \sim \mathcal{D}_2(\sphere)}\left[\tilde f(\vx, \vy) \cdot \vx^\top \mV \vw\right] = 0 \end{equation}
Thus, we have the following lower bound:
\begin{align}
&\E_{\substack{\vx_1, \vx_2 \sim \mathcal{D}_2(\sphere)\\ \vy \sim \unif(\sphere)}} \left\|f(\mX; \vy) - \sum_{h=1}^H \mV_h \mX \phi_h\left(\mK_h^\top (\vx_1 - \vx_2), \vy\right)\right\|^2\\
&\geq\E_{\substack{\vx_1, \vx_2 \sim \mathcal{D}_2(\sphere)\\ \vy \sim \unif(\sphere)}} \inner{\vx, \quad f(\mX; \vy) - \sum_{h=1}^H \mV_h \mX \phi_h\left(\mK_h^\top (\vx_1 - \vx_2), \vy\right)}^2\\
&= \E_{\substack{\vx, \vw \sim \mathcal{D}_2(\sphere)\\ \vy \sim \unif(\sphere)}} \frac12\left(\tilde f(\vx, \vy) - \sum_{h=1}^H \vx^\top \mV_h \vx \cdot \tilde\phi_h(\mK_h^\top \vx, \vy) ) - \sum_{h=1}^H \vx^\top \mV_h \vw \right)^2\\
&\geq \E_{\vx, \vy \sim \unif(\sphere)} \frac12\left(\tilde f(\vx, \vy) - \sum_{h=1}^H \vx^\top \mV_h \vx \cdot \tilde\phi_h(\mK_h^\top \vx, \vy) )\right)^2\\
\
&= \frac12 \left\|\tilde f - \sum_{h=1}^H g_h\right\|_{\bar\tau}^2
\end{align}

where $g_h(z) = \vx^\top \mV_h \vx \cdot \tilde \phi_h(\mK_h^\top \vx, \vy)$.
Construct the space $\Acal_\ell \subseteq \Fcal_\ell$ according to \cref{lem:head_ortho}, and let $\{Y_\ell^i\}_{i=1}^{\dim \Acal_\ell}$ be an orthonormal basis of $\Acal_\ell$.
Then each element in the following set is orthogonal to each $g_h(z)$:
\begin{equation} \left\{\frac{\Tcal(Y_\ell^i \otimes Y_\ell^i)}{\|\Tcal(Y_\ell^i \otimes Y_\ell^i)\|_{\bar\tau}}\right\}_{i=1}^{\dim (\Acal_\ell)} \end{equation}
Furthermore, by \cref{lem:ortho_basis}, this set is orthonormal.
Thus
\begin{align}
\left\|\tilde f - \sum_{h=1}^H g_h\right\|_{\bar\tau}^2
&\geq \sum_{\ell \text{ odd}} \sum_{i=1}^{\dim(\Acal_\ell)} \inner{\tilde f - \sum_{h=1}^H g_h, \frac{\Tcal(Y_\ell^i \otimes Y_\ell^i)}{\|\Tcal(Y_\ell^i \otimes Y_\ell^i)\|_{\bar\tau}}}^2\\
&= \sum_{\ell \text{ odd}} \sum_{i=1}^{\dim(\Acal_\ell)} \inner{\tilde f, \frac{\Tcal(Y_\ell^i \otimes Y_\ell^i)}{\|\Tcal(Y_\ell^i \otimes Y_\ell^i)\|_{\bar\tau}}}^2\\
&= \sum_{\ell \text{ odd}} \dim(\Acal_\ell) \frac{\eta_\ell^2}{N(d,\ell)}
\end{align}
where the final step follows from \cref{lem:target_expansion}.
By the construction of $\Acal_\ell$ (\cref{lem:head_ortho}),
\begin{equation} \dim(\Acal_\ell) \geq N(d,\ell) - H \cdot M(r, \ell) \end{equation}
and thus
\begin{equation}\label{eq:last_step}
\geq \sum_{\ell \text{ odd}} \left(1 - H \cdot \frac{M(r,\ell)}{N(d,\ell)}\right) \eta_\ell^2 \end{equation}
Appealing either to \cref{lem:tiny_eps} or to \cref{lem:big_eps} finishes the proof.
\end{proof}

\subsection{Asymptotics}
\label{sec:asymptotics}
\begin{lemma}
\label{lem:basic_binomial_integral}
    Let $m>\ell$ and $\ell$ odd. Then 
    \begin{align}
        \int_0^1 \left( \frac{d}{dt}\right)^\ell (1-t^2)^m dt &= (-1)^{1+(\ell-1)/2} \binom{m}{\frac{\ell-1}{2}} (\ell-1)!~.
    \end{align}
\end{lemma}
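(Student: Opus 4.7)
The plan is direct: the integrand is already an exact derivative, so I would reduce the problem to evaluating $(\tfrac{d}{dt})^{\ell-1}(1-t^2)^m$ at the endpoints, then extract the single surviving Taylor coefficient at $0$ via the binomial expansion.

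First, apply the fundamental theorem of calculus to write
\begin{equation}
\int_0^1 \left(\frac{d}{dt}\right)^\ell (1-t^2)^m dt = \left(\frac{d}{dt}\right)^{\ell-1}(1-t^2)^m\bigg|_{t=0}^{t=1}.
\end{equation}
Next, I would dispatch the $t=1$ endpoint. Since $(1-t^2)^m = (1-t)^m(1+t)^m$ has a zero of order $m$ at $t=1$, and the hypothesis $m > \ell$ gives $m > \ell-1$, any derivative of order $\ell-1$ still vanishes at $t=1$. So the integral reduces to $-\left(\frac{d}{dt}\right)^{\ell-1}(1-t^2)^m\big|_{t=0}$.

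Now expand binomially: $(1-t^2)^m = \sum_{k=0}^{m} \binom{m}{k}(-1)^k t^{2k}$. Since $\ell$ is odd, $\ell-1 = 2j$ is even with $j = (\ell-1)/2$. Differentiating $2j$ times and evaluating at $t=0$ kills every term except the one where $2k = 2j$, i.e.\ $k = j$, which contributes $\binom{m}{j}(-1)^j (2j)!$. (This requires $j \leq m$, which follows from $j = (\ell-1)/2 < \ell < m$.) Combining with the leading minus sign,
\begin{equation}
\int_0^1 \left(\frac{d}{dt}\right)^\ell (1-t^2)^m dt = -(-1)^j \binom{m}{j}(2j)! = (-1)^{1+(\ell-1)/2}\binom{m}{(\ell-1)/2}(\ell-1)!,
\end{equation}
using $(2j)! = (\ell-1)!$.

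There is no real obstacle here; the only points that need care are verifying the boundary term vanishes (which uses the hypothesis $m > \ell$) and checking the parity bookkeeping so that the sign $(-1)^{1+(\ell-1)/2}$ matches. Both are straightforward given the odd-$\ell$ hypothesis.
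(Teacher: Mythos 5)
Your proof is correct and follows essentially the same route as the paper: apply the fundamental theorem of calculus, argue the $t=1$ boundary term vanishes using $m>\ell$, binomially expand $(1-t^2)^m$, and pick out the single surviving term at $t=0$. The only difference is that you spell out the vanishing of the $t=1$ endpoint explicitly, which the paper leaves implicit.
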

\begin{proof}
    We have 
    \begin{align}
        \int_0^1 \left( \frac{d}{dt}\right)^\ell (1-t^2)^m dt & = - \left( \frac{d}{dt}\right)^{\ell-1} (1-t^2)^m \Big\rvert_{t=0} \\
        &= - \left( \frac{d}{dt}\right)^{\ell-1} \sum_{k=0}^m \binom{m}{k} (-1)^k t^{2k} \Big\rvert_{t=0} \\
        &= (-1)^{1+(\ell-1)/2} \binom{m}{\frac{\ell-1}{2}} (\ell-1)!~.
    \end{align}
\end{proof}

\begin{lemma}
\label{lem:eta_decay}
Define $\eta_\ell$ as in \cref{def:geg_coeffs}.
For odd $\ell$, $\eta_\ell^2 \sim \sqrt{\frac{d}{\ell^3(\ell+d)}}$.
\end{lemma}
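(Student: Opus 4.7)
The strategy is to derive a clean closed-form identity for $\eta_\ell^2$ in terms of a Beta function and $N(d,\ell)$, and then extract the claimed order via Stirling's approximation.

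I would start by reducing the coefficient to a single integral. By \cref{def:geg_coeffs} and \eqref{eq:geg_norm},
\begin{equation*}
\eta_\ell^2 \;=\; N(d,\ell)\,\langle\sign, P_\ell\rangle_{u_d}^2,
\end{equation*}
and, using the symmetry of $u_d$ and oddness of $\ell$,
\begin{equation*}
\langle\sign,P_\ell\rangle_{u_d} \;=\; 2c_d\int_0^1 P_\ell(t)(1-t^2)^{(d-3)/2}\,dt,
\end{equation*}
where $c_d = \Gamma(d/2)/(\sqrt{\pi}\,\Gamma((d-1)/2))$ is read off the density $u_d$.

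Next, I would apply the Rodrigues formula for the ultraspherical polynomial normalized by $P_\ell(1)=1$,
\begin{equation*}
P_\ell(t)(1-t^2)^{(d-3)/2} \;=\; \frac{(-1)^\ell\,\Gamma((d-1)/2)}{2^\ell\,\Gamma(\ell+(d-1)/2)}\,\Big(\tfrac{d}{dt}\Big)^{\!\ell}(1-t^2)^{\ell+(d-3)/2},
\end{equation*}
and evaluate the integral using a mild extension of \cref{lem:basic_binomial_integral} to real $m = \ell + (d-3)/2$ (the proof there applies verbatim, since the value at $0$ of the $(\ell-1)$-st derivative of $(1-t^2)^m$ is read directly off the binomial series). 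Rewriting the resulting binomial coefficient $\binom{\ell+(d-3)/2}{(\ell-1)/2}$ as a ratio of Gamma functions cancels $\Gamma(\ell+(d-1)/2)$ from the Rodrigues prefactor, and the Legendre duplication identity $\Gamma(\ell) = 2^{\ell-1}\Gamma(\ell/2)\Gamma((\ell+1)/2)/\sqrt{\pi}$ cleans up the remaining $(\ell-1)!/\Gamma((\ell+1)/2)$. Folding in $c_d$ yields the exact identity
\begin{equation*}
\eta_\ell^2 \;=\; \frac{N(d,\ell)}{\pi^2}\,B\!\left(\tfrac{d}{2},\tfrac{\ell}{2}\right)^{\!2}.
\end{equation*}

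Finally, I would substitute Stirling's approximation $\Gamma(x) = \sqrt{2\pi/x}\,(x/e)^x(1+O(1/x))$ into the three Gamma factors of $B(d/2,\ell/2)$ and into the Gamma representation $N(d,\ell) = (2\ell+d-2)\,\Gamma(\ell+d-2)/(\ell\,\Gamma(\ell)\Gamma(d-1))$. The exponential $(\cdot/e)^{(\cdot)}$ factors cancel exactly by a degree count, and the surviving polynomial factors in $d$ and $\ell$ rearrange into $\sqrt{d/(\ell^3(\ell+d))}$ up to universal constants. The main obstacle is this last bookkeeping step: tracking all algebraic factors through Stirling and verifying that they combine into the stated form uniformly in the regimes $\ell\ll d$, $\ell\sim d$, and $\ell\gg d$, with separate treatment of the cases where $d$ or $\ell$ is small (so Stirling is imprecise) via direct Gamma-function identities.
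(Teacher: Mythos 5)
Your plan follows essentially the same route as the paper's proof: start from $\eta_\ell = \sqrt{N(d,\ell)}\,\langle\sign,P_\ell\rangle_{u_d}$, apply the Rodrigues formula to $P_\ell(t)(1-t^2)^{(d-3)/2}$, evaluate the resulting integral via \cref{lem:basic_binomial_integral}, and finish with Stirling. The one genuine refinement is the intermediate closed form: by cancelling the Rodrigues prefactor against the generalized binomial and invoking Legendre duplication to absorb $(\ell-1)!/\Gamma\!\left(\tfrac{\ell+1}{2}\right)$, you arrive at the clean identity $\eta_\ell^2 = \frac{N(d,\ell)}{\pi^2}B\!\left(\tfrac{d}{2},\tfrac{\ell}{2}\right)^2$ (I checked this; it is correct, and Stirling on the three Gamma factors of $B$ and on the Gamma representation of $N(d,\ell)$ indeed yields $\tfrac{4}{\pi}\sqrt{d/(\ell^3(\ell+d))}$). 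The paper instead carries binomial/factorial ratios through Stirling term by term, which is heavier bookkeeping and, as written, involves factorials of half-integers such as $\left(\tfrac{d-3}{2}\right)!$ without comment. Your Beta-function form makes the final asymptotic much more transparent and would have been worth putting in the paper. You are also right that \cref{lem:basic_binomial_integral} is stated only for integer $m$ but must be applied with $m = \ell + (d-3)/2$, which is a half-integer when $d$ is even; the paper applies it silently, whereas you flag that the proof extends verbatim with $\binom{m}{k}$ read as $\Gamma(m+1)/(\Gamma(k+1)\Gamma(m-k+1))$. This is a small gap in the paper that your plan correctly closes. One caveat: you present this as a plan rather than a completed proof, with the Stirling bookkeeping still to be carried out; that bookkeeping is routine but does need to be done (the exponential factors in Stirling cancel exactly by a degree count, as you say).
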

\begin{proof}
From the definition, we have 
\begin{align}
    \eta_{l,d} &= 2\frac{\sqrt{N(d,l)}A_{d-2}}{A_{d-1}} \int_0^1  P_{l,d}(t) (1-t^2)^{(d-3)/2} dt~.
\end{align}
From the Rodrigues formula for $P_{l,d}$ \cite[Proposition 4.19]{frye2012spherical}, we have 
\begin{align}
    \eta_{l,d} &= 2\frac{\sqrt{N(d,l)}A_{d-2}}{A_{d-1}} \frac{(-1)^l}{2^l(l+(d-3)/2)_l} \int_0^1 \left( \frac{d}{dt}\right)^l (1-t^2)^{l+(d-3)/2} dt~.
\end{align}
Now, using Lemma \ref{lem:basic_binomial_integral}, we obtain
\begin{align}
    \eta_{l,d} &= 2\frac{\sqrt{N(d,l)}A_{d-2}}{A_{d-1}} \frac{(-1)^l}{2^l(l+(d-3)/2)_l} (-1)^{1+(l-1)/2} \binom{l+(d-3)/2}{\frac{l-1}{2}} (l-1)!~,
\end{align}
and thus, using $\frac{A_{d-2}}{A_{d-1}} \sim C' \sqrt{d}$, we have  
\begin{align}
    |\eta_{l,d} | & \sim C \sqrt{d} \sqrt{N(d,l)} 2^{-l}\frac{(l-1)!((d-3)/2)!}{(l+(d-3)/2)!} \binom{l+(d-3)/2}{\frac{l-1}{2}}~. \\
    &= C \frac{\sqrt{d}}{l} \sqrt{N(d,l)} 2^{-l}\frac{\binom{l+(d-3)/2}{\frac{l-1}{2}}}{\binom{l+(d-3)/2}{l}} \\
    &= C \frac{\sqrt{d}}{l} \sqrt{N(d,l)} 2^{-l} \frac{l! \left(\frac{d-3}{2}\right)!}{\left(\frac{l-1}{2}\right)! \left(\frac{d+l-2}{2}\right)!}~.
\end{align}

Using Stirling's approximation, we obtain

\begin{align}
    N(d,l) &\sim \frac{l+d}{l} \left(\frac{l+d}{l d} \right)^{1/2} \frac{(l+d)^{(l+d-3)}}{l^{(l-1)} d^{(d-2)}} \\ 
    &\sim (l+d)^{l+d-3/2} l^{-l-1/2} d^{-d+3/2}~,
\end{align}
as well as 
\begin{align}
    \frac{l! \left(\frac{d-3}{2}\right)!}{\left(\frac{l-1}{2}\right)! \left(\frac{d+l-2}{2}\right)!} &\sim \sqrt{\frac{l d}{l(d+l)}} l^{(l+1)/2} d^{(d-3)/2} (d+l)^{(-d-l+2)/2} 2^l~,\\
    &\sim (l+d)^{(-d-l+1)/2} l^{(l+1)/2} d^{(d-2)/2}2^l ~,
\end{align}
leading to 
\begin{align}
    |\eta_{l,d}| &\sim (l+d)^{(-d-l+1+l+d)/2 -3/4} l^{-1-l/2-1/4 +l/2 + 1/2} d^{1/2-d/2+3/4+d/2-1} \\
    &\sim (l+d)^{-1/4} l^{-3/4} d^{1/4}~,
\end{align}
as claimed. 
\end{proof}

\cref{lem:eta_decay} shows that $\eta_\ell^2$ decays slowly with $\ell$.
Using $\sqrt{\frac{d}{\ell^3 (\ell+d)}} \geq 1/\ell^2$ and including by a fudge factor $c$ that is slightly smaller than $1$, we get a form that is better suited to the proof of our lower bounds:
\begin{corollary}\label{lem:eta_nonasymptotic_decay}
    There exists a universal constant $c''$ such that $\eta_\ell^2 \geq c''/\ell^2$ for all sufficiently large $d$ and $\ell$ (say, for all $d,\ell > 4$).
\end{corollary}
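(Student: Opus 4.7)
The plan is to deduce the non-asymptotic bound directly from the asymptotic statement of \cref{lem:eta_decay}. By that lemma, for odd $\ell$ there is a universal constant $C_0 > 0$ such that
\begin{equation}
\eta_\ell^2 \sim C_0 \sqrt{\frac{d}{\ell^3(\ell+d)}}
\end{equation}
as $d, \ell \to \infty$. In particular, fixing any $c_1 \in (0, C_0)$, there exists a threshold $L_0$ such that whenever $d, \ell \geq L_0$ (with $\ell$ odd),
\begin{equation}
\eta_\ell^2 \geq c_1 \sqrt{\frac{d}{\ell^3(\ell+d)}}.
\end{equation}

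The remaining step is a purely elementary manipulation: show that $\sqrt{d/(\ell^3(\ell+d))} \geq 1/\ell^2$ whenever $d$ and $\ell$ are both at least $2$. Squaring and rearranging, this inequality is equivalent to $d \ell^4 \geq \ell^3(\ell + d)$, i.e.\ $d\ell \geq \ell + d$, which in turn is $(d-1)(\ell-1) \geq 1$; this holds for $d, \ell \geq 2$. Combining with the previous display gives $\eta_\ell^2 \geq c_1/\ell^2$ for all $d, \ell \geq \max(L_0, 2)$, so taking $c'' = c_1$ yields the claim. The (mildly) delicate step is only the verification that the universal constant $C_0$ in \cref{lem:eta_decay} is strictly positive (and independent of $d$, $\ell$), which follows from tracking the Stirling approximations through that proof; once this is in hand, there is no further obstacle.
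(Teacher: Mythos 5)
Your proposal is correct and follows essentially the same route the paper itself sketches (which is just the two sentences preceding the corollary): take the asymptotic from \cref{lem:eta_decay}, absorb the implicit constant and asymptotic slack into a fudge factor $c''$, and use the elementary inequality $\sqrt{d/(\ell^3(\ell+d))} \geq 1/\ell^2$ for $d,\ell \geq 2$, which you verify cleanly by reducing it to $(d-1)(\ell-1)\geq 1$. The only difference is that you spell out the bookkeeping the paper leaves implicit.
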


\begin{lemma}[Decay of $\alpha_\ell$]
\label{lem:alpha_decay}
For $\ell$ odd, we have $\alpha_{\ell} = \frac14 \eta_\ell^2 / \sqrt{N(d,\ell)}$.
\end{lemma}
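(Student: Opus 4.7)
The direct route --- integrating by parts in $\alpha_\ell = \tfrac{1}{\|P_\ell\|_{u_d}}\int_{-1}^1\arcsin(t)P_\ell(t)u_d(t)\,dt$ using $\arcsin'(t) = (1-t^2)^{-1/2}$ and Rodrigues --- is feasible but messy, since one ends up with an integral of $P_{\ell-1}^{(d+2)}(t)(1-t^2)^{(d-2)/2}$ that does not match any standard normalization. The cleaner route exploits a duality already implicit in the excerpt: both $\eta_\ell$ and $\alpha_\ell$ appear naturally in the autocorrelation of $\vx\mapsto\sign(\vx^\top\vk)$ on the sphere, so I would compute that autocorrelation in two different ways and equate.

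\textbf{Step 1 (spherical-harmonic expansion of $\sign$).} Apply Funk--Hecke (\cref{eq:hecke_funk}) to get $\langle \sign(\inner{\cdot,\vk}),Y_\ell^j\rangle_\tau = Y_\ell^j(\vk)\,\eta_\ell\,\|P_\ell\|_{u_d}$, then use the addition formula (\cref{eq:addition}) together with $\|P_\ell\|_{u_d}^2 = 1/N(d,\ell)$ to collapse the sum over $j$:
\begin{equation}
\sign(\vx^\top\vk) \;=\; \sum_{\ell\ge 0}\eta_\ell\sqrt{N(d,\ell)}\,P_\ell(\vx^\top\vk).
\end{equation}

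\textbf{Step 2 (autocorrelation two ways).} Set $I(\vk,\vq):=\E_{\vx\sim\tau}[\sign(\vx^\top\vk)\sign(\vx^\top\vq)]$. First, the elementary probability computation embedded in the proof of \cref{lem:dot_repr} (its single-factor version) gives $I(\vk,\vq) = \tfrac{2}{\pi}\arcsin(\vk^\top\vq)$. Second, substitute the Step~1 expansion and use orthogonality $\int_\sphere P_\ell(\vx^\top\vk)P_m(\vx^\top\vq)\,d\tau(\vx) = \delta_{\ell m}\,P_\ell(\vk^\top\vq)/N(d,\ell)$ (itself a one-line consequence of Funk--Hecke and the addition formula) to get
\begin{equation}
I(\vk,\vq) \;=\; \sum_{\ell\ge 0}\eta_\ell^2\,P_\ell(\vk^\top\vq).
\end{equation}

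\textbf{Step 3 (identify $\alpha_\ell$).} Equating the two expressions yields the ultraspherical expansion of $\arcsin$ in the \emph{unnormalized} basis $\{P_\ell\}$: $\arcsin(s) = \tfrac{\pi}{2}\sum_\ell \eta_\ell^2\,P_\ell(s)$. Converting to the orthonormal basis $\{P_\ell/\|P_\ell\|_{u_d}\}$ using $\|P_\ell\|_{u_d} = 1/\sqrt{N(d,\ell)}$ and reading off coefficients per \cref{def:geg_coeffs} gives $\alpha_\ell$ proportional to $\eta_\ell^2/\sqrt{N(d,\ell)}$, with the proportionality constant determined by the $2/\pi$ factor above. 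For $\ell$ even the identity is trivial since both sides vanish by parity of $\arcsin$, $\sign$, and $P_\ell$.

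\textbf{Main obstacle.} Mathematically the proof is essentially immediate once the identity $I(\vk,\vq)=\tfrac{2}{\pi}\arcsin(\vk^\top\vq)$ is recognized as the right Rosetta stone. The only real care required is the bookkeeping of normalizations (probability measure $\tau$ on $\sphere$ and $u_d$ on $[-1,1]$ versus surface-area measures, plus the $1/\|P_\ell\|_{u_d}$ in the definitions of $\alpha_\ell,\eta_\ell$) needed to nail down the exact constant. The approach also has the virtue of making transparent why $\alpha_\ell$ and $\eta_\ell^2$ have the same sign decay --- something the integration-by-parts approach would only reveal after nontrivial manipulation of the resulting higher-dimensional ultraspherical integral.
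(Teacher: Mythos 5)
Your proof takes essentially the same route as the paper's: both hinge on the identity $\E_{\vx\sim\tau}[\sign(\vx^\top\vk)\sign(\vx^\top\vq)] = \tfrac{2}{\pi}\arcsin(\vk^\top\vq)$ (the single-factor version of \cref{lem:dot_repr}, equivalently the $\arccos$ kernel from \cite{bach2017breaking}) combined with Funk--Hecke and the addition formula, and your ``expand both sides in $P_\ell$ and match coefficients'' organization is the same computation as the paper's ``apply Funk--Hecke twice to a double integral.'' However, you should not leave the constant implicit. Finishing your Step~3 yields $\alpha_\ell\sqrt{N(d,\ell)} = \tfrac{\pi}{2}\eta_\ell^2$, i.e.\ $\alpha_\ell = \tfrac{\pi}{2}\,\eta_\ell^2/\sqrt{N(d,\ell)}$, which is \emph{not} the $\tfrac14$ in the lemma statement. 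The value $\tfrac{\pi}{2}$ is in fact the correct one: for $d=3$, $\ell=1$ (Legendre case, $P_1(t)=t$, $u_3\equiv\tfrac12$) one computes $\eta_1 = \sqrt3/2$, $\alpha_1 = \sqrt3\,\pi/8$, and $N(3,1)=3$, which satisfies $\alpha_1 = \tfrac{\pi}{2}\eta_1^2/\sqrt{N(3,1)}$ but not $\alpha_1 = \tfrac14\eta_1^2/\sqrt{N(3,1)}$. The paper's own derivation slips at the step that effectively replaces $\arcsin(\vx\cdot\vy)$ by $\tfrac14\int\sign(\vx\cdot\theta)\sign(\vy\cdot\theta)\,\tau(d\theta)$; the correct prefactor, obtained from $\mathbf{1}[z>0]=\tfrac12(1+\sign z)$ and the quoted kernel, is $\tfrac{\pi}{2}$. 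The slip is harmless downstream since the lemma is only ever invoked up to a universal constant (e.g.\ $\alpha_\ell^2 \sim \eta_\ell^4/N(d,\ell)$ in \cref{sec:random_features_proof}), but your write-up should carry the constant through rather than settle for ``proportional to,'' and should flag the discrepancy with the stated lemma.
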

\begin{proof}

We start from $\arcsin = \pi/2 - \arccos$ and the kernel representation \cite[Section 3.1]{bach2017breaking}
\begin{align}
\frac{1}{2\pi}( \pi - \arccos( x \cdot y)) &= \mathbb{E}_{\theta \in \mathbb{S}^{d-1}}\left[ \mathbf{1}[x \cdot \theta >0] \mathbf{1}[y \cdot \theta >0] \right] ~.   
\end{align}
Now, from the Hecke-Funk formula, we have, up to zeroth-harmonic terms, the following correspondence between the Gegenbauer expansion of $\arcsin$ and that of of $\sign$, given precisely by $\eta_\ell$. Fix any $\vx \in \mathbb{S}^{d-1}$. Then
\begin{align}
    P_\ell(1) \langle \arcsin, P_\ell \rangle &= \int \arcsin( \vx \cdot \vy) P_\ell( \vx \cdot \vy) \tau(d\vy) \nonumber \\
  &= \frac14\int \int \sign( \vx \cdot \theta) \sign( \vy \cdot \theta) P_\ell( \vx \cdot \vy) \tau(d\vy) \tau(d\theta) \nonumber \\
  &=  \frac14\langle \sign, P_\ell \rangle \int \sign( \vx \cdot \theta) P_\ell(\vx \cdot \theta) \tau(d\theta) \nonumber \\
  &= \frac14 P_\ell(1) \langle \sign, P_\ell \rangle^2 ~.
\end{align}
Since $\inner{\arcsin, P_\ell} = \alpha_\ell \|P_\ell\|$
and $\inner{\sign, P_\ell} = \eta_\ell \|P_\ell\|$,
so $\alpha_\ell = \frac14 \eta_\ell^2 \|P_\ell\| = \frac14 \eta_\ell^2 / \sqrt{N(d,\ell)}$.

\end{proof}

\begin{lemma}\label{lem:tiny_eps}
There are universal constants $c$ and $C$ such that the following hold:
Assume $r \leq d-3$, $\epsilon \leq \frac{c}{d+1}$, and $H \leq C \cdot 2^{d - (r+1)\log_2\left(2d/r\right)}$.
Then
\begin{equation} \sum_{\ell \text{ odd}} \left(1 - H \cdot \frac{M(r,\ell)}{N(d,\ell)}\right) \eta_\ell^2 \geq \epsilon \end{equation}
\end{lemma}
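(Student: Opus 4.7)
The plan is to restrict the sum to the tail of odd $\ell \geq \ell_0$ on which $H \cdot M(r,\ell)/N(d,\ell) \leq 1/2$, so that each summand is at least $\eta_\ell^2/2 \geq c''/(2\ell^2)$ by \cref{lem:eta_nonasymptotic_decay}. An integral-comparison argument yields $\sum_{\ell \geq \ell_0,\,\ell\ \text{odd}} 1/\ell^2 \geq 1/(2\ell_0)$, so the full sum is at least $c''/(4\ell_0)$. It therefore suffices to produce a threshold $\ell_0$ of order $d$; the resulting lower bound of order $1/d$ will dominate $\epsilon$ by the hypothesis $\epsilon \leq c/(d+1)$ for a sufficiently small absolute $c$.

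To bound the ratio $M(r,\ell)/N(d,\ell)$ I would combine \cref{lem:num_marginal} (which gives $M(r,\ell) \leq \binom{r+\ell}{r}$, and since $r \leq d-3$, $\binom{r+\ell}{r} \leq \binom{\ell+d-3}{r}$) with the lower bound $N(d,\ell) \geq 2\binom{\ell+d-3}{d-2}$ obtained by dropping the factor $(2\ell+d-2)/\ell \geq 2$. The product identity
\begin{equation*}
\frac{\binom{\ell+d-3}{d-2}}{\binom{\ell+d-3}{r}} \;=\; \prod_{k=r+1}^{d-2} \frac{\ell+d-2-k}{k} \;\geq\; \left(\frac{\ell}{d-2}\right)^{d-r-2},
\end{equation*}
valid for $\ell \geq d$ since each factor is at least $\ell/(d-2)$, then yields $M(r,\ell)/N(d,\ell) \leq \frac{1}{2}\left((d-2)/\ell\right)^{d-r-2}$.

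The condition $HM/N \leq 1/2$ thus reduces to $\ell \geq (d-2)\, H^{1/(d-r-2)}$, so the remaining task is to bound $H^{1/(d-r-2)}$ by an absolute constant. Setting $F(r) := (d - (r+1)\log_2(2d/r))/(d-r-2)$, the hypothesis $H \leq C \cdot 2^{d - (r+1)\log_2(2d/r)}$ gives $H^{1/(d-r-2)} \leq C^{1/(d-r-2)}\, 2^{F(r)}$. For $r \leq d-4$, where $d-r-2 \geq 2$, the estimate $\log_2(2d/r) \geq 1$ (which follows from $d \geq r$) implies $F(r) \leq (d-r-1)/(d-r-2) \leq 3/2$, so $H^{1/(d-r-2)} \leq C' = O(1)$. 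For the edge case $r = d-3$, where $d-r-2 = 1$, the Taylor expansion $\log_2(2d/(d-3)) = 1 + O(1/d)$ forces the bound on $H$ itself to degenerate to $H \leq C \cdot 2^{O(1)} = O(1)$, so $H^{1/1}$ is bounded directly. In either subcase one may take $\ell_0 := \lceil C' d \rceil$ for an absolute constant $C'$, and the plan closes.

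The main obstacle I anticipate is the uniform control of $F(r)$ across the full range $1 \leq r \leq d-3$, especially at the boundary $r = d-3$ where the estimate $F(r) \leq 3/2$ is no longer available and one must compute $(d-2) \log_2(2d/(d-3))$ by Taylor expansion. Tracking the $O(1)$ constant arising there is what ultimately pins down the value of $c$. A secondary nuisance is that \cref{lem:eta_nonasymptotic_decay} requires $d, \ell \geq 4$, which must either be absorbed into the constant $c$ or stipulated implicitly as a lower bound on $d$.
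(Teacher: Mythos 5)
Your proof is correct and follows the same high-level strategy as the paper's: restrict the sum to a tail $\ell \geq \ell_0$ on which $H\,M(r,\ell)/N(d,\ell) \leq 1/2$, apply the lower bound $\eta_\ell^2 \geq c''/\ell^2$ from \cref{lem:eta_nonasymptotic_decay}, and sum the tail. The one sub-step you handle genuinely differently is bounding $M(r,\ell)/N(d,\ell)$: the paper applies Stirling's approximation to both binomials and then specializes to $\ell \geq \mu d +1$ with $\mu = 1$, which directly produces the exponent $d - (r+1)\log_2(2d/r)$ appearing in the hypothesis on $H$; you instead observe that $\binom{r+\ell}{r} \leq \binom{\ell+d-3}{r}$ and $N(d,\ell) \geq 2\binom{\ell+d-3}{d-2}$ share a common top index, so their ratio is an exact product of $d-r-2$ factors, each at least $\ell/(d-2)$, yielding the cleaner estimate $M/N \leq \tfrac12\left((d-2)/\ell\right)^{d-r-2}$. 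This is more elementary and makes the $\ell$-dependence transparent, at the price of an extra step: you must extract $H^{1/(d-r-2)} = O(1)$ from the hypothesis, which requires the uniform control of $F(r)$ you outline (including the boundary case $r = d-3$, where $d - (r+1)\log_2(2d/r)$ is negative and $H$ is bounded by a constant outright), whereas the paper's Stirling route never encounters this because the exponent falls out directly. Two small remarks: the restriction ``$\ell \geq d$'' in your product-identity step is unnecessary --- cross-multiplying $\frac{\ell+d-2-k}{k} \geq \frac{\ell}{d-2}$ gives $(d-2-k)(\ell+d-2) \geq 0$, which holds for all $\ell \geq 1$ and $k \leq d-2$; and you are right that the $d,\ell > 4$ requirement of \cref{lem:eta_nonasymptotic_decay} must be absorbed into the constants, which the paper does silently (in both proofs $\ell_0 \geq d-1 \geq r+2 \geq 3$, so a mild lower bound on $d$ suffices).
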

\begin{proof}
\begin{equation} N(d,\ell)
= \frac{2\ell+d-2}{\ell}\binom{\ell+d-3}{\ell - 1} \end{equation}
Applying Stirling's approximation,
\begin{align}
N(d,\ell) &\gtrsim \frac{\ell+d-3}\ell \frac{(\ell+d-3)^{\ell+d-2.5}}{(\ell-1)^{\ell-0.5}(d-2)^{d-1.5}}\\
&\geq \frac{(\ell+d-3)^{\ell+d-1.5}}{\ell^{\ell+0.5}(d-2)^{d-1.5}}
\end{align}
Meanwhile, \cref{lem:num_marginal} and Stirling's approximation give
\begin{align}
M(r,\ell)
\leq \binom{r+\ell}{\ell}
\lesssim \frac{(r+\ell)^{r+\ell+0.5}}{r^{r+0.5}\ell^{\ell+0.5}}
\end{align}
By assumption, $r \leq d-3$, so
\begin{align}
\frac{M(r,\ell)}{N(d,\ell)}
&\lesssim \left(\frac{r+\ell}{\ell+d-3}\right)^{r+\ell+0.5}
\frac{(d-2)^{d-1.5}}{r^{r+0.5}(\ell+d-3)^{d-r-2}}\\
&\leq \frac{(d-2)^{d-1.5}}{r^{r+0.5}(\ell+d-3)^{d-r-2}}
\end{align}
The above expression is decreasing in $\ell$.
Thus for all $\ell \geq \mu d + 1$,
\begin{align}
\frac{M(r,\ell)}{N(d,\ell)}
&\lesssim \frac{(d-2)^{d-1.5}}{r^{r+0.5}((1+\mu)(d-2))^{d-r-2}}\\
&\leq \left(\frac{d}r\right)^{r+0.5} \frac1{(1+\mu)^{d-r-2}}\\
&= (1+\mu)^{-d+r+2+(r+0.5)\log_{1+\mu}(d/r)}
\end{align}
By assumption, $c/\epsilon \geq d + 1$, so the above holds with $\mu = 1$ for all $\ell \geq c/\epsilon$:
\begin{equation} \frac{M(r,\ell)}{N(d,\ell)} \lesssim 2^{-d + (r+1)\log_2\left(2d/r\right)} \end{equation}
Also by assumption, $H \leq C \cdot 2^{d - (r+1)\log_2\left(2d/r\right)}$.
Setting $C$ appropriately, $\left(1 - H \cdot \frac{M(r,\ell)}{N(d,\ell)}\right) \geq \frac12$ for all $\ell \geq c/\epsilon$
Finally, applying \cref{lem:eta_nonasymptotic_decay},
\begin{align}
\sum_{\ell \text{ odd}} \left(1 - H \cdot \frac{M(r,\ell)}{N(d,\ell)}\right) \eta_\ell^2
& \geq \sum_{\substack{\ell \geq c/\epsilon \\ \ell \text{ odd}}} \frac12 \cdot \frac{c''}{\ell^2} \\
& \geq \frac{c''}4 \sum_{\ell \geq c/\epsilon} \frac1{\ell^2} \\
& \geq \frac{c''}4 \cdot \frac{\epsilon}c
\end{align}
Setting $c = c''/4$ completes the proof.
\end{proof}

\begin{lemma}
\label{lem:big_eps}
There is a universal constant $c$ such that the following holds.
If $d \geq 5$, 
\begin{equation} \frac{2c}\epsilon < \frac{d}{2e^2} - r~, \end{equation}
and 
\begin{equation} H \leq \frac12 \left(\frac1{2e} \cdot \frac{d}{r+\frac{c}{\epsilon}}\right)^{\frac{c}{\epsilon}}~, \end{equation}
then
\begin{equation} \sum_{\ell \text{ odd}} \left(1 - H \cdot \frac{M(r,\ell)}{N(d,\ell)}\right) \eta_\ell^2 \geq \epsilon \end{equation}
\begin{equation}  \end{equation}
\end{lemma}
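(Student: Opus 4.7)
The plan is to restrict attention to odd $\ell$ in a window $[L, 2L]$, where $L := \lfloor c/\epsilon \rfloor$ for a small universal constant $c$, verify that $H \cdot M(r,\ell)/N(d,\ell) \leq 1/2$ uniformly on this window, and then use the slow decay $\eta_\ell^2 \gtrsim 1/\ell^2$ to aggregate contributions. The window has about $L/2$ odd integers each contributing at least $c''/(2L)^2$, totaling $\Omega(1/L) = \Omega(\epsilon)$.

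\textbf{Step 1 (clean ratio bound).} I would first establish
\begin{equation}
\frac{M(r,\ell)}{N(d,\ell)} \;\leq\; \left(\frac{r+\ell}{d}\right)^{\ell}
\qquad\text{for odd } \ell \geq 1.
\end{equation}
Combining $M(r,\ell) \leq \binom{r+\ell}{\ell}$ from \cref{lem:num_marginal} with the exact formula $N(d,\ell) = \frac{2\ell+d-2}{\ell}\binom{d+\ell-3}{\ell-1}$ and expanding the binomials gives
$\frac{M}{N} \leq \frac{r+\ell}{2\ell+d-2}\prod_{i=1}^{\ell-1}\frac{r+i}{d-2+i}$. Each factor $(r+i)/(d-2+i)$ is monotone increasing in $i$ (for $r \leq d-3$), so each is at most $(r+\ell-1)/(d+\ell-3)$, which is $\leq (r+\ell)/d$ by a line of algebra. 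Together with $2\ell+d-2 \geq d$ this yields the clean bound.

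\textbf{Step 2 (uniform smallness on the window).} Write $\ell = L + k$ with $k \in [0,L]$. Plugging the Step 1 bound and the hypothesis $H \leq \tfrac{1}{2}\bigl(\tfrac{d}{2e(r+L)}\bigr)^{L}$ into $HM/N$, the inequality $HM/N \leq 1/2$ reduces to
\begin{equation}
\left(\frac{r+L+k}{r+L}\right)^{L}\!\left(\frac{r+L+k}{d}\right)^{k}(2e)^{-L} \;\leq\; 1.
\end{equation}
The hypothesis $2c/\epsilon < d/(2e^2) - r$ gives $r+L+k \leq r+2L < d/(2e^2)$, so $((r+L+k)/d)^k \leq (2e^2)^{-k}$, and $(r+L+k)/(r+L) \leq 2$ yields $((r+L+k)/(r+L))^L \leq 2^L$. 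The left-hand side is therefore bounded by $2^L (2e^2)^{-k}(2e)^{-L} = e^{-(L+2k)} \leq 1$.

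\textbf{Step 3 (aggregating contributions).} By \cref{lem:eta_nonasymptotic_decay}, $\eta_\ell^2 \geq c''/\ell^2$ for $\ell > 4$, so for $\ell \in [L, 2L]$ each $\eta_\ell^2 \geq c''/(4L^2)$. The window contains at least $\lfloor L/2 \rfloor$ odd integers, hence
\begin{equation}
\sum_{\substack{\ell \in [L,2L] \\ \ell \text{ odd}}} \bigl(1 - HM/N\bigr)\eta_\ell^{2} \;\geq\; \tfrac{1}{2}\cdot\tfrac{L}{2}\cdot\tfrac{c''}{4L^{2}} \;=\; \tfrac{c''}{16L} \;\geq\; \tfrac{c''\epsilon}{16 c}.
\end{equation}
Choosing $c \leq c''/16$ delivers the bound $\geq \epsilon$. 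I expect the main obstacle to be Step 2: the inequality must hold uniformly over the whole window, and this relies delicately on the factor-of-two slack between $c/\epsilon$ (the exponent in the $H$ bound) and $2c/\epsilon$ (the spacing from $d/(2e^2)-r$), which ensures the window stays well below the peak of $\ell \mapsto ((r+\ell)/d)^{\ell}$. Any looser bookkeeping on the constants in Step 1 propagates into an incompatible requirement on $c$ in Step 3.
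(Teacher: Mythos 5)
Your proposal is correct and follows essentially the same strategy as the paper's proof: restrict to the window $[c/\epsilon, 2c/\epsilon]$, show $H\cdot M/N \leq 1/2$ there, and aggregate $\eta_\ell^2 \gtrsim 1/\ell^2$ to get $\Omega(\epsilon)$. The only differences are cosmetic — your Step~1 gives the slightly tighter ratio bound $\left(\frac{r+\ell}{d}\right)^\ell$ (the paper is content with $\left(\frac{2e(r+\ell)}{d}\right)^\ell$), and your Step~2 verifies smallness explicitly at each $\ell = L+k$ rather than invoking the monotonicity of $\ell \mapsto \left(\frac{2e(r+\ell)}{d}\right)^\ell$ as the paper does; both choices are fine and arrive at the same constant structure.
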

\begin{proof}
Recall the formula for $N(d,\ell)$ from \cref{eq:ndl}.
Lower bounding, for $\ell \geq 1$ and $d \geq 5$,
\begin{align}
N(d,\ell)
&= \frac{2\ell+d-2}{\ell}\binom{\ell+d-3}{\ell - 1}\\
&\geq \frac{\ell+d-3}\ell \left(\frac{\ell + d - 3}{\ell - 1}\right)^{\ell-1}
\geq \left(\frac{\ell + d - 3}{\ell}\right)^{\ell}\\
&\geq \left(\frac{d+\ell}{2\ell}\right)^\ell\\
\end{align}

Meanwhile, \cref{lem:num_marginal} gives
\begin{align}
M(r,\ell)
\leq \binom{r+\ell}{\ell}
\leq \left(\frac{e(r+\ell)}{\ell}\right)^\ell
\end{align}
Thus
\begin{equation} \frac{M(r,\ell)}{N(d,\ell)}
\leq \left(2e \cdot \frac{r+\ell}{d+\ell}\right)^\ell \leq \left(2e \cdot \frac{r+\ell}{d}\right)^\ell\end{equation}
The above is a decreasing function of $\ell$ for all $\ell < \frac{d}{2e^2} - r$.
Assume that $\frac{2c}{\epsilon} < \frac{d}{2e^2} - r$.
Then the following holds for all $\ell \in \left[\frac{c}{\epsilon}, \frac{2c}{\epsilon}\right]$:
\begin{equation} \frac{M(r,\ell)}{N(d,\ell)} \leq \left(2e \cdot \frac{r+\frac{c}{\epsilon}}{d}\right)^{\frac{c}{\epsilon}} \end{equation}
Assume $H \leq \frac12 \left(\frac1{2e} \cdot \frac{d}{r+\frac{c}{\epsilon}}\right)^{\frac{c}{\epsilon}}$.
Then for all $\ell \in \left[\frac{c}{\epsilon}, \frac{2c}{\epsilon}\right]$:
\begin{equation}
1 - H \cdot\frac{M(r,\ell)}{N(d,\ell)} \geq \frac12 \end{equation}
Finally, applying \cref{lem:eta_nonasymptotic_decay},
\begin{equation}
\sum_{\ell \text{ odd}} \left(1 - H \cdot \frac{M(r,\ell)}{N(d,\ell)}\right) \eta_\ell^2
\geq \frac12 \sum_{\ell \text{ odd}} \frac{c''}{\ell^2}
\geq \frac{c''}4\sum_{\ell=c/\epsilon}^{2c/\epsilon} \frac1{\ell^2} 
\geq \frac{c''}4 \cdot \frac{\epsilon}{2c}
\end{equation}
Setting $c = c''/8$ completes the proof.
\end{proof}

\subsection{Kernel Ridge Regression and Random Feature Approximation}
\label{sec:random_features_proof}
In this section, we analyze a simple approximation of the nearest neighbor function by standard rank-1 attention heads.
We show that $O(\epsilon^{-4} d^{2/\epsilon})$ heads suffice to achieve a squared approximation error of $\epsilon$, nearly matching the lower bound of \cref{thm:invariant_lower}.
First, we reduce this problem to approximating the surrogate target function $\tilde f$ by rank-1 hardmax heads.
Then we approximate $\tilde f$ in the RKHS generated by rank-1 hardmax attention heads (that is, generated by the feature map $\Tcal$).
Finally, we appeal to standard arguments to conclude that we can approximate $\tilde f$ by a finite linear combination of random rank-1 hardmax heads.

Recall that a standard rank-1 attention layer has the form $\sum_h \vo_h \vv_h^\top \mX \mathrm{sm}\left(\mX^\top \vk_h \vq_h^\top \vy\right)$ for $\vq_h, \vk_h, \vv_h, \vo_h \in \R^d$.
For simplicity, in this section we use rank-1 heads without a value/output transform, that is $\sum_h \alpha_h \mX \mathrm{sm}\left(\mX^\top \vk_h \vq_h^\top \vy\right)$ for $\alpha \in \R$.
Any such head can be constructed out of $d$ standard rank-1 heads by setting $\vv_h = \ve_i, \vv_o = \alpha \ve_i$ for $i \in [d]$, so this simplification does not meaningfully change our result.

\begin{lemma}
For any $u \in L^1(\Omega)$, there exists a rank-1 attention layer that approximates the nearest neighbor function $f$ up to expected squared error $\frac12\|\tilde f - \Tcal u\|_{\bar\tau}^2$, where $\Tcal$ is defined as in \cref{def:Tcal} and $\tilde f$ is the surrogate target function of \cref{def:surrogate_target}.
\end{lemma}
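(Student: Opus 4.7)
The plan is to exploit the algebraic identity already used in the proof of \cref{thm:invariant_lower}: the nearest-neighbor function $f$ reduces cleanly to the surrogate $\tilde f$ after projecting onto a principal direction, and a rank-1 hardmax head likewise reduces to the integrand of $\Tcal$. Introduce the orthonormal pair $\vx = (\vx_1 - \vx_2)/\sqrt 2$ and $\vw = (\vx_1 + \vx_2)/\sqrt 2$. Under $\mathcal D_2(\sphere)$ they are each marginally uniform on $\sphere$ and always orthogonal, and the target decomposes as
\begin{equation}
f(\mX; \vy) \;=\; \tfrac{1}{\sqrt 2}\vw \;+\; \tfrac{1}{\sqrt 2}\,\tilde f(\vx,\vy)\,\vx,
\end{equation}
since the argmin equals $\vx_1$ when $\sign(\vx^\top \vy) = +1$ and $\vx_2$ otherwise.

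Next, I would note that a single rank-1 hardmax head with parameters $(\vk, \vq)$ and identity value/output transform outputs $t_1 \vx_1 + t_2 \vx_2$ with $t_1 - t_2 = \rho(z,\omega)$ and $t_1 + t_2 = 1$, hence equals $\tfrac{1}{\sqrt 2}\vw + \tfrac{1}{\sqrt 2}\rho(z,\omega)\,\vx$. Using $u \in L^1(\Omega)$ as a weighting function, form a (continuous) linear combination of such heads, together with a single extra uniform-attention head that emits $\vw/\sqrt 2$ to absorb the unwanted constant factor $\int u\, d\bar\tau$ on the $\vw$-component. By the definition of $\Tcal$ the resulting layer output is
\begin{equation}
\tfrac{1}{\sqrt 2}\,\vw \;+\; \tfrac{1}{\sqrt 2}\,(\Tcal u)(z)\,\vx.
\end{equation}

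The error bound now follows from the orthogonality $\vx \perp \vw$ and $\|\vx\|=\|\vw\|=1$, which give
\begin{equation}
\bigl\|f(\mX;\vy) - \text{output}\bigr\|_2^2 \;=\; \tfrac{1}{2}\bigl(\tilde f(\vx,\vy) - (\Tcal u)(z)\bigr)^2,
\end{equation}
and taking expectation under $\bar\tau$ yields the stated bound $\tfrac{1}{2}\|\tilde f - \Tcal u\|_{\bar\tau}^2$. The only real obstacle is giving rigorous meaning to the continuous mixture $\int u(\omega)\, \rho(\cdot,\omega)\, d\bar\tau(\omega)$ as an ``attention layer'' in the sense of the paper. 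The natural resolution is to interpret the claim at the level of the integral operator and view the subsequent random-feature discretization (carried out in the next subsection) as the step that produces a genuinely finite-head layer; alternatively, a density argument in $L^1(\Omega)$ lets finite sums of rank-1 heads approximate $\Tcal u$ to arbitrary accuracy, so the bound can be attained within any $\varepsilon$ by a finite rank-1 layer.
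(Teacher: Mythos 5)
Your proof is correct and essentially reproduces the paper's argument: the same $(\vx,\vw)$ change of variables, the same decomposition of each rank-1 hardmax head as $\tfrac{1}{\sqrt 2}(\vw + \rho(z,\omega)\vx)$, the same averaging head (implemented by $\vq=\vk=\vzero$) to fix the $\vw$-coefficient at $\tfrac{1}{\sqrt2}$, and the same orthogonality argument to isolate the error $\tfrac12(\tilde f - \Tcal u)^2$. Your closing remark on interpreting the continuous mixture $\int u\,\rho\,d\bar\tau$ as a layer is apt; the paper uses exactly this continuous mixture and defers the finite-head discretization to the subsequent random-feature analysis.
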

\begin{proof}
As in the proof of \cref{thm:invariant_lower}, define
\begin{equation} \vx = \frac{\vx_1 - \vx_2}{\sqrt 2}~, \qquad \vw = \frac{\vx_1 + \vx_2}{\sqrt 2} ~.\end{equation}
We can rewrite the target function in terms of the surrogate target function as follows:
\begin{equation}
f(\vx_1, \vx_2; \vy) = \frac{\vx}{\sqrt 2} \tilde f(\vx, \vy) + \frac{\vw}{\sqrt 2}~.
\end{equation}
Likewise, we can write a rank-1 hardmax attention head as
\[ \mX \ \mathrm{hm}\left(\mX^\top \vk \vq^\top \vy\right) = \frac{\vx}{\sqrt 2} \rho(\vx, \vy; \vq, \vk) + \frac{\vw}{\sqrt 2} ~,\]
where $\rho(\vx, \vy; \vq, \vk) := \sgn(\vx^\top \vk \vq^\top \vy)$ is defined as in \cref{eq:rank1_hm}.
An ``averaging head'' is an attention head that always returns the average of the target points, regardless of the source point.
It can be implemented by a rank-1 softmax head by setting $\vq = \vk = \vzero$:
\[ \mX \sm\left(\mX^\top \vzero \vy\right) = \frac{\vx_1 + \vx_2}2 = \frac{\vw}{\sqrt 2}~. \]
We construct our approximation to $f$ by taking a linear combination of hardmax heads with coefficients given by $u$ plus a single averaging head with coefficient $1 - \int_\Omega u(\vq, \vk) d\bar\tau(\vq,\vk)$:
\begin{equation}
(\mX, \vy) \mapsto \int_\Omega u(\vq, \vk) \ \mX \ \mathrm{hm}\left(\mX^\top \vk \vq^\top \vy\right) d\bar\tau(\vq, \vk) + \left(1 - \int_\Omega u(\vq, \vk) d\bar\tau(\vq,\vk)\right) \frac{\vx_1 + \vx_2}{2}~.
\end{equation}
To analyze its error, we use the Pythagorean theorem.
Due to the averaging head, the projection of the error onto $\vw$ is zero.
What remains is the projection of the error onto $\vx$:
\begin{align}
&\E_{\substack{\vx_1, \vx_2 \sim \mathcal{D}_2(\sphere)\\ \vy \sim \unif(\sphere)}}
\left\|f(\mX; \vy) - \left[\int_\Omega u(\vq, \vk) \ \mX \ \mathrm{hm}\left(\mX^\top \vk \vq^\top \vy\right) d\bar\tau(\vq, \vk) + \left(1 - \int_\Omega u(\vq, \vk) d\bar\tau(\vq,\vk)\right) \frac{\vw}{\sqrt 2}\right]\right\|^2\\
= &\E_{\vx, \vy \sim \unif(\sphere)}
\frac12\left(\tilde \vx^\top f(\mX; \vy) - \int_\Omega u(\vq, \vk) \ \vx^\top \mX \ \mathrm{hm}\left(\mX^\top \vk \vq^\top \vy\right) d\bar\tau(\vq, \vk)\right)^2 =: \frac12 \left\|\tilde f - \Tcal u\right\|_{\bar\tau}^2\\
= &\E_{\vx, \vy \sim \unif(\sphere)}
\frac12\left(\tilde f(\vx, \vy) - \int_\Omega \rho(\vx, \vy; \vq, \vk) u(\vq, \vk) d\bar\tau(\vq, \vk) \right)^2 =: \frac12 \left\|\tilde f - \Tcal u\right\|_{\bar\tau}^2~.
\end{align}
\end{proof}

By the above lemma, our task is to find a finitely supported signed measure $u$ for which $\tilde f \approx \Tcal u$.
We next show that it is possible to exactly represent $\tilde f$ using a measure that is not finitely supported.

\begin{lemma}
\label{lem:target_in_image}
The surrogate target function $\tilde f$ lies in the span of $\{\Tcal(Y_\ell^i \otimes Y_\ell^i)\}$. Furthermore, $\tilde f= \Tcal u$ where $u : \Omega \to \R$ is defined as follows:
\begin{equation}
\label{eq:u_t}
u(\omega) = \frac{\pi}2 \sum_{\ell \text{ odd}}\frac{\eta_\ell}{\alpha_\ell} N(d,\ell) \cdot P_\ell(\vq^\top \vk)~.
\end{equation}
\end{lemma}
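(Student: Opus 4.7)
The plan is to verify $\tilde f = \Tcal u$ by matching the spherical harmonic expansions of both sides. First I would use the addition formula \eqref{eq:addition} to rewrite the proposed $u$ as
\begin{equation*}
u(\vq, \vk) = \frac{\pi}{2}\sum_{\ell\text{ odd}} \sum_{j=1}^{N(d,\ell)} \frac{\eta_\ell}{\alpha_\ell}\, Y_\ell^j(\vq)\, Y_\ell^j(\vk)~,
\end{equation*}
which decouples the $\vq$ and $\vk$ variables. Since the kernel $\rho(z,\omega) = \sign(\vx^\top \vk)\sign(\vq^\top \vy)$ factorizes in exactly the same way, $(\Tcal u)(\vx, \vy)$ splits into a sum of products of single-sphere integrals of the form $\int_{\sphere} \sign(\vx^\top \vk)\, Y_\ell^j(\vk)\, d\tau(\vk)$.

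Next, I would evaluate each integral via Hecke-Funk \eqref{eq:hecke_funk} applied to $\sign$, which yields $\eta_\ell\, Y_\ell^j(\vx)/\sqrt{N(d,\ell)}$ after using \eqref{eq:geg_norm}, and analogously for the $\vq$ integral. Collecting factors gives
\begin{equation*}
(\Tcal u)(\vx, \vy) = \frac{\pi}{2}\sum_{\ell\text{ odd}}\sum_j \frac{\eta_\ell}{\alpha_\ell}\cdot \frac{\eta_\ell^2}{N(d,\ell)}\,Y_\ell^j(\vx) Y_\ell^j(\vy)~,
\end{equation*}
and substituting the identity from \cref{lem:alpha_decay} lets the normalization factors cancel down to a coefficient of $\eta_\ell/\sqrt{N(d,\ell)}$. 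On the other hand, applying Hecke-Funk directly to $\tilde f(\vx,\vy) = \sign(\vx^\top \vy)$, viewed as a function of $\vy$ for each fixed $\vx$, shows that its $L^2(\Xcal)$-expansion in the orthonormal tensor basis $\{Y_\ell^j(\vx) Y_m^i(\vy)\}$ is exactly $\sum_{\ell\text{ odd},j} \frac{\eta_\ell}{\sqrt{N(d,\ell)}}\, Y_\ell^j(\vx)\, Y_\ell^j(\vy)$, with only odd $\ell$ contributing since $\eta_\ell = 0$ for even $\ell$ by parity of $\sign$. The two expansions agree coefficient-by-coefficient, establishing $\tilde f = \Tcal u$.

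For the span claim, the identity $\tilde f = \Tcal u$ together with the harmonic expansion of $u$ above expresses $\tilde f$ as an $L^2$-limit of linear combinations of the images $\Tcal(Y_\ell^j\otimes Y_\ell^j)$ by linearity of $\Tcal$; equivalently, \cref{lem:target_expansion} already shows that $\tilde f$ is orthogonal to every other element of the orthogonal family $\{\Tcal(Y\otimes Y')\}$. The main obstacle is purely bookkeeping: tracking the normalizing constants $\|P_\ell\|_{u_d}$, $N(d,\ell)$, and the ratio $\eta_\ell/\alpha_\ell$ so that they telescope correctly. Conceptually, the computation is just the diagonalization of $\Tcal$ in the tensorized spherical-harmonic basis in which $\tilde f$ admits its most natural expansion.
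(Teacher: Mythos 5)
Your approach is genuinely different from the paper's.  The paper proves the lemma ``backwards'': it first establishes, via Parseval, that the squared projection coefficients $\sum_{\ell \text{ odd}} N(d,\ell)\cdot \eta_\ell^2/N(d,\ell) = \sum_{\ell \text{ odd}}\eta_\ell^2 = \|\sign\|_{u_d}^2 = 1 = \|\tilde f\|_{\bar\tau}^2$, which shows that the orthogonal projection of $\tilde f$ onto $\mathrm{span}\{\Tcal(Y_\ell^i\otimes Y_\ell^i)\}$ already has full energy and hence equals $\tilde f$; then it reads off $u$ from that expansion by dividing by the basis normalization constants from \cref{lem:ortho_basis}.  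Notably, this derivation expresses $u$ in terms of $\alpha_\ell$ \emph{without} ever needing to know $\alpha_\ell$ in closed form, so it makes no use of \cref{lem:alpha_decay}.  You instead verify $\tilde f = \Tcal u$ by a forward computation: expand $u$ via the addition formula, factorize $\rho$ as $\sign(\vx^\top\vk)\sign(\vq^\top\vy)$, apply Hecke--Funk to each single-sphere integral, and match coefficients against the tensorized harmonic expansion of $\sign(\vx^\top\vy)$.  This is arguably more transparent, but it pays for the transparency by needing the explicit relation between $\alpha_\ell$ and $\eta_\ell$ for the constants to close up.

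Here is the one step you should double-check.  Your intermediate expression $(\Tcal u)(\vx,\vy) = \frac{\pi}{2}\sum_{\ell\text{ odd}}\sum_j \frac{\eta_\ell^3}{\alpha_\ell N(d,\ell)} Y_\ell^j(\vx) Y_\ell^j(\vy)$ is correct.  For this to equal $\sum_{\ell\text{ odd}}\sum_j \frac{\eta_\ell}{\sqrt{N(d,\ell)}} Y_\ell^j(\vx) Y_\ell^j(\vy)$ one needs $\alpha_\ell = \frac{\pi}{2}\,\eta_\ell^2/\sqrt{N(d,\ell)}$.  But \cref{lem:alpha_decay} as printed states $\alpha_\ell = \frac14\,\eta_\ell^2/\sqrt{N(d,\ell)}$; substituting that value literally gives $(\Tcal u) = 2\pi\tilde f$, not $\tilde f$, so the cancellation you assert does not hold with the printed constant.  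The resolution is that the printed $\frac14$ is a slip in \cref{lem:alpha_decay}: from the arc-cosine kernel identity $\frac{1}{2\pi}(\pi - \arccos(x\cdot y)) = \E_\theta\bigl[\mathbf{1}[x\cdot\theta>0]\,\mathbf{1}[y\cdot\theta>0]\bigr]$ and $\mathbf{1}[a>0] = \tfrac{1}{2}(1+\sign a)$ one obtains $\arcsin(x\cdot y) = \frac{\pi}{2}\E_\theta\bigl[\sign(x\cdot\theta)\sign(y\cdot\theta)\bigr]$, so the correct constant is $\frac{\pi}{2}$, under which your coefficient does reduce to $\eta_\ell/\sqrt{N(d,\ell)}$ as you claim.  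In short: your derivation is correct, but because it routes through \cref{lem:alpha_decay}, you should make the $\frac{\pi}{2}$ explicit rather than cite the lemma, whose stated constant would break the calculation.  A minor additional point for the span claim: the forward computation shows $\tilde f$ is a limit of linear combinations of $\Tcal(Y_\ell^j\otimes Y_\ell^j)$, which is fine, but the Parseval route in the paper is cleaner because it certifies membership in the closed span without needing to justify interchanging $\Tcal$ with the infinite sum defining $u$ (recall $u\notin L^2(\Omega)$, so some care is required there).
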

\begin{proof}
For each odd $\ell$, let $\{Y_\ell^i\}_{i=1}^{N(d,\ell)}$ be an orthonormal basis for $\Fcal_\ell$.
Applying \cref{lem:target_expansion}, the norm of the projection of $\tilde f$ onto the span of $\{\Tcal(Y_\ell^i \otimes Y_\ell^i)\}$ is
\begin{equation}
\sum_{i=1}^{N(d,\ell)} \inner{\tilde f, \frac{\Tcal (Y_\ell^i \otimes Y_\ell^i)}{\|\Tcal (Y_\ell^i \otimes Y_\ell^i)\|_{\bar\tau}}}_{\bar\tau}^2
= \sum_{i=1}^{N(d,\ell)} \frac{\eta_\ell^2}{N(d,\ell)} = \eta_\ell^2~.
\end{equation}
Summing across all (odd) degrees, the energy equals that of $\tilde f$ itself.
\begin{align}
\sum_{\ell=0}^\infty \eta_{2\ell+1}^2
= \|\sign\|_{\bar\tau}^2
= 1
= \|\tilde f\|_{\bar\tau}^2~.
\end{align}
Thus, the projection of $\tilde f$ onto this basis equals $\tilde f$.
In addition, this implies that $\tilde f$ is in the range of $\Tcal$:
\begin{align}
\tilde f
&= \sum_{\ell \text{ odd}} \sum_{i=1}^{N(d,\ell)} \inner{\tilde f, \frac{\Tcal (Y_\ell^i \otimes Y_\ell^i)}{\|\Tcal (Y_\ell^i \otimes Y_\ell^i)\|_{\bar\tau}}}_{\bar\tau} \frac{\Tcal (Y_\ell^i \otimes Y_\ell^i)}{\|\Tcal (Y_\ell^i \otimes Y_\ell^i)\|_{\bar\tau}}\\
&= \sum_{\ell \text{ odd}} \sum_{i=1}^{N(d,\ell)} \frac{\eta_\ell}{\sqrt{N(d,\ell)}} \cdot \frac1{\frac2{\pi}\alpha_\ell \sqrt{N(d,\ell)}}
\Tcal (Y_\ell^i \otimes Y_\ell^i)\\
&= \Tcal\left(\frac{\pi}2 \sum_{\ell \text{ odd}} \frac{\eta_\ell}{\alpha_\ell} \sum_{i=1}^{N(d,\ell)} (Y_\ell^i \otimes Y_\ell^i) \right)\\
&= \Tcal(u)~,
\end{align}
where, by the addition formula,
\begin{equation}
u(\omega) = \frac{\pi}2 \sum_{\ell \text{ odd}}\frac{\eta_\ell}{\alpha_\ell} N(d,\ell) \cdot P_\ell(\vq^\top \vk)~.
\end{equation}
\end{proof}

Thus, it is possible to exactly represent the surrogate target with an infinite number of rank-1 heads, each weighted according to $u(\cdot) d\bar\tau(\cdot)$.
See \cref{fig:u_t} for an illustration of this function.
We can think of $u(\cdot) d\bar\tau(\cdot)$ as a signed measure over rank-1 heads that depends only on $\angle(\vq, \vk)$.
Notice that the hardmax head function $\rho$ is odd in each of its arguments $\vq$ and $\vk$.
Since $u(\cdot)$ is also an odd function, we get the same results by restricting this measure to $[-\frac{\pi}2, \frac{\pi}2]$.
\cref{fig:u_t} shows that for large $d$, the (restricted) measure $u(\cdot)$ approaches a Gaussian distribution centered at angle $0$.

\begin{figure}
\centering
\includegraphics[width=0.8\textwidth]{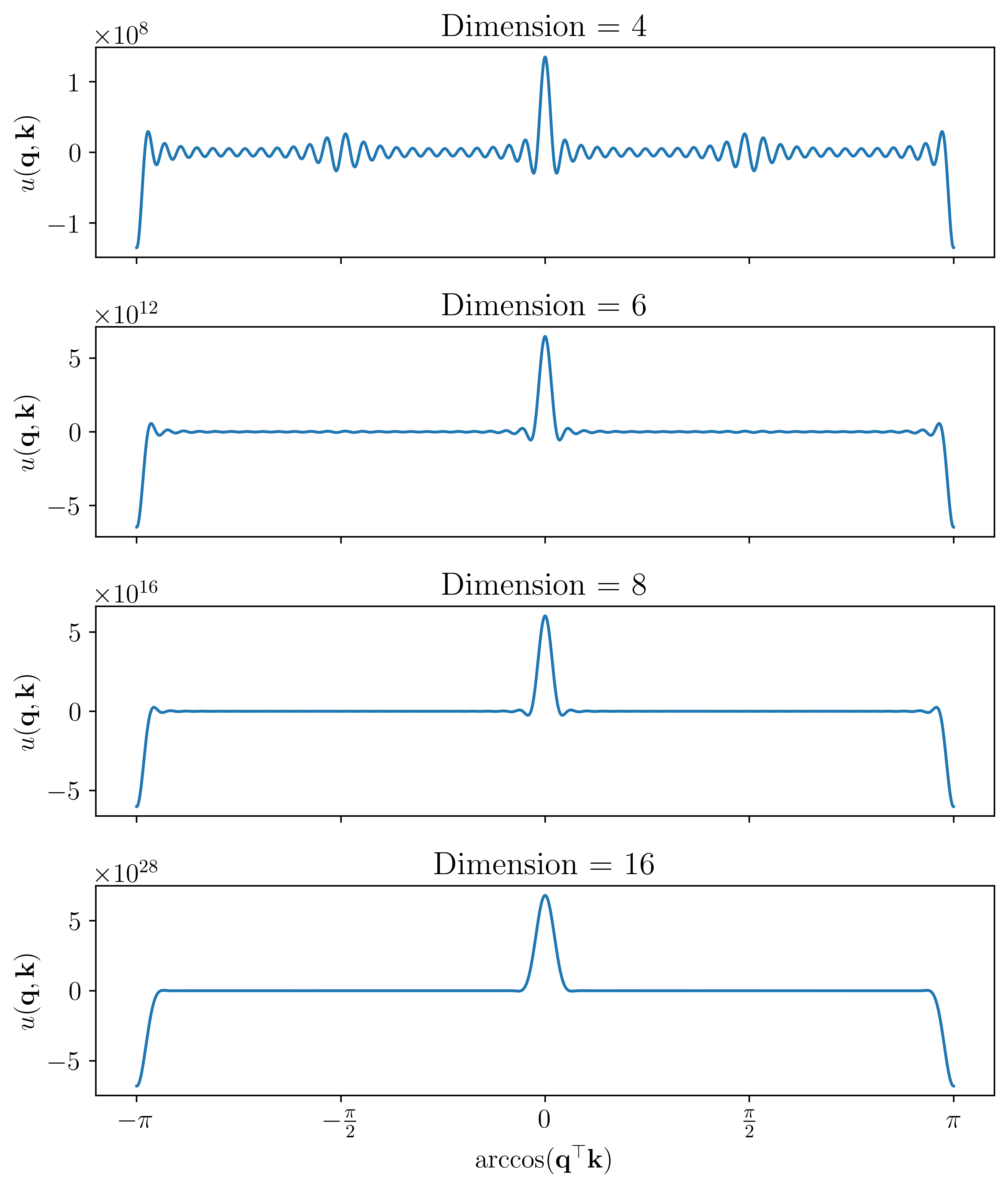}
\caption{Approximation to $u(\cdot)$ of \cref{eq:u_t} for several dimensions, using degree-50 ultraspherical expansion.
Heads with $\angle(\vq, \vk) = \theta$ are equivalent to those with angles $\theta \pm \pi$ up to a sign flip.
For large dimension, the distribution over $\angle(\vq, \vk)$ induced by $u$ approaches a Gaussian with mean $0$.
}
\label{fig:u_t}
\end{figure}

We have now shown how to represent $\tilde f$ using $\Tcal$.
This representation gives us a great deal of insight into the structure of $\tilde f$ for the following reason.
Implicit in the discussion above is the reproducing kernel Hilbert structure induced by the map $\Tcal$, as the following lemma shows:
\begin{lemma}
\label{lem:rkhs_target}
Let $\mathcal H \subseteq L^2(\Xcal)$ be the image of $\Tcal$.
Then $\mathcal H$ is a reproducing kernel Hilbert space with norm:
\begin{equation} \|f\|_{\mathcal H} = \inf \{\|u\|_{\bar\tau} : u \in \mathcal G, f = \Tcal u\}  \end{equation}
and kernel:
\begin{equation}(z, z') \mapsto \E_{\omega \sim \bar \tau}\left[\rho(z, \omega)\rho(z', \omega)\right]~.\end{equation}
\end{lemma}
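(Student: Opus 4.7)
\section*{Proof Plan for Lemma \ref{lem:rkhs_target}}

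The plan is to equip $\mathcal H = \mathrm{Im}(\Tcal)$ with the unique Hilbert space structure that makes $\Tcal$ a partial isometry, and then verify the reproducing property directly. First I would observe that $\Tcal : L^2(\Omega) \to L^2(\Xcal)$ is bounded: since $|\rho(z,\omega)| \leq 1$, Cauchy--Schwarz gives $|(\Tcal u)(z)| \leq \|u\|_{\bar\tau}$ pointwise, so $\|\Tcal u\|_{\bar\tau} \leq \|u\|_{\bar\tau}$. Let $\mathcal N = \ker(\Tcal)$ and let $P$ denote orthogonal projection onto $\mathcal N^\perp$ in $L^2(\Omega)$. The restriction $\Tcal|_{\mathcal N^\perp} : \mathcal N^\perp \to \mathcal H$ is a bijection, and we transfer the inner product from $\mathcal N^\perp$ to $\mathcal H$ via this map. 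Concretely, for $f = \Tcal u$ and $g = \Tcal v$ set
\begin{equation}
\langle f, g \rangle_{\mathcal H} := \langle Pu, Pv \rangle_{\bar\tau}~.
\end{equation}
This is well defined because $Pu$ depends only on $f$ (any two preimages differ by an element of $\mathcal N$), and it makes $\mathcal H$ a Hilbert space isometric to $\mathcal N^\perp$.

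Next I would identify the resulting norm with the variational formula in the statement. For any preimage $u$ of $f$, the orthogonal decomposition $u = Pu + (u - Pu)$ gives $\|u\|_{\bar\tau}^2 = \|Pu\|_{\bar\tau}^2 + \|u - Pu\|_{\bar\tau}^2 \geq \|Pu\|_{\bar\tau}^2 = \|f\|_{\mathcal H}^2$, with equality when $u = Pu$. Hence $\|f\|_{\mathcal H} = \inf\{\|u\|_{\bar\tau} : \Tcal u = f\}$, matching the claim.

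For the reproducing kernel, define $K_z := \Tcal(\rho(z,\cdot)) \in \mathcal H$. Expanding the integral gives
\begin{equation}
K_z(z') = \int_\Omega \rho(z',\omega)\rho(z,\omega)\, d\bar\tau(\omega) = \E_{\omega \sim \bar\tau}[\rho(z,\omega)\rho(z',\omega)] =: K(z,z')~,
\end{equation}
so $K_z$ evaluates at $z'$ to the proposed kernel. To verify the reproducing property, take any $f = \Tcal u$ with $u = Pu \in \mathcal N^\perp$. Since $\rho(z,\cdot)$ is one preimage of $K_z$ and $u \in \mathcal N^\perp$,
\begin{equation}
\langle f, K_z \rangle_{\mathcal H} = \langle u, P\rho(z,\cdot) \rangle_{\bar\tau} = \langle u, \rho(z,\cdot) \rangle_{\bar\tau} = (\Tcal u)(z) = f(z)~,
\end{equation}
where the second equality uses $u \in \mathcal N^\perp$ and $\rho(z,\cdot) - P\rho(z,\cdot) \in \mathcal N$. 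This establishes that $K$ is the reproducing kernel of $\mathcal H$.

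The only subtle step is ensuring that $\mathcal H$, defined as $\mathrm{Im}(\Tcal)$, is genuinely complete under the transferred norm; this follows because $\mathcal N^\perp$ is a closed subspace of $L^2(\Omega)$ and the isometric image of a Hilbert space is a Hilbert space. A bit of care is also needed because $\rho$ takes values in $\{\pm 1\}$ on a measure-zero set and we should interpret $\rho(z,\cdot)$ as an element of $L^2(\Omega)$ for every $z$; this is immediate from $\|\rho(z,\cdot)\|_{\bar\tau} \leq 1$. Everything else is bookkeeping within the standard framework for RKHSs arising from integral operators.
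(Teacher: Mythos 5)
The paper does not actually prove this lemma; it defers entirely to Appendix~A of \cite{bach2017breaking}. Your proof is correct and is exactly the standard pullback construction of the RKHS associated with a feature map---restrict $\Tcal$ to $(\ker\Tcal)^\perp \subset L^2(\Omega)$, transfer the inner product through this bijection to get the variational norm, and verify that $K_z = \Tcal\bigl(\rho(z,\cdot)\bigr)$ reproduces because $\rho(z,\cdot) - P\rho(z,\cdot) \in \ker\Tcal$---which is precisely what the cited reference presents, so this is a complete and faithful filling-in of the omitted argument.
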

The proof is given in \cite{bach2017breaking}, Appendix A.
Also note that kernel of this RKHS directly corresponds to the operator $\Tcal \Tcal^*$ by the following formula:
\begin{equation}
\left(\Tcal \Tcal^* f\right)(z) = \int_\Xcal \E_{\omega \sim \bar \tau}\left[\rho(z, \omega)\rho(z', \omega)\right] f(z') d\bar\tau(z')~.
\end{equation}

If our target function $\tilde f$ were an element of this Hilbert space, we would immediately be able to approximate it using random features.
Unfortunately, $\tilde f \not \in \mathcal H$ because
\begin{equation}
\|\tilde f\|_{\mathcal H} = \|u\|_{\bar\tau} = \sum_{\ell \text{ odd}} \left(\frac{\eta_\ell}{\alpha_\ell}\right)^2 N(d,\ell) = \infty~.
\end{equation}
However, we can approximate $f$ by an element of $\mathcal H$ obtained from solving a ridge regression problem.
For any $\lambda > 0$, let $\tilde f_\lambda$ be the solution to the following optimization problem:
\begin{equation}\label{eq:ridge}
    \min_{\hat{f} \in \mathcal{H}} \| \tilde f - \hat{f} \|_{\bar\tau}^2 + \lambda \| \hat{f} \|^2_{\mathcal{H}}~.
\end{equation}
By tuning $\lambda$, we can find an function that accurately approximates $\tilde f$ and that is smooth enough to be approximated using random features.
The following lemma constructs this $\tilde f_\lambda$.
Though we obtained this construction by solving \cref{eq:ridge}, for brevity we do not prove that it is the solution since it is not necessary for our construction.

\begin{lemma}
\label{lem:ridge_bounds}
For any regularization parameter $\lambda > 0$, there exists a function $\tilde f_{\lambda} \in \mathcal H$ for which
\begin{align}
\|\tilde f - \tilde f_\lambda\|_{\bar\tau}^2 &\leq \sum_{\ell \text{ odd}}\eta_\ell^2 \left(\frac{\lambda N(d,\ell)}{(\frac2{\pi}\alpha_\ell)^2 + \lambda N(d,\ell)} \right)^2~.
\end{align}
\end{lemma}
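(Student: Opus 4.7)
The plan is to take $\tilde f_\lambda$ to be the standard kernel ridge regression estimator, and compute its error exactly via the spectral decomposition of $\Tcal$. Specifically, define
\begin{equation} \tilde f_\lambda := \Tcal(\Tcal^*\Tcal + \lambda I)^{-1}\Tcal^* \tilde f~. \end{equation}
This lies in the image of $\Tcal$, hence in $\mathcal H$. To analyze it, I would first record the singular value decomposition of $\Tcal$ implied by \cref{lem:eigenfunctions}. Choosing orthonormal bases $\{Y_\ell^i\}$ of each $\Fcal_\ell$, the tensor products $Y_\ell^i \otimes Y_{\ell'}^j$ form an orthonormal basis of $L^2(\Omega)$ that simultaneously diagonalizes $\Tcal^*\Tcal$, with eigenvalues
\begin{equation} \lambda_{\ell,\ell'} := \frac{4}{\pi^2}\frac{\alpha_\ell \alpha_{\ell'}}{\sqrt{N(d,\ell)N(d,\ell')}}~. \end{equation}
Consequently, the normalized images $\psi_{\ell,\ell',i,j} := \Tcal(Y_\ell^i\otimes Y_{\ell'}^j)/\sqrt{\lambda_{\ell,\ell'}}$ form an orthonormal basis of the closure of the range of $\Tcal$, and on each $\psi_{\ell,\ell',i,j}$ the operator $\Tcal(\Tcal^*\Tcal+\lambda I)^{-1}\Tcal^*$ acts as scalar multiplication by $\lambda_{\ell,\ell'}/(\lambda_{\ell,\ell'}+\lambda)$. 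Therefore the residual is a filtered expansion:
\begin{equation} \tilde f - \tilde f_\lambda = \sum_{\ell,\ell',i,j}\frac{\lambda}{\lambda_{\ell,\ell'}+\lambda}\,\bigl\langle \tilde f,\psi_{\ell,\ell',i,j}\bigr\rangle_{\bar\tau}\,\psi_{\ell,\ell',i,j}~. \end{equation}

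Next, I would insert the coefficients of $\tilde f$ computed in \cref{lem:target_expansion}, which give $\langle \tilde f,\psi_{\ell,\ell',i,j}\rangle_{\bar\tau} = (\eta_\ell/\sqrt{N(d,\ell)})\,\delta_{\ell,\ell'}\delta_{i,j}$ for odd $\ell$ and zero otherwise. Plugging in the diagonal eigenvalue $\lambda_{\ell,\ell} = (\tfrac{2}{\pi}\alpha_\ell)^2/N(d,\ell)$ and using Parseval in the orthonormal basis $\{\psi_{\ell,\ell',i,j}\}$ yields
\begin{equation} \|\tilde f - \tilde f_\lambda\|_{\bar\tau}^2 = \sum_{\ell\text{ odd}}\sum_{i=1}^{N(d,\ell)}\left(\frac{\lambda}{\lambda_{\ell,\ell}+\lambda}\right)^2 \frac{\eta_\ell^2}{N(d,\ell)} = \sum_{\ell\text{ odd}}\eta_\ell^2\left(\frac{\lambda N(d,\ell)}{(\tfrac{2}{\pi}\alpha_\ell)^2 + \lambda N(d,\ell)}\right)^2~, \end{equation}
which is precisely the stated bound (in fact with equality).

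There is no real obstacle here beyond keeping the normalizations straight: the only subtle point is that Parseval must be applied in $L^2(\Xcal)$ using the $\psi$'s, and that requires knowing that $\tilde f$ actually lies in $\overline{\mathrm{Range}(\Tcal)}$, which is supplied by \cref{lem:target_in_image}. Once one has the SVD from \cref{lem:eigenfunctions} and the target coefficients from \cref{lem:target_expansion}, the computation is a direct spectral filter calculation and requires no further estimates.
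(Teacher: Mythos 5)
Your proposal is correct and is essentially the paper's own proof. The paper defines $\tilde f_\lambda = \Tcal g_\lambda$ with the explicit coefficient $\gamma_\ell = \frac{\frac2\pi \alpha_\ell \eta_\ell}{(\frac2\pi\alpha_\ell)^2 + \lambda N(d,\ell)}$ on $\sum_i Y_\ell^i\otimes Y_\ell^i$, which is exactly what your operator expression $\Tcal(\Tcal^*\Tcal + \lambda I)^{-1}\Tcal^*\tilde f$ reduces to after diagonalizing in the $\Tcal(Y\otimes Y')$ basis; the paper even remarks that its $g_\lambda$ was obtained by solving the ridge problem but omits the derivation for brevity, which is precisely the step you spell out.
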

\begin{proof}
Define
\begin{align}
\tilde f_\lambda &:= \Tcal g_\lambda\\
g_\lambda &:= \sum_{\ell \text{ odd}} \sum_{i=1}^{N(d,\ell)} \gamma_\ell \cdot (Y_\ell^i \otimes Y_\ell^i)\\
\gamma_\ell &:= \frac{\frac2{\pi}\alpha_\ell \eta_\ell}{(\frac2{\pi}\alpha_\ell)^2 + \lambda N(d,\ell)}~.
\end{align}
Then by \cref{lem:rkhs_target}
\begin{align}
\|\tilde f_\lambda\|_{\mathcal H}^2
\leq \|g_\lambda\|_{\bar\tau}^2
&= \sum_{\ell \text{ odd}} N(d,\ell) \gamma_\ell^2\\
&\leq \sum_{\ell = 1}^{\ell_\lambda} N(d,\ell) \gamma_\ell^2
+
\sum_{\ell > \ell_\lambda} N(d,\ell) \eta_\ell^2 \left(\frac{\frac2{\pi}\alpha_\ell}{\lambda N(d,\ell)}\right)^2 \\
&\leq \sum_{\ell = 1}^{\ell_\lambda} N(d,\ell) \gamma_\ell^2 + \frac1{\lambda^2}\\
&< \infty~.
\end{align}
Thus $\tilde f \in \mathcal H$.
Furthermore, by the representation $\tilde f = \Tcal u$ of \cref{lem:target_in_image}
\begin{align}
\|\tilde f - \tilde f_\lambda\|_{\bar\tau}^2
= \|\Tcal\left(u - g_\lambda\right)\|_{\bar\tau}^2
= \left\|\sum_{\ell \text{ odd}}\sum_{i=1}^{N(d,\ell)}\left(\frac{\pi}2 \frac{\eta_\ell}{\alpha_\ell} - \gamma_\ell\right)\cdot\Tcal(Y_\ell^i \otimes Y_\ell^i)\right\|_{\bar\tau}^2~.
\end{align}
By \cref{lem:ortho_basis}, this is equal to 
\begin{align}
&= \sum_{\ell \text{ odd}}\sum_{i=1}^{N(d,\ell)}\left(\frac{\pi}2 \frac{\eta_\ell}{\alpha_\ell} - \gamma_\ell\right)^2\cdot\left\|\Tcal(Y_\ell^i \otimes Y_\ell^i)\right\|_{\bar\tau}^2\\
&=\sum_{\ell \text{ odd}} N(d,\ell) \left(\frac{\pi}2 \frac{\eta_\ell}{\alpha_\ell} - \gamma_\ell\right)^2 \frac4{\pi^2} \frac{\alpha_\ell^2}{N(d,\ell)} \\
&=\sum_{\ell \text{ odd}}\left(\eta_\ell - \frac2{\pi} \alpha_\ell \gamma_\ell \right)^2\\
&=\sum_{\ell \text{ odd}}\eta_\ell^2 \left(\frac{\lambda N(d,\ell)}{(\frac2{\pi}\alpha_\ell)^2 + \lambda N(d,\ell)} \right)^2~.
\end{align}
\end{proof}

We now derive an informal expression for the kernel ridge regression approximation using a tuned
regularization and describe its implications for random feature approximation in the high-dimensional regime.
From \Cref{lem:eta_decay} and \Cref{lem:alpha_decay}, we have $\eta_\ell^2 \lesssim \ell^{-3/2}$ and $\alpha_\ell^2 \sim \eta_\ell^4 / N(d, \ell)$.
By \Cref{lem:ridge_bounds}, for the kernel ridge regression approximation $\tilde f_\lambda$ to attain squared error $\epsilon$, we should set $\lambda$ so that $\lambda N(d, \ell^*) \simeq \alpha_{\ell^*}^2 $, where $\ell^* \sim 1/\epsilon^2$.
This roughly ensures that only degrees $\ell \gtrsim \ell^*$ are kept, while $\ell \lesssim \ell^*$ are shrunk, and hence
\begin{equation}
\|\tilde f - \tilde f_\lambda\|
\lesssim \sum_{\substack{\ell \gtrsim \ell^* \\ \ell \text{ odd}}} \eta_\ell^2
\lesssim \frac12\sum_{\ell \gtrsim \epsilon^{-2}} \ell^{-3/2}
\sim \epsilon~.
\end{equation} 
We thus obtain $\lambda \sim \alpha_{\ell^*}^2/ N(d, \ell^*) \sim \epsilon^6 N(d, \epsilon^{-2})^{-2}$.

Now that we have a sufficiently accurate kernel ridge regression approximation $\tilde f_\lambda \in \mathcal H$, we can approximate it using random features.
The key quantity controlling the number of random features needed is the \emph{degrees of freedom} of the kernel integral operator, defined as $D(\lambda) := \mathrm{tr}\left[\Tcal\Tcal^* (\Tcal\Tcal^* + \lambda \mI)^{-1}\right]$.
The eigenvalues of $\Tcal \Tcal^*$ are the same as those of $\Tcal^* \Tcal$.
By \Cref{lem:eigenfunctions}, these are 
$\left\{\frac4{\pi^2} \frac{\alpha_\ell \alpha_{\ell'}}{\sqrt{N(d,\ell)N(d,\ell')}} \mid \ell, \ell' \geq  0\right\}$, with the $(\ell, \ell')$-th eigenvalue having multiplicity $N(d,\ell)N(d,\ell') $.
Hence
\begin{equation}
D(\lambda)
= \sum_{\ell,\ell'} N(d,\ell)N(d,\ell') \cdot \frac{ \frac{\alpha_\ell \alpha_{\ell'}}{\sqrt{N(d,\ell)N(d,\ell') }}}{\frac{\alpha_\ell \alpha_{\ell'}}{\sqrt{N(d,\ell)N(d,\ell') }} + \lambda}
\leq \sum_{\ell,\ell'} N(d,\ell)N(d,\ell') \cdot \frac{ \frac{\alpha_\ell \alpha_{\ell'}}{\sqrt{N(d,\ell)N(d,\ell') }}}{\lambda}~.
\end{equation}
By \Cref{lem:alpha_decay}, $\frac{\alpha_\ell \alpha_{\ell'}}{\sqrt{N(d,\ell)N(d,\ell')}} \sim \frac{\eta_\ell^2 \eta_{\ell'}^2}{N(d,\ell)N(d, \ell')}$, so
\begin{align}
D(\lambda)
\sim \frac1{\lambda}\sum_{\ell,\ell'} \eta_\ell^2 \eta_{\ell'}^2
= \frac1{\lambda}\left(\sum_{\ell} \eta_\ell^2\right)^2
= \frac1{\lambda}
\sim \frac{N(d, \epsilon^{-2})^2}{\epsilon^6}
\lesssim \frac1{\epsilon^6} \cdot (ed\epsilon^2)^{2/\epsilon^2}
\end{align}
In the high-dimensional regime (where $\epsilon$ is fixed and $d$ goes to infinity), $D(\lambda) = \Theta\left(d^{2/\epsilon^2}\right)$.

By standard arguments about random feature expansions \cite{bach2017equivalence}, if the number of random features $H$ is of the order $H \gtrsim D(\lambda) \log (D(\lambda)) = \widetilde\Theta\left(d^{2/\epsilon^2}\right)$, then with high probability the random features achieve the same approximation accuracy $\epsilon$ as the associated kernel ridge regression solution $\tilde f_\lambda$.
It is likely that a better rate can be obtained by drawing the random features from a problem-specific distribution instead of uniformly at random.
Observe that the condition required by our lower bound in the rank-1 case has the same form, though a somewhat weaker dependence on $d$.
It is $H \leq \frac12 N(d, \frac1{4\epsilon})$ or $H = O\left(d^{\frac1{4\epsilon}}\right)$ for sufficiently large $d$.\footnote{
To see this from \cref{eq:last_step}, recall that $M(1, \ell) = 1$, follow the final steps of \cref{lem:tiny_eps}, and use the fact that we can replace $c''$ by 1 for large $d$.}

\section{Proofs from \texorpdfstring{\cref{sec:biased}}{Section 5}}
\label{appen:proofs from biased}

\subsection{Proof of \texorpdfstring{\cref{fact:biased_upper bound}}{Fact 3}}
The proof is similar to the proof of \cref{fact:invariant_upper}. The only difference is that here we consider the set $A_\delta:=\{(\vx_1,\dots,\vx_N,\vy)\in(\sd)^{N+1}: \forall i\neq j,~ |(\vx_i-\vx_j)^\top \vy +b_i - b_j| > \delta\}$~.

\subsection{Proof of \texorpdfstring{\cref{thm:biased_lower_bound}}{Theorem 4}}

In the following proofs 
when taking norms or inner products over functions, we always consider the expectation over $\Ncal\left(0,I\right)$, i.e. the distribution of $\by$. When we consider the distribution over $\vx_1$ and $\vx_2$ we explicitly take expectation.
To normalize the expectation over $\by$ we introduce the constant $c_d:= \left(\frac{1}{\sqrt{2\pi}}\right)^d$. 

We will first construct a periodic functions using a linear combination of thresholds. Let $a\in \mathbb{N}_{>2}$ and denote $H_a(x) = \mathds{1}(x + a \geq 0)$.  We define the following function:
\begin{equation}\label{eq: def of psi}
    \psi_a(x) = H_a(x) + \sum_{n=1}^{2a} H_{a-n}(x)\cdot (-1)^n - \frac{1}{2}~.
\end{equation}
This function have the following properties:

\begin{lemma}\label{lem:properties psi}
    The function $\psi_a(x)$ defined in \cref{eq: def of psi} satisfies that:
    \begin{enumerate}
        \item It is a periodic function in the interval $[-a,a]$, and odd if $a$ is an odd number.
        \item For every $\bw$ with $\norm{\bw}\geq d$, if $a > \norm{\bw}$ then $\norm{\psi_a(\inner{\bw,\cdot})^2}^2 \geq \frac{1}{40}$
    \end{enumerate}
\end{lemma}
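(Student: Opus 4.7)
My plan is to prove both items by directly unwinding the piecewise-constant structure of $\psi_a$. Rewrite $\psi_a(x) = \sum_{m=-a}^{a}(-1)^{a-m} H_m(x) - \tfrac12$ with $H_m$ jumping at $x=-m$; then $\psi_a$ has possible discontinuities only at the integers in $[-a,a]$ and is constant on each open interval between them.

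For item~1, I would fix $k\in\{0,1,\dots,2a-1\}$ and evaluate $\psi_a$ on $(-a+k,\,-a+k+1)$: the active $H_m$ are exactly those with $m\ge a-k$, and their coefficients sum to $\sum_{j=0}^{k}(-1)^j$, which equals $1$ for even $k$ and $0$ for odd $k$. Subtracting $\tfrac12$ gives the alternating value $(-1)^k/2$, while a direct check yields $\psi_a=-\tfrac12$ on $(-\infty,-a)$ and $\psi_a=+\tfrac12$ on $(a,\infty)$. This is precisely the square-wave ``periodicity'' claimed on $[-a,a]$. For oddness, I would reflect the interval $(-a+k,\,-a+k+1)$ under $x\mapsto -x$ to obtain $(-a+k',\,-a+k'+1)$ with $k'=2a-k-1$: since $2a$ is even, $k'$ has opposite parity to $k$, so $\psi_a(-x) = -\psi_a(x)$ almost everywhere. (In fact this argument works for every $a\ge 1$; the restriction to odd $a$ in the lemma is conservative, but the extra flexibility is not needed later.)

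For item~2, the description above shows $\psi_a(x)^2 \equiv \tfrac14$ outside a finite set of breakpoints. Under the convention $\norm{f}^2 := \E_{\by\sim\Ncal(0,I)}[f(\by)^2]$ stated at the start of the section, and since $\inner{\bw,\by}$ has an absolutely continuous distribution whenever $\bw\neq 0$,
\begin{equation*}
\norm{\psi_a(\inner{\bw,\cdot})}^2 \;=\; \E_{\by\sim\Ncal(0,I)}\!\left[\psi_a(\inner{\bw,\by})^2\right] \;=\; \tfrac14 \;\ge\; \tfrac{1}{40}.
\end{equation*}
Read literally as the squared norm of $\psi_a(\inner{\bw,\cdot})^2$, the estimate still yields $\E[\psi_a^4]=\tfrac1{16}\ge\tfrac1{40}$. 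Neither bound actually requires $\norm{\bw}\ge d$ or $a>\norm{\bw}$ in isolation; those assumptions really serve to force $\inner{\bw,\by}$ to concentrate inside $[-a,a]$, which is what matters for the \emph{companion} correlation estimate sketched in the proof intuition. The only place requiring any care is the sign/parity bookkeeping in item~1, so I expect no real obstacle: the substantive difficulty in Theorem~\ref{thm:biased_lower_bound} lies elsewhere, namely in showing that the oscillations of $\psi_a$ make it nearly orthogonal to any function of a rank-$r$ projection of $\bx$.
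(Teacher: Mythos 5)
Your proof is correct. For item~1 your approach (explicit evaluation on each interval $(-a+k,-a+k+1)$, giving $\psi_a = (-1)^k/2$ there) is essentially the same as the paper's, just more explicit; both verify the period-$2$ structure directly. You are also right that oddness holds a.e.\ for \emph{every} $a$, not just odd $a$: the reflection $k\mapsto 2a-1-k$ is parity-reversing because $2a-1$ is odd, independent of the parity of $a$. (The paper's statement of the values on $(-1,0)$ and $(0,1)$ for odd $a$ in fact has a sign slip, but the conclusion is unaffected.)

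For item~2 your argument is genuinely simpler. Since $|\psi_a|\equiv\tfrac12$ away from a finite set of breakpoints, $\norm{\psi_a(\inner{\bw,\cdot})}^2 = \E_{\by\sim\Ncal(0,I)}[\psi_a(\inner{\bw,\by})^2]=\tfrac14$ exactly, with no assumptions on $\bw$ or $a$. The paper instead separates variables, truncates the one-dimensional Gaussian integral to $|z|\le\sqrt2\,\norm{\bw}$ where the density is $\ge e^{-1}$, and counts $\Omega(\norm{\bw})$ length-$2$ intervals inside that window, each contributing a fixed amount to $\int|\psi_a|^2$. That route is sound but yields only the weaker constant $\approx \tfrac1{4e\sqrt{2\pi}}$ and makes gratuitous use of $a>\norm{\bw}$ and $\norm{\bw}\ge d$; as you observe, those hypotheses matter only for the companion correlation estimate (to keep $\inner{\bw,\by}$ inside $[-a,a]$ so that $\psi_a$ agrees with its periodic extension $\tilde\psi$ used in \cref{thm:exp bound correlation}), not for the norm lower bound. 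Your proof exposes the structure more clearly: $\psi_a$ is a $\pm\tfrac12$ square wave, so its second moment is trivially $\tfrac14$, and the only real content of \cref{lem:properties psi} is the piecewise-constant bookkeeping.
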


\begin{proof}
    Let $x_0\in[-a,a-2]$. There is $n_0\in\{1,\dots,2a\}$ such that $\lceil x_0\rceil,\lceil x_0+2\rceil\in[a-n_0,a-n_0+2]$. For every $n<n_0$ or $n>n_0+2$ we have that $H_{a-n}(x_0) = H_{a-n}(x_0 + 2)$, since the bump in the threshold is either left of $x_0$ or right of $x_0+2$. We also have that $H_{a-n_0}(x_0) + H_{a-n_0 + 1}(x_0) = H_{a-n_0}(x_0 + 2) + H_{a-n_0 + 1}(x_0+ 2) = 0$. Hence $\psi_a(x_0) = \psi_a(x_0 + 2)$, which means it is a periodic function with a period of $2$. 

    If $a$ is an odd number, then for every $x_0\in [-1,0]$ we have $\psi_a(x_0) = -\frac{1}{2}$ and for every $x_0\in [0,1]$ we have $\psi_a(x_0) = \frac{1}{2}$. Since it is periodic with a period of $2$, it is odd in the interval $[-a,a]$.

    For the second item, since $\vx$ has a spherically symmetric distribution, we can assume w.l.o.g that $\bw = \norm{\bw}\be_1$. We now have that:
    \begin{align}
       \norm{\psi_a(\inner{\bw,\cdot})}^2 &= c_d\int_{\vx\in\reals^d}|\psi_a(\inner{\bw,\vx})|^2e^{-\frac{\norm{\vx}^2}{2}} d\vx \\
       &=c_d\int_{-\infty}^\infty |\psi_a(\norm{\bw}x_1)|^2e^{-\frac{x_1^2}{2}}dx_1\cdot \int_{-\infty}^\infty e^{-\frac{x_2^2}{2}}dx_2\cdots\int_{-\infty}^\infty e^{-\frac{x_d^2}{2}}dx_d \\
       & = \frac{1}{\norm{\bw}\sqrt{2\pi}}\int_{-\infty}^\infty |\psi_a(z)|^2 e^{-\frac{z^2}{2\norm{\bw}^2}} dz \\
       &\geq \frac{1}{\norm{\bw}e\sqrt{2\pi}}\int_{-\sqrt{2}\norm{\bw}}^{\sqrt{2}\norm{\bw}} |\psi_a(z)|^2 dz
    \end{align}
    where we used that if $z\leq \sqrt{2}\norm{\bw}$ then $e^{-\frac{z^2}{2\norm{\bw}^2}} \leq e^{-1}$. Since $a > \norm{\bw}$, then in the interval $\left[-\sqrt{2}\norm{\bw},\sqrt{2}\norm{\bw}\right]$ there are  at least $\lfloor\norm{\bw}\rfloor$ intervals of the form $[n,n+2]$ for $n\in\{-a,...,a-2\}$ where $\int_{n}^{n+2}|\psi_a(z)|^2 \geq \frac{1}{4}$. In total, we can bound the norm by:
    \begin{equation}
    \norm{\psi_a(\inner{\bw,\cdot})}^2 \geq \frac{1}{4e\sqrt{2\pi}} \geq \frac{1}{40}
    \end{equation}
\end{proof}

We now show that the correlation of this function with any other function that depends only on $w_1,\dots,w_r$ is small:

\begin{theorem}\label{thm:exp bound correlation}
     Let $g(w_1,\dots,w_r,\by)$ be some function that depends on the first $r$ coordinates of $\bw$ with $\sup_{\vx}|g(\vx)| \leq 1$, and take $a=2d^2 + 1$. Then, we have that:    
\begin{equation}\label{eq:inner prod psi g w1 wr}
        \E_{\bw\sim\Ucal(d\sd)}\left[ \E_{\by\sim\Ncal(0,I)}\left[\left|\psi_a(\inner{\bw,\by})\cdot g(w_1,\dots,w_r,\by)\right|\right]\right]\leq \exp(-c(d-r))
\end{equation}
for some universal constant $c>0$.
\end{theorem}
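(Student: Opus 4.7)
The plan is to exploit the rotational invariance of the uniform measure on $d\sd$ to reduce the estimate to a one-dimensional characteristic-function calculation, and then combine this with a Fourier expansion of $\psi_a$. Let $V = \mathrm{span}(\ve_1,\ldots,\ve_r)$ with orthogonal complement $V^\perp$, and decompose $\bw = \bw_V + \bw_{V^\perp}$ and $\by = \by_V + \by_{V^\perp}$, so that $\inner{\bw,\by} = \alpha + T$ where $\alpha := \inner{\bw_V,\by_V}$ and $T := \inner{\bw_{V^\perp},\by_{V^\perp}}$. Since $g$ depends on $\bw$ only through $\bw_V$, it is deterministic under the conditional distribution of $\bw_{V^\perp}$ given $(\bw_V, \by)$; under this conditional, $\bw_{V^\perp}$ is uniform on the sphere in $V^\perp$ of radius $\rho := \sqrt{d^2 - \|\bw_V\|^2}$, and $T/(\rho\|\by_{V^\perp}\|)$ has density $u_{d-r}(t)\propto(1-t^2)^{(d-r-3)/2}$.

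Next I Fourier-expand $\psi_a$ on its oscillatory region. Since $\psi_a$ is a period-$2$ square wave of amplitude $\tfrac12$ on $[-a,a]$, write $\psi_a(z) = \sum_{k\in\sZ\setminus\{0\}} c_k\,e^{i\pi k z} + R_a(z)$, with Fourier coefficients $|c_k|\lesssim 1/|k|$ and a boundary remainder $R_a$ supported outside $[-a,a]$. The choice $a = 2d^2+1$ ensures, via Gaussian concentration of $\inner{\bw,\by}\mid\bw\sim\Ncal(0,d^2)$, that $|\inner{\bw,\by}|\le a$ with probability $\ge 1-\exp(-\Omega(d))$, so $R_a$ contributes at most an $\exp(-\Omega(d))$ term to the overall bound.

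The central step is to bound each Fourier mode. For $k\ne 0$, conditioning on $(\bw_V,\by)$, the quantity $\E_{\bw_{V^\perp}|\bw_V}[e^{i\pi k T}]$ is the characteristic function of $u_{d-r}$ evaluated at $\pi k\,\rho\|\by_{V^\perp}\|$, which by the Bessel-type bound $\bigl|\int_{-1}^1 e^{i\omega t}(1-t^2)^{(d-r-3)/2}dt\bigr|\lesssim |\omega|^{-(d-r-1)/2}$ satisfies $|\E_{\bw_{V^\perp}|\bw_V}[e^{i\pi k T}]|\le\exp(-\Omega(d-r))$ uniformly over $k\ge 1$ on the high-probability event $\mathcal{E}:=\{\rho^2\ge d^2/2,\ \|\by_{V^\perp}\|^2\ge(d-r)/2\}$, whose complement has probability $\exp(-\Omega(d-r))$. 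Summing the modes against $\sum_k|c_k|=O(\log d)$ and combining with the event's complement (where $|\psi_a\cdot g|\le\tfrac12$) produces a bound of the required order.

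The main obstacle is that the Fourier/characteristic-function machinery directly controls $|\E_{\bw_{V^\perp}|\bw_V}[\psi_a(\alpha+T)\cdot g]|$ via oscillation cancellation, but the stated bound is on $\E[|\psi_a(\alpha+T)\cdot g|]$, where the inner absolute value destroys cancellation. The plan is to exploit that on $\mathcal{E}$, the conditional density of $T$ is smooth at scale $\rho\|\by_{V^\perp}\|/\sqrt{d-r}=\Omega(d)\gg 2$, so the distribution of $\psi_a(\alpha+T)$ near its conditional mean is controlled by higher Fourier modes whose cumulative Parseval mass is $\exp(-\Omega(d-r))$; combining this concentration with Cauchy--Schwarz against $|g|\le 1$ transfers the cancellation estimate into the $L^1$ bound in the statement. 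This transfer is the technical crux of the argument, and it is the step where the specific polynomial scale $a = 2d^2+1$ enters the constants.
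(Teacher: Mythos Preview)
Your proposal contains a genuine gap at the very step you flag as the ``main obstacle.'' The workaround you sketch cannot succeed, because the statement you are trying to prove --- with the absolute value \emph{inside} both expectations --- is simply false. Recall from the construction of $\psi_a$ that on all of $\R$ it takes only the values $\pm\tfrac12$, so $|\psi_a|\equiv\tfrac12$. Taking $g\equiv 1$ (which satisfies $\sup|g|\le 1$ and depends only on the first $r$ coordinates of $\bw$) gives
\[
\E_{\bw}\E_{\by}\bigl[|\psi_a(\inner{\bw,\by})\cdot g|\bigr]=\tfrac12,
\]
which is not $\le\exp(-c(d-r))$. Your proposed ``concentration of $\psi_a(\alpha+T)$ near its conditional mean'' is impossible for the same reason: the random variable $\psi_a(\alpha+T)$ lives in $\{-\tfrac12,\tfrac12\}$, so its conditional variance is exactly $\tfrac14$ regardless of how smooth the density of $T$ is. No amount of Parseval or Cauchy--Schwarz can extract an exponentially small $L^1$ bound from a $\{\pm\tfrac12\}$-valued random variable.

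What is actually intended (and what the paper's own proof establishes) is the bound with the absolute value \emph{outside} the $\by$-expectation:
\[
\E_{\bw\sim\Ucal(d\sd)}\Bigl[\,\bigl|\E_{\by\sim\Ncal(0,I)}[\psi_a(\inner{\bw,\by})\,g(w_1,\dots,w_r,\by)]\bigr|\,\Bigr]\le\exp(-c(d-r)).
\]
The paper conditions on $(w_1,\dots,w_r,y_1,\dots,y_r)$, replaces $\psi_a$ by its full periodic extension $\tilde\psi$, and invokes a lemma from Yehudai--Shamir (their Claim~1, itself based on Shamir's Lemma~5) that bounds $\E_{\bar\bw}[|\langle g,\tilde\psi(s+\inner{\bar\bw,\cdot})\rangle|]$; the difference $\psi_a-\tilde\psi$ is supported on $\{|\inner{\bw,\by}|>a\}$ and handled by the Gaussian tail, which is where $a=2d^2+1$ enters. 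Your Fourier/characteristic-function plan is essentially a direct re-derivation of that cited lemma, so for the corrected statement your first three paragraphs are on the right track and close to the paper's route; only the final ``transfer'' paragraph must be discarded.
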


\begin{proof}

For a vector $\bv$ denote by $\bar{\bv}$ its last $d-r$ coordinates.
Using the law of total expectation, we can rewrite the expectation in the following way:

\begin{align}\label{eq:conditional prob wr yr}
     &\E_{\bw\sim\Ucal(d\sd)}\left[ \E_{\by\sim\Ncal(0,I)}\left[\left|\psi_a(\inner{\bw,\by})\cdot g(w_1,\dots,w_r,\by)\right|\right]\right] \nonumber\\
     &= \E_{w_1,\dots,w_r}\left[\E_{\bar{\bw}}\E_{y_1,\dots,y_r}\left[\E_{\bar{\by}}\left[\left|\psi_a\left(\sum_{i=1}^r w_iy_i + \inner{\bar{\bw},\bar{\by}}\right)\cdot g(w_1,\dots,w_r,\by)\right|  | y_1,\dots,y_r\right] \mid w_1,\dots,w_r\right]\right] \nonumber\\
    = & \E_{w_1,\dots,w_r}\E_{y_1,\dots,y_r}\E_{\bar{\bw}}\E_{\bar{\by}}\left[\left|\psi_a\left(\sum_{i=1}^r w_iy_i + \inner{\bar{\bw},\bar{\by}}\right)\cdot g(w_1,\dots,w_r,\by)\right| \mid y_1,\dots,y_r,w_1,\dots,w_r\right]~.
\end{align}
Namely, we consider the expectation conditioned on drawing the first $r$ coordinates of both $\bw$ and $\by$. Note that we could change the order of expectations since all the expectations are bounded and finite.

Let $\tilde{\psi}$ be a continuation of $\psi_a$ from $[-a,a]$ to $\reals$ such that it is periodic. Fix $w_1,\dots,w_r,y_1,\dots,y_r$ and denote by $s:= \sum_{i=1}^r w_iy_i$ and $\norm{\bar{\bw}} = 2\rho$.
Using Claim \ref{claim:from dist specific} we have that:
\begin{equation}\label{eq:inner prod periodic bound}
    \E_{\bar{\bw}\sim\Ucal(2\rho\mathbb{S}^{d-r-1})}\left[\left|\inner{g(\cdot),\tilde{\psi}(s + \inner{\bar{\bw},\cdot})}\right|\right] \leq c_1\cdot\left(\exp(-c_2(d-r)) + \sum_{n=1}^\infty \exp(-n\rho^2)\right)~.
\end{equation}
    
Note that in the above equation, $g$ is independent of $\bar{\bw}$ (although it does depend on $w_1,\dots,w_r$), and also that $\norm{g} \leq 1$ since $\sup_\vx|g(\vx)|\leq 1$ (recall that the norm is w.r.t a Gaussian measure).

We now have that:
\begin{align}\label{eq:psi tilde psi a}
    &\E_{\bar{\bw}\sim\Ucal(2\rho\mathbb{S}^{d-r-1})}\left[\left|\inner{g(\cdot),\psi_a(s + \inner{\bar{\bw},\cdot})}\right|\right] \nonumber\\
     \leq &\E_{\bar{\bw}\sim\Ucal(2\rho\mathbb{S}^{d-r-1})}\left[\left|\inner{g(\cdot),\tilde{\psi}(s + \inner{\bar{\bw},\cdot})}\right|\right] +     \E_{\bar{\bw}\sim\Ucal(2\rho\mathbb{S}^{d-r-1})}\left[\left|\inner{g(\cdot),\psi_a(s + \inner{\bar{\bw},\cdot}) - \tilde{\psi}(s + \inner{\bar{\bw},\cdot})}\right|\right]
\end{align}    
    The first term in \cref{eq:psi tilde psi a} can be bounded by $c_1\cdot\left(\exp(-c_2(d-r)) + \sum_{n=1}^\infty \exp(-n\rho^2)\right)$ by \cref{eq:inner prod periodic bound}. For the second term, by Cauchy-Schwartz we have that:
    \begin{align}
        & \E_{\bar{\bw}}\left[\left|\inner{g(\cdot),\psi_a(s + \inner{\bar{\bw},\cdot}) - \tilde{\psi}(s + \inner{\bar{\bw},\cdot})}\right|\right] \\
        \leq & \norm{g}\cdot \E_{\bar{\bw}}\left[\norm{\psi_a(s + \inner{\bar{\bw},\cdot}) - \tilde{\psi}(s + \inner{\bar{\bw},\cdot})}\right] \\
        \leq & \E_{\bar{\bw}}[\pr(|s + \inner{\bar{\bw},\bar{\by}}| 
 > a) ]
    \end{align}
where we used that $\norm{g}\leq 1$ and it is independent of $\bar{\bw}$, and that $\psi_a(z) = \tilde{\psi}(z)$ for every $|z| \leq a$. We have that $s + \inner{\bar{\bw},\bar{\by}} = \inner{\bw,\by}$, and $\inner{\bw,\by}\sim \Ncal(0,d^2)$ for every $\bw$ of norm $d$.
In particular, for $a \geq 2d^2$ there is some constant $c_3$ such that $\pr(|s + \inner{\bar{\bw},\bar{\by}}| > a)\leq \exp(-c_3 d)$. Combining the above we have that:

\begin{align}
    \E_{\bar{\bw}\sim\Ucal(2\rho\mathbb{S}^{d-r-1})}\left[\left|\inner{g(\cdot),\psi_a(s + \inner{\bar{\bw},\cdot})}\right|\right] \leq c_1\cdot\left(\exp(-c_2(d-r)) + \sum_{n=1}^\infty \exp(-n\rho^2)\right) + \exp(-c_3d)~.
\end{align}
We now go back to \cref{eq:conditional prob wr yr} and consider the conditional probability over $y_1,\dots,y_r$ and $w_1,\dots,w_r$. Note that when taking expectation over $y_1,\dots,y_r$ we either have that $|\inner{\bw,\by}|\leq a$ which happens w.p $>1-\exp(-c_3d)$ or $|\inner{\bw,\by}|\geq a$ in which case, since $,|g(z)|,|\psi_a(z)|\leq 1$ for every $z\in\reals$ also their product is bounded by $1$. 

Finally, we consider the expectation over $w_1,\dots,w_r$. We need to show that with high probability, $\rho = \frac{1}{2}\cdot\norm{\bar{\bw}}$ is large. Instead, we will consider the probability over $w_{r+1},\dots,w_d$ (note that since $\norm{\bw}=d$, if we lower bound $\norm{\bar{\bw}}$ it will also upper bound $\sqrt{\sum_{i=1}^rw_i^2}$). Since $\bw$ is sampled uniformly from $\Ucal(d\sd)$, we can instead consider sampling $z_i$ from $\Ncal(0,1)$ and setting $(\bw)_i = d\cdot\frac{z_i}{\norm{\bz}}$. By standard concentration bound on the norm of Gaussian random variables (see Section 3.1 in \cite{vershynin2018high}) there is some constant $c_4$ such that $\pr(\norm{\bar{\bw}}^2\notin[0.9(d-r),1.1(d-r)])\leq \exp(-c_4(d-r))$. Also, $\sum_{i=r+1}^d z_i^2$ has a $\chi^2$ distribution with $d-r$ degrees of freedom. 
From Lemma 1 in \cite{laurent2000adaptive} we have that $\pr\left(\sum_{i=r+1}^d w_i^2 \geq \frac{1}{2}\cdot(d-r)\right)\leq \exp(-c_5(d-r))$ for some constant $c_5$. Together, there is some constant $c_6$ such that $\pr(\norm{\bar{\bw}}^2 \geq \frac{1}{6}(d-r)) \leq \exp(-c_6(d-r))$.

Note that if $\rho > c'\sqrt{d-r}$ then $\sum_{i=1}^\infty \exp(-n\rho^2)\leq \exp(-c'(d-r))$.
Combining all the above and changing the constant terms appropriately, there is some universal constant $c>0$ such that:
\begin{equation}
        \E_{\bw\sim\Ucal(d\sd)}\left[ \E_{\by\sim\Ncal(0,I)}\left[\left|\psi_a(\inner{\bw,\by})\cdot g(w_1,\dots,w_r,\by)\right|\right]\right]\leq \exp(-c(d-r))
\end{equation}

\end{proof}

\begin{claim}\label{claim:from dist specific}
    For any $f\in L^2(\Ncal(0,I_d))$, odd periodic function $\psi:\reals\rightarrow \reals$ and $s\in\reals$, if $d > c'$ we have that:
    \begin{equation}
    \E_{\bw\sim\Ucal(2\alpha\sd)}\left[\left|\inner{f(\cdot),\psi(s + \inner{\bw,\cdot}}\right|\right] \leq c_1\norm{f}\cdot\left(\exp(-c_2d) + \sum_{n=1}^\infty \exp(-n\alpha^2)\right)~,
    \end{equation}
    here $c',c_1,c_2 > 0$ are some universal constants.
\end{claim}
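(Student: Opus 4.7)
The strategy is to combine a Fourier expansion of $\psi$ with a Hermite expansion of $f$ under the Gaussian measure, and then exploit spherical averaging of $\bw$ to extract the two decay factors separately. Since $\psi$ is odd and periodic (of some period $2T$), write $\psi(z) = \sum_{n \geq 1} b_n \sin(\omega n z)$ where $\omega = \pi/T$ and $\sum_n |b_n| = O(1)$. By the angle-addition identity $\sin(\omega n (s + z)) = \sin(\omega n s)\cos(\omega n z) + \cos(\omega n s)\sin(\omega n z)$, the shift $s$ contributes only bounded phases, so it suffices to bound, for each $n$,
\[
A_n := \E_{\bw\sim\Ucal(2\alpha\sd)} \left| \inner{f(\cdot),\, e^{i\omega n \inner{\bw,\cdot}}}_{\Ncal(0,I)} \right|~,
\]
and sum $\sum_n |b_n| A_n$.

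For each fixed $\bw$, expand $f = \sum_{\vm} \alpha_{\vm} He_{\vm}$ in the probabilist's multivariate Hermite basis (so $\norm{f}^2 = \sum_{\vm}\alpha_{\vm}^2\vm!$) and apply the Hermite generating identity $e^{i\xi\inner{\bw,\by}} = e^{-\xi^2\norm{\bw}^2/2} \sum_{\vm} (i\xi\bw)^{\vm} He_{\vm}(\by)/\vm!$. Orthogonality of $\{He_{\vm}\}$ under $\Ncal(0,I)$ yields
\[
\inner{f,\, e^{i\omega n \inner{\bw,\cdot}}} = e^{-2\omega^2 n^2 \alpha^2} \sum_{\vm} \alpha_{\vm}\, (i\omega n)^{|\vm|} \bw^{\vm}~,
\]
using $\norm{\bw} = 2\alpha$. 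The exponential prefactor is exactly the Fourier--Gaussian characteristic-function decay; on its own it produces the desired $\sum_n \exp(-n\alpha^2)$ tail for modes where $n\alpha$ is large (using $e^{-2\omega^2 n^2 \alpha^2} \leq e^{-n\alpha^2}$ for $n \geq 1$ once $\omega$ is absorbed into constants).

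For small-$n$ modes, the extra dimension-dependent decay $\exp(-c_2 d)$ must come from averaging the polynomial $\sum_{\vm}\alpha_{\vm}(i\omega n)^{|\vm|}\bw^{\vm}$ over the sphere. Parity kills odd cross-moments, $\E_{\bw}[\bw^{\vm+\vm'}] = 0$ whenever $|\vm|+|\vm'|$ is odd, and the standard estimate $\E_{\hat\bw\sim\Ucal(\sd)}[\hat\bw^{2\vp}] \lesssim \prod_i (2p_i-1)!! /\prod_{j=0}^{|\vp|-1}(d+2j)$ for even multi-indices shows that the degree-$k$ contribution to $\E_{\bw} A_n^2$ is at most $\norm{f_k}^2 (C\omega n \alpha)^{2k}/d^k$, where $f_k$ is the degree-$k$ Hermite component of $f$. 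Choosing a threshold $n^\star \asymp \sqrt{d}/\alpha$ and splitting the sum at $n^\star$: for $n \leq n^\star$ the Hermite series converges geometrically (since $\omega n \alpha/\sqrt{d} \leq 1$) and aggregates to $O(\exp(-c_2 d))$ after summing in $k$ and $n$, while modes $n > n^\star$ are controlled directly by the Gaussian prefactor and yield the $\sum_n\exp(-n\alpha^2)$ piece.

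The principal obstacle is executing the spherical-averaging step cleanly: a direct Cauchy--Schwarz across Hermite degrees collapses to the trivial bound $\norm{f}$, since the prefactor $e^{-2\omega^2 n^2 \alpha^2}$ is exactly cancelled by $\sqrt{\sum_k (\omega n)^{2k}\norm{\bw}^{2k}/k!} = e^{2\omega^2 n^2 \alpha^2}$ arising from the norm of the plane wave. It is precisely the parity vanishing of odd cross-moments and the explicit $d^{-|\vp|}$-decay of even moments on $\sd$ that break this exact cancellation and produce genuine $d$-decay. A convenient finesse is to first project $f$ onto a one-dimensional Gaussian along $\hat\bw$ via $\bar f_{\hat\bw}(t) = \E[f(\by) \mid \inner{\hat\bw,\by} = t]$, reducing the expectation over $\by$ to a scalar Hermite-in-$t$ identity, and then averaging the resulting coefficients against the explicit $\hat\bw^{\vm}$-moments on the sphere; tracking the combinatorial prefactors $\prod_i (2p_i-1)!!$ against $d^{|\vp|}$ through the threshold split is the main bookkeeping challenge.
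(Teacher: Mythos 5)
The paper does not prove this claim from scratch: it observes that it is Claim 1 of \cite{yehudai2019power} (which is itself a restatement of Lemma 5 of \cite{shamir2018distribution}) up to two cosmetic changes, and handles those changes by Jensen's inequality (absolute value vs. square) and by noting that a translate of a periodic function is periodic. Your proposal instead attempts a self-contained proof via Fourier expansion of $\psi$, Hermite expansion of $f$, and spherical-moment averaging of $\bw$. That is a genuinely different and more ambitious route, and it does not go through as written.

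There are two concrete gaps. First, you assume $\psi(z)=\sum_{n\geq 1}b_n\sin(\omega n z)$ with $\sum_n|b_n|=O(1)$, and then bound the expectation by $\sum_n|b_n|A_n$. But the relevant $\psi$ in the paper is the periodic extension of $\psi_a$, a square-wave--like function whose Fourier coefficients satisfy $|b_n|\asymp 1/n$; the series $\sum_n|b_n|$ diverges. More generally, a bounded odd periodic function gives only $\sum_n|b_n|^2<\infty$ (Parseval), not $\ell^1$ summability. The correct first move is what the paper does: apply Jensen/Cauchy--Schwarz to pass to $\bigl(\E_{\bw}|\inner{f,\psi(s+\inner{\bw,\cdot})}|^2\bigr)^{1/2}$, and then exploit that for each fixed $\bw$ the functions $\{e^{i\omega n\inner{\bw,\cdot}}\}_n$ are essentially $L^2(\mathcal N(0,I))$--orthogonal (their Gram matrix has off-diagonal entries $e^{-\omega^2(n-n')^2\|\bw\|^2/2}$, which is exactly where the $\sum_{n\geq1}e^{-n\alpha^2}$ tail originates). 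Your $\ell^1$ step skips this and would, in the intended application, produce a divergent bound.

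Second, and more seriously, the heart of your argument --- that parity vanishing of odd cross-moments and the $d^{-|\vp|}$ decay of even spherical moments break the exact Cauchy--Schwarz cancellation $e^{-2\omega^2 n^2\alpha^2}\cdot e^{2\omega^2 n^2\alpha^2}=1$ and deliver the $e^{-c_2 d}$ factor --- is not carried out. You explicitly flag it as ``the main bookkeeping challenge'' and gesture at a conditioning finesse, but that is precisely where the content of the lemma lives. In particular, after squaring, the off-diagonal terms $\vm\neq\vm'$ with matching parity do not vanish and must be controlled alongside the diagonal, and the diagonal estimate $\sum_{|\vm|=k}|\alpha_\vm|^2 (2k-1)!!/d^k$ needs to be compared carefully with $\|f_k\|^2=\sum_{|\vm|=k}|\alpha_\vm|^2\,\vm!$ (these differ by the factor $(2k-1)!!/\vm!$, which can be large for concentrated $\vm$). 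None of this is executed. The published proof avoids all of it by citing \cite{shamir2018distribution}, whose argument works on the Fourier-transform side (energy of $\hat f\,d\gamma$ on thin spherical annuli of radius $\approx 2\alpha\omega n$) rather than through Hermite coefficients. If you want a self-contained proof, that annulus argument is the cleaner target; the Hermite-moment route you sketch is plausible but remains unproven as stated.
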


\begin{proof}
    The proof is similar to the proof of Claim 1 from \cite{yehudai2019power} (which is directly derived from Lemma 5 in \cite{shamir2018distribution}), except for two changes:
    \begin{enumerate}
        \item Here we have an absolute value over the inner product, instead of a square as in Claim 1.
        \item We consider a translation of $\psi$, namely our periodic function is $\psi(s+\cdot)$ for a fixed $s$.
    \end{enumerate}
    
    For the first item, this is a direct application of Jensen's lemma:
    \begin{equation}
    \E_{\bw\sim\Ucal(2\alpha\sd)}\left[\sqrt{\left|\inner{f(\cdot),\psi(s + \inner{\bw,\cdot}}\right|^2}\right] \leq \sqrt{\E_{\bw\sim\Ucal(2\alpha\sd)}\left[\left|\inner{f(\cdot),\psi(s + \inner{\bw,\cdot}}\right|^2\right] }~,
    \end{equation}
    where now we can apply Claim 1 from \cite{yehudai2019power}. For the second item, note that $\psi(s+\cdot)$ is also a periodic function, and Lemma 5 from \cite{shamir2018distribution} applies to it in the same way as it does on $\psi(\cdot)$.
    
\end{proof}

\begin{theorem}\label{thm:lower bound indicator}

    There exists a bias term $b^*\in \reals$ such that for any choice of rank-$r$ heads $g_1,\dots,g_H$ each of the form $g_h(\vx_1,\vx_2,\by): = V_h X\phi_h\left(K_hX, \by\right)$ and any $V_1,\dots,V_H\in\reals^{d\times d}$, if $H\cdot\max_{h}\norm{V_{h}} \leq \frac{\exp(c_1 (d-r))}{d^2c_2}$ then:
    \begin{equation}
        \E_{\vx_1,\vx_2\sim\unif(d^2\sd),\vy\sim\Ncal(0,I)}\left[\norm{\mathds{1}(\inner{\vx_1-\vx_2,\by} + b^* > 0)\vx_1 - \sum_{i=h}^H V_hg_h(\vx_1,\vx_2,\by)}^2\right] > \frac{1}{20}~,
    \end{equation}
    where $c_1,c_2$ are some universal constants.
    
\end{theorem}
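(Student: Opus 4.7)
The strategy follows the periodic-function combination technique of \cite{yehudai2019power}, extended to the vector-valued attention setting. The plan is to argue by contradiction: if the indicator target could be approximated with error at most $1/20$ for every breakpoint in a suitable set, then by combining these approximations linearly we would well-approximate the high-frequency periodic function $\psi_a(\inner{\vx_1-\vx_2,\vy})\vx_1$, with which no bounded-norm, low-rank attention layer can meaningfully correlate by \cref{thm:exp bound correlation} -- a contradiction.

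First I would instantiate $\psi_a$ from \cref{eq: def of psi} with $a = 2d^2 + 1$, and assume for contradiction that for every breakpoint $b$ in $\mathcal{B} = \{a-n : 0 \leq n \leq 2a\}$ there exist rank-$r$ heads $\{g_{h,b}\}_{h=1}^H$ with $H \max_h \|V_{h,b}\| \leq \exp(c_1(d-r))/(d^2 c_2)$ achieving error $e_b := \E\|\mathds{1}(\inner{\vx_1-\vx_2,\vy}+b>0)\vx_1 - \sum_h g_{h,b}\|^2 \leq 1/20$. Following the definition of $\psi_a$, combine them into
\begin{equation}
\hat{\Psi} = \sum_{n=0}^{2a} c_n \sum_{h=1}^H g_{h,a-n} - \tfrac{1}{2}\vx_1, \qquad c_0=1,\ c_n=(-1)^n \text{ for } n\geq 1.
\end{equation}
This produces a rank-$r$ attention layer with $O(aH)$ heads and the same per-head $\|V\|$ bound; the additive $-\tfrac12 \vx_1$ is realized as one rank-$1$ head whose attention distribution is always $(1,0)^\top$ and whose value matrix is $-\tfrac12 I$.

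Next I would relate the combined error back to $\max_b e_b$ by Cauchy-Schwarz applied to $\psi_a(\inner{\vx_1-\vx_2,\vy})\vx_1 - \hat{\Psi} = \sum_n c_n(H_{a-n}(\inner{\vx_1-\vx_2,\vy})\vx_1 - \sum_h g_{h,a-n})$ with $|c_n|=1$:
\begin{equation}
\max_b e_b \;\geq\; \frac{\|\psi_a(\inner{\vx_1-\vx_2,\vy})\vx_1 - \hat{\Psi}\|_2^2}{(2a+1)^2}.
\end{equation}
The $L^2$ norm on the right is lower bounded by $\|\psi_a\vx_1\|_2^2 - 2|\langle \psi_a\vx_1, \hat{\Psi}\rangle|$. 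Since $\|\vx_1\|^2 = d^4$ deterministically and $\|\vx_1-\vx_2\| = \sqrt{2}\,d^2 \in [d, a]$, \cref{lem:properties psi}(2) yields $\|\psi_a\vx_1\|_2^2 \geq d^4/40$. It remains to show that the inner product is negligible compared to this, which is the main technical obstacle.

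The hardest step is to bound $|\langle \psi_a\vx_1, \hat{\Psi}\rangle|$ via \cref{thm:exp bound correlation}. The complication is that \cref{thm:exp bound correlation} controls correlations with scalar functions depending on a low-dimensional projection of a \emph{single} spherical vector, whereas each head $g_h = V_h(\lambda_h \vx_1 + (1-\lambda_h)\vx_2)$ outputs a vector that depends on both $\vx_1, \vx_2$ and involves the full-rank matrix $V_h$. To reduce to the hypothesis of \cref{thm:exp bound correlation}, I would expand $\vx_1^\top g_h$ into the quadratic forms $\vx_a^\top V_h \vx_b$ for $a,b\in\{1,2\}$, each multiplied by $\lambda_h$ (which depends on the inputs only through $K_h^\top\vx_1, K_h^\top\vx_2, \vy$); change variables to $\vx = \vx_1-\vx_2$, $\vw = \vx_1+\vx_2$ (so $\vx\perp\vw$ with $\|\vw\| = \sqrt{2}\,d^2$); and condition on $\vx$, under which $\vw$ is uniformly distributed on the sphere of radius $\sqrt{2}\,d^2$ in $\vx^\perp$. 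Integrating out $\vw$ via its rotational symmetry inside $\vx^\perp$ -- exactly as in the proof of \cref{lem:head_ortho} -- collapses the full-rank quadratic dependence on $\vx$ into a trace term (independent of $\vx$) plus a remainder depending on $\vx$ only through $K_h^\top\vx$. \cref{thm:exp bound correlation} then gives $|\E[\psi_a(\inner{\vx,\vy})\vx_1^\top g_h]| \leq \|V_h\|\cdot\mathrm{poly}(d)\cdot\exp(-c(d-r))$ (with a trivial rescaling, since here $\|\vx\| = \sqrt{2}\,d^2$ rather than $d$). Summing over the $O(aH)$ heads and using the norm-budget hypothesis with $c_1$ chosen small enough relative to $c$, the total correlation is $o(d^4)$, which plugged into the inequality for $\max_b e_b$ contradicts the assumption for appropriate universal constants $c_1, c_2$.
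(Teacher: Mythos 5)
Your skeleton matches the paper's: build $\psi_a$ from thresholds with $a=2d^2+1$, argue by contradiction by linearly combining per-breakpoint approximators into an approximation of $\psi_a(\inner{\vx_1-\vx_2,\vy})\vx_1$, use the $(2a+1)^2$ Cauchy--Schwarz loss, lower bound via $\|\psi_a\vx_1\|^2 - 2|\langle\psi_a\vx_1,\hat\Psi\rangle|$, invoke \cref{lem:properties psi}(2), and close via \cref{thm:exp bound correlation}. Your treatment of the constant term ($-\tfrac12\vx_1$ realized exactly by one head with $\phi\equiv \ve_1$, $V=-\tfrac12 I$) is actually slightly cleaner than the paper's, which just assumes that the constant is approximable.

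Where you depart is the key correlation bound, and there your plan has a gap. The paper does \emph{not} integrate out $\vw=\vx_1+\vx_2$ nor invoke any trace/low-rank decomposition \`a la \cref{lem:head_ortho}; that machinery belongs to the Section~4 argument, which proves orthogonality to spherical-harmonic basis functions and is structurally different from bounding a correlation with $\psi_a$. Instead, the paper's step is elementary: write $g_h = \mV_h(\vx_1 g_1 + \vx_2 g_2)$ with $g_1,g_2\in[0,1]$, observe $|\vx_1^\top\mV_h\vx_i|\leq\norm{\mV_h}\,d^2$ (in the intermediate $d\sd$ scaling), hence
\begin{equation}
\big|\langle \mV_h g_h,\ \psi_a\vx_1\rangle\big| \;\leq\; \norm{\mV_h}\,d^2\left(\E\left[|g_1\,\psi_a|\right]+\E\left[|g_2\,\psi_a|\right]\right),
\end{equation}
and apply \cref{thm:exp bound correlation} directly to $g_1,g_2$ viewed as functions of $(w_1,\dots,w_r,\vy)$ for $\vw=\vx_1-\vx_2\sim\unif(\sqrt2 d\,\sd)$. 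No rotational averaging of the quadratic form is needed, since it is simply dominated in operator norm.

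Your alternative --- conditioning on $\vx=\vx_1-\vx_2$ and averaging $\vw=\vx_1+\vx_2$ by rotations in $\vx^\perp$ --- would not ``collapse'' the dependence as you claim. The attention weight $\lambda_h$ depends on $K_h^\top\vw$, so the rotations must fix $K_h^\top\vw$; but even after that averaging, the residual quadratic contribution $(2\lambda_h-1)\vx^\top\mV_h\vx$ retains full dependence on $\vx$ (not merely on $K_h^\top\vx$), because $\vx$ itself lies in the space fixed by the rotation. So the asserted reduction to ``trace term plus a remainder depending only on $K_h^\top\vx$'' does not hold, and you would not land inside the hypothesis of \cref{thm:exp bound correlation}. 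The operator-norm bound is the simpler and correct route; make that your step 6 and the rest of the argument goes through exactly as you outlined.
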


\begin{proof}
    Pick $a= 2d^2 + 1$, and recall the definition of $\psi_a$ from \cref{eq: def of psi}. In the proof, unless stated otherwise, the expectation is over $\vx_1,\vx_2\sim\unif(d\sd)$ and $\vy\sim\Ncal(0,I)$. In the last part of the proof we will multiply the norm of $\vx_1$ and $\vx_2$ by $d$.
    Assume towards contradiction that for every $b_j\in\left\{-a, -a+1,\dots, a\right\}$ we can find $V_1^j,\dots,V_H^j$ and rank-$r$ heads $g_1^j,\dots,g_H^j$ such that:
    \begin{equation}
    \E_{\vx_1,\vx_2,\by}\left[\norm{\sum_{h=1}^HV_h^jg_h^j(\vx_1,\vx_2,\by) - \mathds{1}(\inner{\vx_1 -\vx_2,\by} + b_j > 0)\vx_1}^2\right] \leq \epsilon~,
    \end{equation}
    and in addition there are $\mV_1^{a+1},\dots,\mV_H^{a+1}$ and rank-$r$ heads $g_1^{a+1},\dots,g_H^{a+1}$ with:
    \begin{equation}
    \E_{\vx_1,\vx_2,\by}\left[\norm{\sum_{h=1}^H\mV_h^{a+1}g_h^{a+1}(\vx_1,\vx_2,\by) + \frac{1}{2}\cdot\vx_1}^2\right] \leq \epsilon~,
    \end{equation}
    where $\epsilon$ will be chosen later on.
    Define $a_j = (-1)^j$, then we have that:
    
    \begin{align}\label{eq:upper bound psi g}
    & \E_{\vx_1,\vx_2,\by}\left[\norm{\sum_{j=-a}^{a+1}\sum_{h=1}^H \mV_h^jg_h^j(\vx_1,\vx_2,\by) - \psi_{a}(\inner{\vx_1 - \vx_2,\by})\vx_1 }^2\right] \nonumber\\
     = &\E_{\vx_1,\vx_2,\by}\left[\norm{\sum_{j=-a}^{a+1}\sum_{h=1}^H \mV_h^jg_h^j(\vx_1,\vx_2,\by) - \sum_{j=-a}^a a_j \mathds{1}(\inner{\vx_1 - \vx_2,\by} + b_j > 0)\vx_1 + \frac{1}{2}\vx_1 }^2\right] \nonumber\\
    \leq & \left(\sum_{j=-a}^a \E_{\vx_1,\vx_2,\by}\left[\norm{\sum_{h=1}^H\mV_h^jg_h^j(\vx_1,\vx_2,\by) - \mathds{1}(\inner{\vx_1 - \vx_2,\by} + b_j > 0)\vx_1}^2\right] \right)^2 + \nonumber\\ 
    + & \left(\E_{\vx_1,\vx_2,\by}\left[\norm{\sum_{h=1}^H\mV_h^{a+1}g_h^{a+1}(\vx_1,\vx_2,\by) + \frac{1}{2}\cdot\vx_1}^2\right] \right)^2\nonumber\\
     \leq & \epsilon^2 \cdot (2a+1)^2 \leq 5\epsilon^2 a^2~.
    \end{align}

    On the other hand, we have that :
    \begin{align}\label{eq:lower bound psi sum g}
        & \E_{\vx_1,\vx_2,\by}\left[\norm{\sum_{j=-a}^{a+1}\sum_{h=1}^H \mV_h^jg_h^j(\vx_1,\vx_2,\by) - \psi_{a}(\inner{\vx_1 - \vx_2,\by})\vx_1 }^2\right] \nonumber\\
        \geq & \E_{\vx_1,\vx_2,\by}[\norm{\vx_1}^2\cdot |\psi_{a}(\inner{\vx_1 - \vx_2,\by})|^2] - 2\E_{\vx_1,\vx_2,\by}\left[\inner{\sum_{j=-a}^{a+1}\sum_{h=1}^H \mV_h^jg_h^j(\vx_1,\vx_2,\by), \psi_{a}(\inner{\vx_1 -\vx_2,\by})\vx_1}\right] \nonumber\\
        \geq  & d^2\E_{\vx_1,\vx_2,\by}[ |\psi_{a}(\inner{\vx_1 - \vx_2,\by})|^2] - 2\sum_{j=-a}^{a+1}\sum_{h=1}^H\E_{\vx_1,\vx_2,\by}\left[\left|\inner{ \mV_h^jg_h^j(\vx_1,\vx_2,\by), \psi_{a}(\inner{\vx_1 - \vx_2,\by})\vx_1}\right|\right] 
    \end{align}
    We will now bound each term of the form $\E_{\vx_1,\vx_2,\by}\left[\left|\inner{ \mV_h^jg_h^j(\vx_1,\vx_2,\by), \psi_{a}(\inner{\vx_1 - \vx_2,\by})\vx_1}\right|\right]$.
    Each rank-$r$ head can be written as (omitting the $h$ and $j$ indices for brevity):
    \begin{align}
        g(\vx_1,\vx_2,\by) &= \mV\mX\phi(\mK\mX,\by)) \\
        & = \mV\mX \begin{pmatrix}
            g_1(\mK\mX,\by) \\ g_2(\mK\mX,\by)
            \end{pmatrix}
            \\ &= \mV (\vx_1g_1(\mK\mX,\by) + \vx_2 g_2(\mK\mX,\by))
    \end{align}
    where $\mK,\mQ\in\reals^{d\times r}$ and $g_1,g_2$ are some function with output bounded by $1$. We can bound:
    \begin{align}
        &\E_{\vx_1,\vx_2,\by}\left[\left|\inner{ \mV g(\vx_1,\vx_2,\by), \psi_{a}(\inner{\vx_1 - \vx_2,\by})\vx_1}\right|\right] \\
        = &  \E_{\vx_1,\vx_2,\by}\left[\left|\inner{ \mV (\vx_1g_1(\mK\mX,\by) + \vx_2 g_2(\mK\mX,\by)), \psi_{a}(\inner{\vx_1 - \vx_2,\by})\vx_1}\right|\right]\\ 
        \leq  & \norm{\mV}\cdot d^2 \E_{\bx_1,\bx_2,\by}\left[\left|g_1(\mK\mX,\by)\psi_{a}(\inner{\bx_1 - \bx_2,\by})\right| + \left|g_2(\mK\mX,\by)\psi_{a}(\inner{\bx_1 - \bx_2,\by})\right|\right] \\
        \leq & d^2\max_{h,j}\norm{V_h^j} \left(\E_{\bx_1,\bx_2,\by}\left[\left|g_1(\mK\mX,\by)\psi_{a}(\inner{\bx_1 - \bx_2,\by})\right|\right] + \E_{\bx_1,\bx_2,\by}\left[\left|g_2(\mK\mX,\by)\psi_{a}(\inner{\bx_1 - \bx_2,\by})\right|\right]\right) 
    \end{align}
    Since $\bx_1$ and $\bx_2$ have a symmetric distribution and $\mK$ has rank-$r$, we can assume w.l.o.g that the image of $K$ lies in $\text{span}\{\be_1,\dots,\be_r\}$. Denote $\bw:= \bx_1-\bx_2$, and note that $g_1$ and $g_2$ can now be written as a function of $w_1,\dots,w_r,\by$.  Also, by the assumption on the distribution we have $\bx_1\perp\bx_2$, hence $\bw\sim\Ucal(\sqrt{2}d\sd)$.
    Hence, we can use  \cref{thm:exp bound correlation} to get a constant $c_1 > 0$ such that:
    \begin{align}
    &\E_{\bw\sim\unif(\sqrt{2}d\sd),\by\sim\Ncal\left(0,I\right)}\left[\left| g_1(w_1,\dots,w_r,\by)\cdot  \psi_{a}(\inner{\bw,\by})\right|\right] 
    \leq \exp(-c_1(d-r))~.    
    \end{align}

    Note that this is true for $g_1,g_2$ and any rank-$r$ head. Hence, applying this and \cref{lem:properties psi} (2) to \cref{eq:lower bound psi sum g} we have: 
    \begin{equation}
    \E_{\bx_1,\bx_2,\by}\left[\norm{\sum_{j=-a}^{a+1}\sum_{h=1}^H \mV_h^jg_h^j(\bx_1,\bx_2,\by) - \psi_{a}(\inner{\bx_1 - \bx_2,\by})\bx_1 }^2\right] \geq \frac{d^2}{40} - 6H\max_{h,j}\norm{\mV_{h}^j}d^4\exp(-c_1(d-r))~.
    \end{equation}
    Combining this with \cref{eq:upper bound psi g} we have:
    \begin{equation}
    \frac{d^2}{40} - 6H\max_{h,j}\norm{\mV_{h}^j}d^4\exp(-c_1(d-r)) \leq 5\epsilon^2 d^4~.
    \end{equation}
    
    Combining all the above results, we get that there exists a bias term $b^*$, such that for all choice of heads $g_h$ and matrices $\mV_h$, if $H\cdot \max_h\norm{\mV_h} \leq \frac{\exp(c_1(d-r))}{6d^2}\cdot \left(\frac{1}{40} - 5\epsilon^2d^2\right)$, then:
    \begin{equation}
        \E_{\bx_1,\bx_2,\by}\left[\norm{\sum_{h=1}^H\mV_hg_h(\bx_1,\bx_2,\by) - \mathds{1}(\inner{\bx_1 - \bx_2,\by} + b^* > 0)\bx_1}^2\right] > \epsilon~.
    \end{equation}
    To finish the proof we need to make sure that $\frac{1}{40} - 5\epsilon^2d^2 > 0$, to achieve this we will scale the problem by a factor of $d$. We multiply the above displayed equation by $d$, and set $\epsilon = \frac{1}{20d}$ to get that there is a constant $c_2 > 0$ such that if $H\cdot \max_h\norm{\mV_h} \leq \frac{c_2\exp(c_1(d-r))}{d^2}$ then:
    \begin{equation}
    \E_{\bx_1,\bx_2,\by}\left[\norm{\sum_{h=1}^H d \mV_hg_h(\bx_1,\bx_2,\by) - d\cdot\mathds{1}(\inner{\bx_1 - \bx_2,\by} + b^* > 0)\bx_1}^2\right] > \frac{1}{20}~.
    \end{equation}
    Finally, we replace the distribution  of $\bx_1,\bx_2$ by $\unif(d^2\sd)$, namely, we multiply the norm by $d$. We also multiply $b^*$ by $d$, hence the threshold function remains unchanged. Since the above is true for any function $g_h$ and matrices $\mV_h$, we can also scale them by a factor of $d$ to achieve the result.
    
\end{proof}

We are ready to prove the main theorem:

\begin{proof}[Proof of \cref{thm:biased_lower_bound}]
    By \cref{thm:lower bound indicator} there is $b^*$ such that 
        \begin{equation}
        \E_{\bx_1,\bx_2,\by\sim \Dcal}\left[\norm{\mathds{1}(\inner{\bx_1-\bx_2,\by} + b^* > 0)\bx_1 - \sum_{h=1}^H \mV_hg_h(\bx_1,\bx_2,\by)}^2\right] > \frac{1}{20}~,
    \end{equation}
    Pick $\bb^* = \begin{pmatrix}
        b^* \\ 0
    \end{pmatrix}$, and write:
    \begin{equation}
    f(\bx_1,\bx_2,\by) = \arg\max_{\bx_i}\inner{\bx_i,\by} + b_i = \mathds{1}(\inner{\bx_1 - \bx_2,\by} + b^* > 0)\bx_1 + \mathds{1}(\inner{\bx_1 - \bx_2,\by} + b^* < 0)\bx_2~.
    \end{equation}
    Denote $f_1(\bx_1,\bx_2,\by):= \mathds{1}(\inner{\bx_1 - \bx_2,\by} + b^* > 0)\bx_1$ and $f_2(\bx_1,\bx_2,\by):=\mathds{1}(\inner{\bx_1 - \bx_2,\by} + b^* < 0)\bx_2$ and $g(\bx_1,\bx_2,\by)= \sum_{i=h}^H \mV_hg_h(\bx_1,\bx_2,\by)$. With these notations, we want to lower bound:
    \begin{align}
    &\E_{\bx_1,\bx_2}\left[\norm{f_1(\bx_1,\bx_2,\cdot) + f_2(\bx_1,\bx_2,\cdot) - g(\bx_1,\bx_2,\cdot)}^2\right] \\
    \geq & \E_{\bx_1,\bx_2}\left[\frac{1}{\norm{f(\bx_1,\bx_2,\cdot)}^2}\cdot\left|\inner{f_1(\bx_1,\bx_2,\cdot) + f_2(\bx_1,\bx_2,\cdot) - g(\bx_1,\bx_2,\cdot),f_1(\bx_1,\bx_2,\cdot)}\right|^2\right] \\
    = & \E_{\bx_1,\bx_2}\left[\frac{1}{\norm{f(\bx_1,\bx_2,\cdot)}^2}\cdot\left|\inner{f_1(\bx_1,\bx_2,\cdot)  - g(\bx_1,\bx_2,\cdot),f_1(\bx_1,\bx_2,\cdot)}\right|^2\right]
    \end{align}
    where the norm is w.r.t the Gaussian measure (i.e. w.r.t $\by$). We will now lower bound the terms inside the expectation.
    
    Note that if $\pr_{\bx_1,\bx_2,\by}(\mathds{1}(\inner{\bx_1 - \bx_2,\by} + b^* > 0) = 1) \leq \frac{1}{20}$, then approximating $\mathds{1}(\inner{\bx_1-\bx_2,\by} + b^* > 0)$ with the zero function would achieve an approximation error better than $\frac{1}{20}$, in contradiction to \cref{thm:lower bound indicator}. Hence $\pr_{\bx_1,\bx_2,\by}(\mathds{1}(\inner{\bx_1-\bx_2,\by} + b^* > 0) = 1) \geq \frac{1}{20}$. Also, note that $\norm{f_1(\bx_1,\bx_2,\cdot)}^2 = \E_{\by}[\inner{f_1(\bx_1,\bx_2,\by),f_1(\bx_1,\bx_2,\by)}] = \E_{\by}\left[\norm{\bx_1}^2\mathds{1}(\inner{\bx_1 - \bx_2,\by} + b^* > 0)\right]$ is independent of the choice of $\bx_1$ and $\bx_2$, since $\by$ has a spherically symmetric distribution, and the norm of $\bx_1$ is constant. Hence:
    
    \begin{align}
    &\E_{\bx_1,\bx_2}\left[\frac{1}{\norm{f(\bx_1,\bx_2,\cdot)}^2}\cdot\left|\inner{f_1(\bx_1,\bx_2,\cdot)  - g(\bx_1,\bx_2,\cdot),f_1(\bx_1,\bx_2,\cdot)}\right|^2\right] \\
    \geq & \frac{1}{d^2} \E_{\bx_1,\bx_2}\left[\left|\inner{f_1(\bx_1,\bx_2,\cdot)  - g(\bx_1,\bx_2,\cdot),f_1(\bx_1,\bx_2,\cdot)}\right|^2\right]~.    
    \end{align}

    We will bound the inner product inside the expectation. Let $A:=\{(\bx_1,\bx_2,\by)\in\reals^{d\times 3}:\mathds{1}(\inner{\bx_1 - \bx_2,\by} + b^* > 0) > 0\}$. Note that:
    \begin{equation}
    \E_{\bx_1,\bx_2,\by}\left[\norm{f_1(\bx_1,\bx_2,\by)-g(\bx_1,\bx_2,\by)}^2\cdot\mathds{1}((\bx_1,\bx_2,\by)\in A)\right] \geq \frac{1}{20}~,
    \end{equation}
    otherwise, taking $g(\bx_1,\bx_2,\by)$ to be the zero function would approximate $f_1(\bx_1,\bx_2,\by)$ with error less than $\frac{1}{20}$. Hence, we have that:
    \begin{align}
        & \frac{1}{d^2} \E_{\bx_1,\bx_2,\by}\left[\left|\inner{f_1(\bx_1,\bx_2,\cdot)  - g(\bx_1,\bx_2,\cdot),f_1(\bx_1,\bx_2,\cdot)}\right|^2\right] \\
        \geq & \frac{1}{d^2}\E_{\bx_1,\bx_2,\by}\left[\left|\inner{f_1(\bx_1,\bx_2,\cdot)  - g(\bx_1,\bx_2,\cdot),f_1(\bx_1,\bx_2,\cdot)}\right|^2\cdot \mathds{1}((\bx_1,\bx_2,\by)\in A)\right] \\
        = & \frac{1}{d^2}\E_{\bx_1,\bx_2,\by}\left[\norm{f_1(\bx_1,\bx_2,\cdot)  - g(\bx_1,\bx_2,\cdot)}^2\cdot \mathds{1}((\bx_1,\bx_2,\by)\in A)\norm{\bx_1}^2\right] \geq \frac{1}{20}
    \end{align}
\end{proof}

\section{Proofs from \texorpdfstring{\cref{sec:majority}}{Section 6} and an Additional Construction}
\label{sec:majority proofs}

In \cref{sec:majority}, we present a construction (\cref{thm:majority_positional}) that uses concatenated positional encodings to facilitate the majority voting strategy.
This construction has the strange property that it breaks the permutation invariance of standard attention layers in order to approximate a function that is permutation invariant.
It also increases the dimension of the transformer.
This begs the question of whether these properties are necessary to allow low-rank attention to represent the target.
Below, we presenting an alternative construction that does not have these properties.
Instead, it modifies the attention mechanism by concatenating the outputs of the heads together rather than summing them.
It then passes the concatenated outputs to an MLP layer that computes the mode.

\begin{restatable}[Majority Voting Approximation Upper Bound]{theorem}{thm:majority_wide_hidden}
\label{thm:majority_wide_hidden}
There exist universal constants $c_1, c_2,c_3,c_4 > 0$ such that for all $d > c_1$, $\epsilon \in \left(0, \frac{1}{2}\right)$, and $H \geq c_2\cdot\frac{d^3}{\epsilon^2}$, there exist vectors $\vq_1, \ldots, \vq_H$ and a $4$-layer feedforward network $g: \R^{dH} \to \R^d$ of width $c_3d^2H$ such that
\begin{equation}
\E_{\vx_1,\vx_2,\vy\sim 
\unif(\sphere)}\left\|f(\vx_1,\vx_2;\vy) - 
g\left(\begin{bmatrix}\mX \sm(\mX^\top \vq_1\vq_1^\top \vy) \\ 
\vdots \\ \mX \sm(\mX^\top\vq_H\vq_H^\top \vy) 
\end{bmatrix}\right) \right\|_2^2 \leq \epsilon + \exp(-c_4 d)~.
\end{equation}
\end{restatable}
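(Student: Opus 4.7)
The approach combines a statistical majority-vote analysis with an explicit four-layer network that extracts the mode of the $H$ rank-$1$ head outputs.

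First I choose $\vq_1,\ldots,\vq_H$ to be i.i.d.\ uniform on $R\sphere$ with $R = \Theta(\sqrt{d\log d})$. This scale is large enough that, on an event of probability $1-\exp(-\Omega(d))$ over the inputs and heads, the softmax $\sm(\mX^\top \vq_h \vq_h^\top \vy)$ is within $\exp(-\Omega(\log d))$ of its hardmax. In the hardmax limit, head $h$ returns the correct nearest neighbor iff $\sign((\vx_1-\vx_2)^\top \vq_h) = \sign(\vq_h^\top \vy)$, which by rotational symmetry of $\vq_h$ happens with probability $p(\vx_1,\vx_2,\vy) = 1 - \theta/\pi$, where $\theta = \angle(\vx_1-\vx_2, \vy)$. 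I then control the data-dependent margin $\mu := (\vx_1-\vx_2)^\top\vy/\|\vx_1-\vx_2\|$: a standard spherical-measure computation shows its density is bounded by $O(\sqrt d)$ near zero, so $\Pr(|\mu|\le\tau)=O(\tau\sqrt d)$. Setting $\tau := c_5\epsilon/\sqrt d$, the low-margin event contributes $O(\epsilon)$ to the expected squared error because outputs are uniformly bounded, while on the complementary event $p-1/2 = \Omega(\epsilon/\sqrt d)$ and Hoeffding gives that the fraction of heads voting correctly exceeds $1/2$ except with probability $\exp(-\Omega(H\epsilon^2/d))$, which under $H \ge c_2 d^3/\epsilon^2$ is $\exp(-\Omega(d^2))$.

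For the MLP, I use the $1-\exp(-\Omega(d))$ event that $|\vx_1^\top\vx_2|\le 1/2$, on which each $\vo_i$ is exponentially close to either $\vx_1$ or $\vx_2$ and the pairwise inner products $\vo_i^\top \vo_j$ cluster near $1$ when the heads agree and at most $5/8$ when they disagree. Using the first head as an anchor, the network computes in four layers: the scalars $s_i = \vo_1^\top \vo_i$ via standard squared-norm gadgets (width $O(dH)$), sharpened indicators $b_i = \sigma(\alpha(s_i - 11/16))$ with $\alpha = \mathrm{poly}(d)$, the agreement count $C = \sum_i b_i$ together with a representative $\vo_{j^\star}$ of the disagreeing cluster via a high-temperature softmax over $\{1-b_j\}$, and finally the gated output $\sigma(\alpha(C-H/2))\vo_1 + (1-\sigma(\alpha(C-H/2)))\vo_{j^\star}$. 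Since $|C - H/2| = \Omega(H\epsilon/\sqrt d) \gg 1$ on the high-margin event, each sharpened sigmoid behaves like a true indicator up to additive error $\exp(-\Omega(d))$, and the full construction fits within width $O(d^2 H)$.

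Summing the three error sources---the $O(\epsilon)$ low-margin contribution, the $\exp(-\Omega(d^2))$ Hoeffding tail on high-margin inputs, and the $\exp(-\Omega(d))$ contributions from the softmax-to-hardmax and $|\vx_1^\top\vx_2|$ regularity events---yields the claimed bound $\epsilon + \exp(-c_4 d)$ after an appropriate choice of constants. The main obstacle I expect is the MLP construction: engineering a single gating mechanism that correctly routes between two cluster representatives within four layers and width $O(d^2 H)$ using only softmax-style selection, since a true argmax is unavailable in an MLP. The statistical steps are essentially direct applications of rotational invariance and Hoeffding, spelling out the heuristic already sketched in the body of \cref{sec:majority}.
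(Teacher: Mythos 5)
Your statistical backbone matches the paper's: control the margin $|(\vx_1-\vx_2)^\top\vy|$, use rotational invariance to show that a random rank-$1$ head votes correctly with probability $\tfrac12+\Omega(\text{margin}/\sqrt d)$, apply Hoeffding, and absorb the low-margin event into an $O(\epsilon)$ term. These steps correspond to the paper's \cref{lem:single head N=2 edge,lem:inner prod small x_1 x_2 y,lem:hoeffding for majority}.

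The genuine gap is in your MLP construction, and you correctly identify it yourself. You anchor to $\vo_1$, then need to produce a representative of the \emph{disagreeing} cluster. You propose extracting it by ``a high-temperature softmax over $\{1-b_j\}$,'' but a feedforward ReLU network cannot compute a softmax (it requires exponentials and a division by a data-dependent normalizer). Replacing the softmax with an unnormalized ReLU-weighted sum $\sum_j(1-b_j)\vo_j$ produces $\approx m\cdot\vx_{\mathrm{other}}$ where $m$ (the size of the disagreeing cluster) varies with the input and is unbounded relative to $1$, so you again face an implicit division. The paper avoids this entirely by a small trick you did not find: it reserves two heads with weight matrices $+\alpha\vq_0\vq_0^\top$ and $-\alpha\vq_0\vq_0^\top$. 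For any input, those two heads (in hardmax) output $\vx_1$ and $\vx_2$ in some order, so the two cluster representatives are \emph{guaranteed} to appear as the last two $d$-blocks of the MLP's input. The MLP then only needs an inner-product gadget (\cref{lem:inner product with NN}) to count how many of the $\vo_i$ agree with the last block, compare to a threshold, and gate between the two known candidates --- no argmax, soft or hard, is required (\cref{lem:mode_MLP}).

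A secondary concern: scaling the keys by $R=\Theta(\sqrt{d\log d})$ yields attention logits of order $R^2/d=\Theta(\log d)$, hence only a $\mathrm{poly}(1/d)$ gap between softmax and hardmax. This error propagates through an MLP whose Lipschitz constant is $\mathrm{poly}(d,H)$, so it is not immediately clear that the contribution is $\le\epsilon$. The paper sidesteps this by choosing the inverse temperature $\alpha$ as a function of the target error $\epsilon$ (via a uniform-convergence argument on a set $A_\delta$ of probability $\ge 1-\mathrm{poly}(\epsilon)$), rather than tying it to $d$.
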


This construction shows that using a constant-depth MLP to combine the heads can overcome the weakness of low rank attention.
The full proof can be found in \cref{sec:majority_wide_hidden_proof}.
The idea behind the construction of the MLP $g(\cdot)$ is to perform an inner product between the outputs of the heads, allowing us to compare which one of the outputs $\vx_1$ or $\vx_2$ received more votes.
The inner products can be approximated by a ReLU network, as long as the input vectors are not too close to each other, which happens with exponentially large probability. This is the cause of the extra exponentially small term in the loss.

\subsection{Lemmas}
To prove \cref{thm:majority_positional,thm:majority_wide_hidden} we will need several lemmas.

The first shows that for a fixed set of inputs, drawing a rank-$1$ head randomly will have the same output as the target $f$ with probability slightly larger than $\frac{1}{2}$.
This lemma justifies our majority voting strategy.

\begin{lemma}\label{lem:single head N=2 edge}
    Fix $\bx_1,\bx_2,\by \in\sd$ with $|\inner{\bx_1-\bx_2,\by}| \geq a$ for some $a > 0$.
    Then for $d > c_1$ we have that:
    \begin{equation}
    \Pr_{\bq\sim\Ucal(\sd)}\left( \arg\max_i \inner{\bx_i,\bq}\cdot \inner{\by,\bq} = \arg\max_i\inner{\bx_i,\by}\right)\geq \frac{1}{2} + c_2\cdot \frac{a}{\sqrt{d}}
    \end{equation}
    for some universal constants $c_1,c_2>0$.
\end{lemma}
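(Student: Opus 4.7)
The plan is to reduce the event of the lemma to a classical two-dimensional sign-agreement problem for random spherical projections, compute it exactly via rotational symmetry, and then translate the separation $|\inner{\vx_1-\vx_2,\vy}|\geq a$ into the desired edge over $1/2$.

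I would set $\vx := \vx_1 - \vx_2$. With only $N=2$ candidates, the argmax over $i\in\{1,2\}$ of $\inner{\vx_i,\vq}\inner{\vy,\vq}$ is determined by the sign of the difference $\inner{\vx,\vq}\inner{\vy,\vq}$, and likewise $\arg\max_i \inner{\vx_i,\vy}$ is determined by the sign of $\inner{\vx,\vy}$. So the event of the lemma is exactly $\{\mathrm{sign}(\inner{\vx,\vq}\inner{\vy,\vq}) = \mathrm{sign}(\inner{\vx,\vy})\}$: when $\inner{\vx,\vy}>0$ we need $\inner{\vx,\vq}$ and $\inner{\vy,\vq}$ to share a sign, and when $\inner{\vx,\vy}<0$ we need them to have opposite signs.

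Next, I would project $\vq$ onto the two-dimensional plane $P=\mathrm{span}(\vx,\vy)$, which is nondegenerate because $|\inner{\vx,\vy}|\geq a>0$ forces $\vx\ne 0$. The uniform distribution on $\sphere$ is invariant under rotations fixing $P$, so the direction of the projection of $\vq$ onto $P$ is uniform on the unit circle in $P$. Letting $\theta \in (0,\pi)$ denote the angle between $\vx$ and $\vy$, the set of directions for which both $\inner{\vx,\vq}$ and $\inner{\vy,\vq}$ are positive is a circular sector of angular width $\pi-\theta$, with its negative-negative antipode providing another such sector; hence
\begin{equation*}
\Pr_{\vq}\bigl(\inner{\vx,\vq}\inner{\vy,\vq} > 0\bigr) = 1 - \theta/\pi, \qquad \Pr_{\vq}\bigl(\inner{\vx,\vq}\inner{\vy,\vq} < 0\bigr) = \theta/\pi.
\end{equation*}

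Finally, I would turn $|\inner{\vx,\vy}|\geq a$ into a lower bound on $|\theta - \pi/2|$. Since $\|\vx\| = \|\vx_1-\vx_2\| \leq 2$ and $\|\vy\|=1$, we have $|\cos\theta| = |\inner{\vx,\vy}|/\|\vx\| \geq a/2$, and therefore $|\theta-\pi/2| \geq \arcsin(a/2) \geq a/2$ using $\arcsin(t)\geq t$ on $[0,1]$. A case split on $\mathrm{sign}(\inner{\vx,\vy})$ then shows that in both cases the relevant probability is at least $1/2 + a/(2\pi)$. Since $a\leq 2$ is bounded, this dimension-free bound already implies $1/2 + c_2 a/\sqrt{d}$ whenever $d > c_1$, with $c_2 := \sqrt{c_1}/(2\pi)$, proving the lemma. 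The argument is entirely elementary, so the main ``obstacle'' is merely bookkeeping, keeping the sign case split straight; notably the $1/\sqrt{d}$ factor in the conclusion is actually looser than what the proof delivers, and is stated that way only because it is the form used in the subsequent majority-voting construction.
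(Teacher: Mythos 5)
Your proof is correct, and it takes a genuinely different route from the paper's. The paper's argument proceeds by reducing to coordinates ($\vw=\ve_1$), writing the favorable event as $\{q_1(\tilde{a}q_1 + \inner{\bar{\vq},\bar{\vy}}) > 0\}$, and then lower-bounding the conditional probability by restricting attention to the small region where $q_1 \gtrsim 1/\sqrt{d}$ and $|\inner{\bar{\vq},\bar{\vy}}| \lesssim a/\sqrt{d}$, together with Gaussian norm-concentration estimates; this crude restriction is exactly what produces the $a/\sqrt{d}$ edge. You instead invoke the classical two-dimensional projection identity (of Goemans--Williamson / Grothendieck type)
\begin{equation*}
\Pr_{\vq\sim\Ucal(\sd)}\bigl(\sign\inner{\vx,\vq}=\sign\inner{\vy,\vq}\bigr)=1-\tfrac{\theta}{\pi},\qquad \theta=\angle(\vx,\vy),
\end{equation*}
which computes the probability \emph{exactly} and reduces the whole problem to one line of trigonometry. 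Your resulting edge of $a/(2\pi)$ is dimension-free and hence strictly stronger than the paper's $c_2\,a/\sqrt{d}$; since the lemma is consumed in \cref{lem:hoeffding for majority} via Hoeffding's inequality with $t$ equal to this edge, your bound would in fact shave a factor of $d$ off the required number of heads in \cref{thm:majority_positional,thm:majority_wide_hidden} ($H \gtrsim d^2/\epsilon^2$ instead of $d^3/\epsilon^2$). Two minor bookkeeping points that you handle correctly but are worth stating explicitly: the condition $|\inner{\vx,\vy}|\ge a>0$ rules out both $\vx=\vzero$ and $\theta=\pi/2$, and the case $\vx\parallel\vy$ is the degenerate limit $\theta\in\{0,\pi\}$ of the identity, so the projection argument is valid throughout.
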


\begin{proof}
    In the proof, all probabilities are for $\bq\sim\Ucal(\sd)$, thus we omit this notation. Denote $\bw:= \bx_1 - \bx_2$, and assume w.l.o.g that $\inner{\bw,\by} > 0$, the other direction is similar. We can write:
    \begin{align}
        &\pr\left( \arg\max_i \inner{\bx_i,\bq}\cdot \inner{\by,\bq} = \arg\max_i\inner{\bx_i,\by}\right) \nonumber\\
        = & \pr\left(\sgn(\inner{\bw,\by}) = \sgn(\inner{\bw,\bq}\cdot\inner{\by,\bq})\right) \nonumber \\
        = &\pr\left( \inner{\bw,\bq}\cdot\inner{\by,\bq} > 0\right) ~.\nonumber
    \end{align}
    Since the above probability is rotation invariant w.r.t $\bq$, we can assume w.l.o.g that $\bw = \be_1$. Hence we can write $\by = \begin{pmatrix}
        \tilde{a} \\ \bar{\by}
    \end{pmatrix}$, where $\bar{\by}\in \reals^{d-1}$ and $\tilde{a} = \inner{\bw,\by}$. Thus, the above probability is equal to:
    \begin{align}
    & \pr\left(q_1(\tilde{a}q_1 
 + \inner{\bar{\bq},\bar{\by}}) > 0\right) \\
 = & \frac{1}{2}\pr\left(q_1(\tilde{a}q_1 
 + \inner{\bar{\bq},\bar{\by}}) > 0| q_1 > 0\right) + \frac{1}{2}\pr\left(q_1(\tilde{a}q_1 
 + \inner{\bar{\bq},\bar{\by}}) > 0| q_1 < 0\right) \\
  =& \frac{1}{2}\pr\left(\tilde{a}q_1 
 + \inner{\bar{\bq},\bar{\by}} > 0| q_1 > 0\right) + \frac{1}{2}\pr\left(\tilde{a}q_1 
 + \inner{\bar{\bq},\bar{\by}} < 0| q_1 < 0\right) \\
 = & \pr\left(\tilde{a}q_1 
 + \inner{\bar{\bq},\bar{\by}} > 0| q_1 > 0\right)
    \end{align}
    where the last equality is by the symmetry of the distribution of $\bq$. Note that if $\inner{\bar{\bq},\bar{\by}} > 0$ which happens w.p $\frac{1}{2}$, then the term inside the above probability is positive. Hence, we can write:
    \begin{align}\label{eq:q y probabilites bound}
 & \pr\left(\tilde{a}q_1 
 + \inner{\bar{\bq},\bar{\by}} > 0| q_1 > 0\right) \nonumber\\
 =& \frac{1}{2} + \frac{1}{2}\cdot\pr\left(\tilde{a}q_1 
 + \inner{\bar{\bq},\bar{\by}} > 0| q_1 > 0, \inner{\bar{\bq},\bar{\by}} < 0 \right) \nonumber\\
 \geq & \frac{1}{2} + \frac{1}{2}\cdot\pr\left(\tilde{a} q_1 \geq \frac{2\tilde{a}}{\sqrt{d}} | q_1 > 0\right) \cdot \pr\left(|\inner{\bar{\bq},\bar{\by}}| \leq \frac{\tilde{a}}{\sqrt{d}} | \inner{\bar{\bq},\bar{\by}}  < 0\right)
    \end{align}
    We will now lower bound each probability separately. First, note that if we sample $\bu\sim\Ncal\left(0,\frac{1}{d}I\right)$, then $\frac{u_1}{\norm{\bu}}$ has the same distribution as $q_1$. By the concentration of the norm of Gaussian random variables (see \cite{vershynin2018high} Section 3.1), there is a constant $c_1>0$ such that w.p $>1 - \exp(-c_1d)$ we have $\norm{\bu}\in[0.9,1.1]$. There is also a constant $c_2 \in \left(0,\frac{1}{2}\right)$ such that $\pr\left(u_1 > \frac{3}{\sqrt{d}}\right) > c_2$. This bounds the first probability term in \cref{eq:q y probabilites bound}. For the second term, note that $\norm{\bar{\by}} \leq \norm{\by} =  1$. By the same reasoning as above we can write:
    \begin{align}
        &\pr\left(|\inner{\bar{\bq},\bar{\by}}| \leq \frac{\tilde{a}}{\sqrt{d}} | \inner{\bar{\bq},\bar{\by}}  < 0\right) \geq  \pr\left(|\inner{\bar{\bq},\bar{\by}}| \leq \frac{a}{\sqrt{d}} | \inner{\bar{\bq},\bar{\by}}  < 0\right)\\
         = &  \pr_{\bu\sim\Ncal\left(0,\frac{1}{d}I\right)}\left(\left|\frac{u_2}{\norm{\bu}}\right| \leq \frac{a}{\sqrt{d}} \right) 
          \geq   (1-\exp(-c_1d))\cdot \pr_{u_2\sim\Ncal\left(0,\frac{1}{d}\right)}\left(\left|u_2\right| \leq \frac{a\cdot 0.9}{\sqrt{d}} \right)  
    \end{align}
    The above probability is bounded by $\text{erf}\left(\frac{a\cdot 0.9}{\sqrt{d}}\right) \geq \frac{a\cdot 0.9}{\sqrt{d}}$, where this inequality is since $\text{erf}(z) > z$ for $z\in \left[0,\frac{1}{2}\right]$. In total, we can bound this probability by 
    \begin{equation}
    \pr\left(|\inner{\bar{\bq},\bar{\by}}| \leq \frac{a}{\sqrt{d}} | \inner{\bar{\bq},\bar{\by}}  < 0\right) \geq  (1-\exp(-c_1d))\cdot \frac{a\cdot 0.9}{\sqrt{d}}~.
    \end{equation} We take $d >\tilde{c}$ so that $\exp(-c_1d) \leq \frac{1}{2}$, Combining the two bounds, and changing the universal constant finishes the proof.
\end{proof}

The following lemma shows that a random draw of inputs will satisfy a certain condition which allows the use of the previous lemma.

\begin{lemma}\label{lem:inner prod small x_1 x_2 y}
    Let $\epsilon > 0$, then:  
    \begin{equation}
    \pr_{\vx_1,\vx_2,\vy\sim\unif(\sd)}\left(|\inner{\vx_1-\vx_2,\vy}| \leq \epsilon \right) \leq (1-\exp(-c_1d))\cdot 2\epsilon\sqrt{d}~,
    \end{equation}
    where $c_1 > 0 $ is some universal constant.
\end{lemma}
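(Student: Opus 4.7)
The plan is to exploit the rotational symmetry of $\vy$ to reduce the three-vector problem to a one-dimensional density bound, then handle the magnitude of $\vx_1-\vx_2$ via sphere concentration.

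First, I would condition on $\vx_1,\vx_2$ and use the rotational invariance of $\unif(\sd)$. Given any $\vw\in\reals^d$, the inner product $\inner{\vw,\vy}$ with $\vy\sim\unif(\sd)$ has the same distribution as $\|\vw\|\cdot y_1$, where $y_1$ denotes the first coordinate of a uniform point on $\sd$. Hence
\begin{equation}
\pr\bigl(|\inner{\vx_1-\vx_2,\vy}|\leq\epsilon\bigr) \;=\; \E_{\vx_1,\vx_2}\!\left[\pr\!\left(|y_1|\leq \tfrac{\epsilon}{\|\vx_1-\vx_2\|}\right)\right].
\end{equation}

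Second, I would bound the density of $y_1$. Its density on $[-1,1]$ is $f(t)=c_d(1-t^2)^{(d-3)/2}$ with $c_d=\Gamma(d/2)/(\sqrt{\pi}\,\Gamma((d-1)/2))$, which is maximized at $t=0$; using the elementary bound $c_d\leq \sqrt{d/(2\pi)}$ gives $\pr(|y_1|\leq s)\leq 2sc_d\leq s\sqrt{2d/\pi}$ for every $s\geq 0$. Applying this with $s=\epsilon/\|\vx_1-\vx_2\|$ produces a bound that blows up when $\|\vx_1-\vx_2\|$ is small, so I need to separately control that event.

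Third, I would control $\|\vx_1-\vx_2\|$ using the same density bound. Since $\|\vx_1-\vx_2\|^2 = 2 - 2\inner{\vx_1,\vx_2}$ and $\inner{\vx_1,\vx_2}$ has the same distribution as $y_1$ (by rotational invariance applied to $\vx_2$), we can apply standard concentration on the sphere, or simply the exponential tail of the density $f$, to conclude that there is a universal constant $c_1>0$ with
\begin{equation}
\pr\bigl(\|\vx_1-\vx_2\|<1\bigr) \;=\; \pr\bigl(\inner{\vx_1,\vx_2}>\tfrac12\bigr)\;\leq\; \exp(-c_1 d).
\end{equation}

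Finally, I would combine the two estimates by splitting on the event $E=\{\|\vx_1-\vx_2\|\geq 1\}$. On $E$ we have $\pr(|y_1|\leq \epsilon/\|\vx_1-\vx_2\|)\leq \pr(|y_1|\leq \epsilon)\leq 2\epsilon\sqrt{d/(2\pi)}\leq 2\epsilon\sqrt{d}$ uniformly in $\vx_1,\vx_2\in E$, while $\pr(E^c)\leq \exp(-c_1 d)$. Putting the pieces together yields the claim in the desired form. There is no real obstacle here --- the argument is essentially a one-line density calculation after the rotational reduction --- but one has to be a bit careful to get the constants into the exact form $(1-\exp(-c_1 d))\cdot 2\epsilon\sqrt d$ rather than, say, $2\epsilon\sqrt d + \exp(-c_1 d)$; this is done by absorbing the negligible tail into the prefactor (which is permissible whenever the bound is not already vacuous).
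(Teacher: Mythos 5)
Your skeleton follows the same logical structure as the paper's (rotational reduction to a one-dimensional anti-concentration bound, plus a tail bound on a bad event), but the technical route differs: you write $\inner{\vx_1-\vx_2,\vy}\overset{d}{=}\norm{\vx_1-\vx_2}\,y_1$ and split on whether $\norm{\vx_1-\vx_2}$ is small, whereas the paper first sets $\vy=\ve_1$, then re-expresses the two first spherical coordinates as normalized Gaussians and conditions on concentration of the Gaussian norms.

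The final step, however, has a real gap. Splitting on $E=\{\norm{\vx_1-\vx_2}\geq 1\}$ yields at best
\begin{equation}
\pr\bigl(|\inner{\vx_1-\vx_2,\vy}| \leq \epsilon\bigr) \;\leq\; 2\epsilon\sqrt{d/(2\pi)} + \exp(-c_1 d),
\end{equation}
an additive form, not the purely multiplicative form $(1-\exp(-c_1 d))\cdot 2\epsilon\sqrt{d}$ claimed in the lemma. Your proposed absorption of the tail into the prefactor fails exactly in the regime it would need to work: when $\epsilon \lesssim \exp(-c_1 d)/\sqrt{d}$ the additive term dominates $\epsilon\sqrt{d}$, so it cannot be hidden inside a factor bounded by $2\epsilon\sqrt{d}$, and the lemma is stated for all $\epsilon>0$. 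The fix is to avoid the event split entirely. After setting $\vy=\ve_1$ by rotational invariance, $\inner{\vx_1-\vx_2,\vy}=(\vx_1)_1-(\vx_2)_1$ is a difference of two i.i.d.\ random variables whose common density on $[-1,1]$, namely $c_d(1-t^2)^{(d-3)/2}$, is bounded above by $c_d\leq\sqrt{d/(2\pi)}$ (for $d\geq 3$). Since convolving against a density bounded by $c_d$ again gives a density bounded by $c_d$, one obtains directly $\pr\bigl(|(\vx_1)_1-(\vx_2)_1|\leq\epsilon\bigr)\leq 2\epsilon c_d\leq \epsilon\sqrt{2d/\pi}$ with no additive error; this is $\leq (1-\exp(-c_1 d))\cdot 2\epsilon\sqrt{d}$ as soon as $\exp(-c_1 d)\leq 1-1/\sqrt{2\pi}$. (Equivalently, within your own framework, one can bound $\E[1/\norm{\vx_1-\vx_2}]$ directly; it is finite and $O(1)$.) I note the paper's own proof has the same structural wrinkle --- the factor $(1-\exp(-c_1 d))$ there is the probability of the good event, and multiplying a conditional tail bound by that probability does not by itself give an upper bound on the unconditional probability --- so your instinct that something extra is needed to land on that exact form was correct; it is just that the absorption argument does not supply it.
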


\begin{proof}
    By the symmetry of the distribution, we can assume w.l.o.g that $\vy= \ve_1$. Also, note that for $\vu,\vv\sim\Ncal\left(0,\frac{1}{d}I\right)$, we can view the distribution of $(\vx_1)_1$ and $(\vx_2)_1$ as $\frac{u_1}{\norm{\vu}}$ and $\frac{v_1}{\norm{\vv}}$. Combining the above, we get that:
    \begin{align}
        \pr_{\vx_1,\vx_2,\vy\sim\unif(\sd)}\left(|\inner{\vx_1-\vx_2,\vy}| \leq \epsilon \right) 
        = \pr_{\vu,\vv\sim\Ncal\left(0,\frac{1}{d}I\right)}\left(\left|\frac{u_1}{\norm{\vu}} - \frac{v_1}{\norm{\vv}}\right| \leq \epsilon \right)~.
    \end{align}
    By the concentration of the norm of normal random vectors (see \cite{vershynin2018high} section 3.1) we have w.p $>1-\exp(-c_1d)$ that $\norm{\vu},\norm{\vv} \leq 1.1$ for some universal constant $c_1 > 0$. Also $z:=u_1 - v_1\sim\Ncal\left(0,\frac{2}{d}\right)$. Hence, the above probability can be upper bounded by $\Pr_{z\sim\Ncal\left(0,\frac{2}{d}\right)}\left(|z|< 1.1\epsilon\right)\leq  \text{erf}\left(\epsilon\sqrt{d}\right)$. Note that $\text{erf}(x) \leq 2x$ for every $x > 0$, hence the above probability can be bounded by $(1-\exp(-c_1d))\cdot 2\epsilon\sqrt{d}$
\end{proof}

The following lemma shows a construction of the majority function over $H$ input vectors. This construction uses an approximation of the inner product of two inputs using a ReLU network.

\begin{lemma}
\label{lem:mode_MLP}
Let $\vv_1, \ldots \vv_H \in \{\vx_+, \vx_-\} \subset \R^d$, where $\inner{\vx_-,\vx_+} \leq 0.1$.
Let $\vv^*$ be the mode of $\vv_1, \ldots \vv_H$.
Then there exists a $4$-layer feedforward network $g: \R^{d(H+2)} \to \R^d$ with width $c\cdot d^2 H$ for some universal constant $c > 0$ and weights bounded by $2$ such that
\begin{equation} 
g\left(\begin{bmatrix}\vv_1 \\ \vdots \\ \vv_H \\ \vx_+ \\ \vx_- \end{bmatrix}\right) = 
g\left(\begin{bmatrix}\vv_1 \\ \vdots \\ \vv_H \\ \vx_- \\ \vx_+ \end{bmatrix}\right) = \vv^* \end{equation}
\end{lemma}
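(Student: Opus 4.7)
The plan is to build $g$ explicitly as a majority-voting circuit with four ReLU layers and a linear readout, symmetric in the two auxiliary inputs $\vu_1,\vu_2\in\{\vx_+,\vx_-\}$ so that both equalities follow automatically. WLOG $H$ is odd (pad with one fixed vote otherwise), guaranteeing a strict mode. I will assume $\vx_\pm\in\sphere$, as in the application: this gives $\|\vu_1-\vu_2\|_2\geq\sqrt{2-2\cdot 0.1}=\sqrt{1.8}>1$ and $|(\vu_\ell)_j|\leq 1$, both of which the construction uses. Layers $1$--$2$ extract the votes: layer $1$ emits $\sigma(\pm((\vv_i)_j-(\vu_\ell)_j))$ for all $i,\ell,j$ together with pass-throughs $\sigma(\pm(\vu_\ell)_j)$ that let later layers recover $\vu_\ell$ via $u=\sigma(u)-\sigma(-u)$. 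Layer $2$ assembles $d_i^{(\ell)}:=\|\vv_i-\vu_\ell\|_1$ as a linear combination and applies $\alpha_i^{(\ell)}:=\sigma(1-d_i^{(\ell)})$; since $\|\vu_1-\vu_2\|_1\geq\|\vu_1-\vu_2\|_2>1$, this equals exactly $\mathds{1}(\vv_i=\vu_\ell)$, and the pass-through is refreshed.

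Layer $3$ is the majority gate. Let $S_\ell=\sum_i\alpha_i^{(\ell)}$, so $S_1+S_2=H$ and $S_1-S_2$ is an odd integer. The weight-$2$ bound rules out a direct threshold $\sigma(S_1-H/2)$, so I compute the balanced difference and apply a width-$1$ step,
\begin{equation*}
\gamma_\ell:=\sigma\!\bigl(\tfrac{S_\ell-S_{3-\ell}}{2}+\tfrac12\bigr)-\sigma\!\bigl(\tfrac{S_\ell-S_{3-\ell}}{2}-\tfrac12\bigr),
\end{equation*}
with weights and biases all in $\pm\tfrac12$. A case check using integrality of $S_\ell-S_{3-\ell}$ yields $\gamma_\ell\in\{0,1\}$, $\gamma_1+\gamma_2=1$, and $\gamma_\ell=1$ iff $\vu_\ell$ is the mode. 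Pass-through of $\vu_1,\vu_2$ continues. Layer $4$ and the linear readout produce the gated output via the ReLU identity
\begin{equation*}
\gamma\, u = \sigma(u+\gamma-1)-\sigma(-u+\gamma-1),\qquad \gamma\in\{0,1\},\ u\in[-1,1]
\end{equation*}
(verify: $\gamma=1$ gives $\sigma(u)-\sigma(-u)=u$; $\gamma=0$ with $|u|\leq 1$ makes both ReLU arguments nonpositive). Layer $4$ emits $\sigma(\pm(\vu_\ell)_j+\gamma_\ell-1)$ for each $\ell,j$, and the linear readout sums the four signed terms per coordinate to obtain $\gamma_1(\vu_1)_j+\gamma_2(\vu_2)_j=(\vv^*)_j$.

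Bookkeeping: layer $1$ has width $O(Hd)$ and the others $O(H+d)$, inside the $cd^2H$ budget; every weight and bias lies in $\{0,\pm\tfrac12,\pm 1\}\subset[-2,2]$; and swapping $(\vu_1,\vu_2)$ permutes $(\alpha^{(1)},\alpha^{(2)})$, $(S_1,S_2)$, and $(\gamma_1,\gamma_2)$ in tandem, leaving $\gamma_1\vu_1+\gamma_2\vu_2$ unchanged. The main obstacle is realizing the majority gate under the weight-$2$ constraint: a naive threshold at $H/2$ needs a bias of order $H$, violating the bound. Switching to the balanced quantity $(S_1-S_2)/2$---an odd half-integer---creates a unit-width transition across zero, so $\pm\tfrac12$ weights suffice. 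A secondary technicality, multiplying the binary gate against a $[-1,1]$-valued coordinate in a single ReLU layer, is dispatched by the identity above, which crucially uses $|(\vu_\ell)_j|\leq 1$ supplied by $\vu_\ell\in\sphere$.
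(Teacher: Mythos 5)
Your proof is correct (both proofs implicitly assume $\vx_\pm\in\sphere$, which you state explicitly and which holds in the application) but takes a genuinely different route from the paper's. The paper builds the majority gate from \emph{approximate inner products}: it invokes a Daniely-style shallow ReLU approximation of $t\mapsto t^2/2$ to compute each $\inner{\vx,\vv_i}$ to accuracy $1/(10H)$ (their Lemma~\ref{lem:inner product with NN}), sums these, and thresholds the sum against $0.2H$, then selects $\vx$ or $\hat{\vx}$ accordingly. You instead extract \emph{exact} Boolean votes $\alpha_i^{(\ell)}=\mathds{1}(\vv_i=\vu_\ell)$ via the $\ell_1$ distance $\|\vv_i-\vu_\ell\|_1$, which is either $0$ or $>1$ by the angular separation of $\vx_\pm$; this eliminates approximation error and shrinks the width from $O(d^2H)$ (dominated by the inner-product subnetworks) to $O(dH)$. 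You also handle two points that the paper's sketch leaves implicit: the weight bound of $2$, which a threshold at $H/2$ (or at $0.2H$) would violate through its bias, is preserved by centering on the odd integer $S_1-S_2$ so that $\pm\tfrac12$ weights suffice; and the multiplication of the $\{0,1\}$ gate against bounded coordinates is realized exactly by the ReLU identity $\gamma u=\sigma(u+\gamma-1)-\sigma(-u+\gamma-1)$ for $|u|\le 1$. Two minor remarks: the "WLOG $H$ odd" padding is not actually needed — the lemma presumes the mode is well-defined (no tie), which already gives $|S_1-S_2|\ge 1$ and your case analysis goes through; and your network has four ReLU layers plus a linear readout, which is one affine map more than a strict reading of "$4$-layer," though this is well within the $O(1)$ depth the result requires.
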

\begin{proof}
    Let $\vx$ be the finally $d$ coordinate of $\vv:= \begin{bmatrix}\vv_1 & \cdots & \vv_H & \vx_- & \vx_+ \end{bmatrix}^\top\in\reals^{d(H+2)}$, and let $\hat{\vx}$ be the second to last block of $d$ coordinates of $\vv$. Note that either $\vx= \vx_+$ and $\hat{\vx}=\vx_-$ or the other way around. We construct a network that calculates the inner product between $\vx$ and each $\vv_i$ up to accuracy of $\frac{1}{10H}$. By \cref{lem:inner product with NN} there is such a $2$-layer network $M_1:\reals^{d(H+2)}\rightarrow\reals^{2d+1}$ with width $cd^2H$ for some universal constant $c > 0$ and weights bounded by $2$. We add $2d$ more neurons which act as two identity matrices to keep the last $2d$ coordinates of $\vv$. We add an additional output layer to $M_1$ which sums all the outputs of the inner products.

    We now construct another network $M_2:\reals^{2d+1}\rightarrow\reals^d$ which either output $\vx$ if the sums of the inner product is larger than $0.2\cdot H$ or $\hat{\vx}$  otherwise. Note that by our assumption that $\inner{\bx_1,\bx_2} \leq 0.1$, $M_2$ will output the mode of the $\vv_i$'s. This is because $M_1$ calculates inner products up to an error of $\frac{1}{10H}$, summing over $H$ such inner products returns the exact sum plus an error which is bounded by $\frac{1}{10}$. Composing $M_1$ and $M_2$ provides an MLP which will output either $\bx_+$ or $\bx_-$ depending on who is the mode.
    
    The total width of the network is $c_3d^2H$, since we calculate inner products up to an error of $\frac{1}{10H}$, and the depth of the network is $4$.
\end{proof}

We next show that shallow neural networks can approximately compute the inner product of two vectors.

\begin{lemma}\label{lem:inner product with NN}
    Let $\epsilon > 0$. There exists a $2$-layer network $N:\left(\sd\right)^2\rightarrow\reals$ with width $\frac{cd^2}{\epsilon}$ and weights bounded by $2$ that calculates $\inner{\bx,\bx'}$ up to accuracy $\epsilon$. Here $c > 0$ is some universal constant.
\end{lemma}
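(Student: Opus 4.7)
The plan is to reduce the inner product to scalar squaring via the polarization identity
\begin{equation}
\inner{\bx,\bx'} \;=\; \frac{1}{4}\sum_{i=1}^d \bigl((x_i+x_i')^2 - (x_i-x_i')^2\bigr),
\end{equation}
so all that is required is a 2-layer ReLU approximation of $s(t)=t^2$ on $[-2,2]$ (the range of $x_i \pm x_i'$, since $|x_i|, |x_i'| \le 1$). For a resolution parameter $K$ to be chosen, partition $[-2,2]$ by $t_k = -2 + 4k/K$ and let $\tilde s$ be the continuous piecewise-linear interpolant of $s$ at the $t_k$. Convexity of $s$ combined with the standard one-dimensional linear-interpolation bound yields $0 \le \tilde s(t) - s(t) \le 4/K^2$ uniformly on $[-2,2]$.

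I will realize $\tilde s$ exactly by a single hidden layer of ReLUs,
\begin{equation}
\tilde s(t) \;=\; a_0 + a_1\bigl(\sigma(t) - \sigma(-t)\bigr) + \sum_{k=1}^{K-1} c_k \, \sigma(t - t_k),
\end{equation}
where $c_k$ is the slope jump at $t_k$. A direct calculation for $s(t)=t^2$ on a uniform grid of width $4/K$ gives $c_k = 8/K$, $|a_1| \le 4$, $|a_0| \le 4$. Composing $\tilde s$ with the $2d$ affine read-outs $t_i^\pm := x_i \pm x_i'$ and summing with polarization weights $\pm 1/4$ defines
\begin{equation}
N(\bx,\bx') \;=\; \frac{1}{4}\sum_{i=1}^{d}\bigl(\tilde s(t_i^+) - \tilde s(t_i^-)\bigr),
\end{equation}
which is a 2-layer ReLU network of width at most $2d(K+1)$. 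The constant offset absorbs into a single bias in the output layer.

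Error accounting is immediate: each of the $2d$ squarings contributes at most $4/K^2$, and the polarization multipliers are $\pm 1/4$, so $|N(\bx,\bx') - \inner{\bx,\bx'}| \le 2d/K^2$. Choosing $K = \lceil\sqrt{2d/\epsilon}\rceil$ makes this at most $\epsilon$ and gives width $O(d\sqrt{d/\epsilon})$, which is bounded by $c d^2/\epsilon$ whenever $\epsilon \le d$ (and for $\epsilon > d \ge 1$ the statement is trivial because $|\inner{\bx,\bx'}|\le 1$, so the zero network works).

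The only subtle point is the weight bound of $2$. The first-layer weights are entries of $\{-1,+1\}$ applied to $(\bx,\bx')$ and first-layer biases are the $t_k \in [-2,2]$, so they satisfy the bound. The second-layer weights contributed by $\tilde s$ are $\pm c_k/4 = \pm 2/K$ and $\pm a_0/4$, $\pm a_1/4$, all of magnitude at most $1$ (assuming $K \ge 1$). The polarization factor of $1/4$ is the crucial feature that makes the $c_k = 8/K$ coefficients fit under the threshold; this is the main place where the constants must be tracked carefully, but there is no real obstacle beyond bookkeeping.
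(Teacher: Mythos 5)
Your proof is correct, but it follows a genuinely different route from the paper's. The paper exploits the fact that $\bx, \bx' \in \sphere$ to write $\inner{\bx,\bx'} = \sum_{i=1}^d \tfrac{1}{2}(x_i + x_i')^2 - 1$, requiring only one scalar squaring per coordinate, and then delegates the squaring approximation to a black-box citation of Lemma~6 of \cite{daniely2017depth} (which gives width $O(d/\epsilon)$ for accuracy $\epsilon/d$, yielding total width $O(d^2/\epsilon)$). You instead use the polarization identity, which needs two squarings per coordinate but works for general bounded inputs without the unit-norm assumption, and you replace the citation with an explicit piecewise-linear interpolant of $t \mapsto t^2$ on $[-2,2]$ realized by ReLUs. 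Your construction is self-contained, your constant-tracking for the weight bound is correct ($a_0, a_1$ and the slope jumps $c_k = 8/K$ all shrink below $2$ after the $1/4$ polarization factor, and the $a_0$ terms in fact cancel in the $\tilde s(t_i^+) - \tilde s(t_i^-)$ difference), and notably your width bound $O(d^{3/2}/\sqrt{\epsilon})$ is \emph{better} than the stated $O(d^2/\epsilon)$ whenever $\epsilon \lesssim d$, with the trivial zero-network handling the degenerate regime $\epsilon > 1$. The only thing the paper's route buys is brevity, by leaning on the unit-norm shortcut and an external lemma; your version buys explicitness, independence from the spherical constraint, and a sharper rate.
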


\begin{proof}
    By Lemma 6 in \cite{daniely2017depth} there exists a depth $2$ network $N_{\text{square}}:\reals\rightarrow\reals$ that calculates $\frac{x^2}{2}$ in $[-2,2]$ with an error of $\frac{\epsilon}{d}$, width of at most $\frac{32d}{\epsilon}$ and weights bounded by $2$. For each coordinate $i\in[d]$ we compose the linear function $(\bx)_i + (\bx')_i$ with $N_{\text{square}}$ to get a depth $2$ network that calculates $\frac{((\bx)_i + (\bx')_i)^2}{2}$ up to an error of $\frac{\epsilon}{d}$. Summing over these networks for every index $i$ and subtracting $1$ results in a network that calculates $\inner{\bx,\bx'}$ with an error of $\epsilon$ and width $\frac{32d^2}{\epsilon}$
\end{proof}

Finally, the following lemma shows that if we draw random rank-$1$ attention heads, taking their ``majority vote'' will approximate the target function $f$. The rate of approximation depends on the number of sampled heads and on the input dimension.

\begin{lemma}\label{lem:hoeffding for majority}
Let $M:\left(\reals^d\right)^H\rightarrow\reals^d$ be the majority function over $H$ vectors in $\reals^d$. Namely, given a set of $H$ vectors, $M$ outputs the vector which appears the most times in the set, and breaks ties randomly.
For a vector $\vq_h$ define $g_h(\vx_1,\vx_2;\vy) = \arg\max_{\vx_i}\inner{\vx_i,\vq_h}\cdot \inner{\vy,\vq_h}$.
There exist universal constants $c_1, c_2 > 0$ such that if $H > \frac{c_1d^3}{\epsilon^2}$, then with probability at least $1-\exp(c_2d)$ over samples $\vq_1,\dots,\vq_H\sim\unif(\sd)$, we have that:
\begin{equation} \E_{\vx_1,\vx_2;\vy\sim\unif(\sd)}\left[\norm{f(\vx_1,\vx_2;\vy) - M\left(\{g(\vx_1,\vx_2;\vy\}_{h=1}^H\right)}^2\right] \leq \epsilon~,
\end{equation}
Here, $f$ is defined as in \cref{eq:target}.
\end{lemma}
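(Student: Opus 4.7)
The plan is to combine the per-head correctness edge from \cref{lem:single head N=2 edge} with Hoeffding concentration across the $H$ independent samples $\vq_1,\ldots,\vq_H$, using \cref{lem:inner prod small x_1 x_2 y} to control the inputs on which that edge is too weak to exploit. Throughout, I interpret the stated probability $1 - \exp(c_2 d)$ as $1 - \exp(-c_2 d)$.

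Fix $(\vx_1,\vx_2,\vy)$ and set $Z_h := \mathbb{1}[g_h(\vx_1,\vx_2,\vy) = f(\vx_1,\vx_2,\vy)]$. On the good set $G_a := \{|\inner{\vx_1-\vx_2,\vy}| \geq a\}$, \cref{lem:single head N=2 edge} says the $Z_h$ are i.i.d.\ Bernoulli with mean $p \geq \tfrac12 + c\,a/\sqrt{d}$, so $M$ is wrong only when $\sum_h Z_h \leq H/2$ (random tie-breaking only tightens the bound). Hoeffding's inequality then gives
\begin{equation}
    \Pr_{\vq_1,\ldots,\vq_H}\!\left[M \neq f \,\middle|\, (\vx_1,\vx_2,\vy) \in G_a\right] \leq \exp\!\left(-\tfrac{2c^2 a^2}{d}\,H\right).
\end{equation}
Next, calibrate $a := \epsilon/(C\sqrt{d})$ for a sufficiently large absolute constant $C$. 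By \cref{lem:inner prod small x_1 x_2 y}, $\Pr[G_a^c] \leq 2a\sqrt{d} \leq \epsilon/8$, so the bad set contributes at most $\epsilon/2$ to the expected squared error because $\|f - M\|^2 \leq 4$ always. For $H \geq c_1 d^3/\epsilon^2$ with $c_1$ chosen to absorb $c$ and $C$, the Hoeffding exponent $H a^2/d \sim H\epsilon^2/d^2$ is $\Omega(d)$, so the per-input failure probability on $G_a$ is at most $\exp(-\tilde c\, d)$ for some constant $\tilde c > 0$.

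To upgrade this to a high-probability statement over $\vq_1,\ldots,\vq_H$, Fubini gives $\E_{\vq}\!\left[\Pr_{\vx_1,\vx_2,\vy}[M \neq f,\, G_a]\right] \leq \exp(-\tilde c\, d)$, and Markov then yields that with probability at least $1 - \exp(-\tilde c\, d/2)$ over $\vq_1,\ldots,\vq_H$, the inner probability is at most $\exp(-\tilde c\, d/2)$. On this $\vq$-event, $\E_{\vx_1,\vx_2,\vy}\|f - M\|^2 \leq 4\exp(-\tilde c\, d/2) + \epsilon/2 \leq \epsilon$ once $d$ exceeds an absolute constant. I expect the only real subtlety to be the three-way calibration among $a$, $H$, and $\epsilon$: anti-concentration pins $a \sim \epsilon/\sqrt{d}$, after which the Hoeffding exponent scales as $H\epsilon^2/d^2$, and forcing this to be at least linear in $d$ is precisely where the threshold $H \gtrsim d^3/\epsilon^2$ in the lemma originates.
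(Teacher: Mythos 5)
Your proof is correct and follows essentially the same route as the paper's: decompose by the margin $|\inner{\vx_1-\vx_2,\vy}|$, invoke \cref{lem:single head N=2 edge} for the per-head edge, apply Hoeffding across the i.i.d.\ heads, and control the bad set via \cref{lem:inner prod small x_1 x_2 y}, with the same calibration $a\sim\epsilon/\sqrt d$ that yields $H\gtrsim d^3/\epsilon^2$. You also correctly read the stated success probability as $1-\exp(-c_2 d)$ (a sign typo in the lemma). The one place you are more careful than the paper is the upgrade from the per-input Hoeffding bound to a high-probability-over-$\vq$ statement about the expected error: the paper conditions on the $\vq$-event $\tfrac1H\sum_h A_h\geq\tfrac12$, but that event depends on $(\vx_1,\vx_2,\vy)$ and so cannot be fixed once and for all; what the paper's displayed chain actually bounds is the joint expectation $\E_{\vq,\vx_1,\vx_2,\vy}\|f-M\|^2$. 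Your explicit Fubini-then-Markov step is exactly the missing glue that converts that joint bound into the claimed probability over $\vq_1,\dots,\vq_H$, so your write-up is a cleaner version of the same argument.
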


\begin{proof}
    
    Fix $\bx_1,\bx_2,\by$ with $|\inner{\bx_1 - \bx_2,\by}| \geq \epsilon$. Denote by $A_h$ the event over sampling $\vq\sim\unif(\sd)$ which output $1$ if $ \arg\max_i \inner{\bx_i,\bq_h}\cdot \inner{\by,\bq_h} = \arg\max_i\inner{\bx_i,\by}$ and $0$ otherwise. By \cref{lem:single head N=2 edge} we have that $\pr(A_h = 1) \geq \frac{1}{2} + c_2\cdot\frac{\epsilon}{\sqrt{d}}$ if $d > c_1$ for some universal constants $c_1,c_2 > 0$. Note that the events $\{A_h\}_{h=1}^H$ are independent when $\bx_1,\bx_2,\by$ are fixed. Hence, we can use Hoeffding's inequality:
    \begin{equation}
    \pr_{q_1,\dots,q_H}\left(\left|\frac{1}{H}\sum_{h=1}^H A_h - \left(\frac{1}{2} + c_2\cdot \frac{\epsilon}{\sqrt{d}} \right)\right|\geq t\right) \leq 2\exp(-2Ht^2)~.
    \end{equation}
    By setting $t = \frac{c2\epsilon}{\sqrt{d}}$ and $H \geq \frac{d^{2}}{\epsilon^2}$ we get that:
    \begin{equation}
    \pr\left(\frac{1}{H}\sum_{h=1}^H A_h < \frac{1}{2}\right) \leq 2\exp(-2c_2d)~.
    \end{equation}

    From now on, we condition on the event that $\vq_1,\dots,\vq_H$ are sampled such that $\frac{1}{H}\sum_{h=1}^H A_h \geq  \frac{1}{2}$, which happens w.p $ > 1 - 2\exp(-2c_2 d)$. Note that if this event happens, then the majority of the functions $g_h(\bx_1,\vx_2,\vy)$ will output the same vector as $f(\vx_1,\vx_2,\vy)$.
    
    By \cref{lem:inner prod small x_1 x_2 y} we have that $\Pr\left(|\inner{\bx_1-\bx_2,\by}| \leq \epsilon \right) \leq (1-\exp(-c_3d))\cdot 2\epsilon\sqrt{d}$ for some universal constant $c_3 > 0$. Hence, we get that:
    \begin{align}
         & \E_{\vx_1,\vx_2,\vy\sim\unif(\sd)}\left[\norm{f(\vx_1,\vx_2,\vy) - M\left(\{g(\vx_1,\vx_2,\vy\}_{h=1}^H\right)}^2\right] \\
         = & \Pr\left(|\inner{\bx_1-\bx_2,\by}| \leq \epsilon \right) \cdot \E\left[\norm{f(\vx_1,\vx_2,\vy) - M\left(\{g(\vx_1,\vx_2,\vy\}_{h=1}^H\right)}^2
         \Big\vert|\inner{\bx_1-\bx_2,\by}| \leq \epsilon  \right] + \\
         + &      \pr\left(|\inner{\bx_1-\bx_2,\by}| \geq \epsilon \right) \cdot 
         \E\left[\norm{f(\vx_1,\vx_2,\vy) - M\left(\{g(\vx_1,\vx_2,\vy\}_{h=1}^H\right)}^2\Big\vert|\inner{\bx_1-\bx_2,\by}| \geq \epsilon  \right] \\
         \leq & (1-\exp(-c_3d))\cdot 2\epsilon\sqrt{d}\cdot 1 + 1 \cdot \exp(-2c_2d) \leq c\cdot \epsilon\sqrt{d}
    \end{align}
    where we choose $d$ large enough such that $1-\exp(-c_3d)\geq \frac{1}{2}$ and $\exp(-2c_2d)\leq \frac{1}{2}$ and changed the constant $c>0$ accordingly. Replacing $\epsilon$ with $\tilde{\epsilon} =\frac{\epsilon}{c\sqrt{d}}$ finishes the proof.

\end{proof}

\subsection{Proof of \texorpdfstring{\cref{thm:majority_wide_hidden}}{Theorem 32}}
\label{sec:majority_wide_hidden_proof}
\begin{proof}
    By \cref{lem:hoeffding for majority} there exist $\vq_1,\dots,\vq_{H-2}\in\sd$ such that if $H \geq \frac{c_2d^3}{\epsilon^2}$:
    \begin{equation}
    \E_{\vx_1,\vx_2,\vy\sim\unif(\sd)}\left[\norm{f(\vx_1,\vx_2,\vy) - M\left(\{g(\vx_1,\vx_2,\vy\}_{h=1}^H\right)}^2\right] \leq \epsilon
    \end{equation}
    where $g_h(\vx_1,\vx_2,\vy) = \arg\max_{\vx_i}\inner{\vx_i,\vq_h}\cdot \inner{\vx_,\vq_h}$ and $M$ is the majority function. We can take $H-2$ instead of $H$ by increasing the constant by a factor of at most $2$.

    We define $\mM_i = \alpha\vq_i \vq_i^\top$ for $i=1,\dots,H-2$ for some $\alpha > 0$ which will be defined later. We also pick some $\vq_0\in \sd$ and define $\mM_{H-1} = \alpha\vq_0 \vq_0^\top$ and $\mM_{H} = -\alpha\vq_0 \vq_0^\top$. Note that if $\vq_0\notin\{\vx_1,\vx_2,\vy\}$ and  $\argmax_{\vx_i} \vx_i \mM_{H-1} \vy = \vx_1$ then $\argmax_{\vx_i} \vx_i \mM_{H} \vy = \vx_2$ and vice versa.

    Let $g:\reals^{dH}\rightarrow\reals^d$ be the $4$-layer network with width $c_1d^2H$ as defined in \cref{lem:mode_MLP} which simulates the majority. Denote by $\vv:= \begin{bmatrix}\mX \sm(\mX^\top \mM_1 \vy) \\ \vdots \\ \mX \sm(\mX^\top \mM_H \vy) \end{bmatrix}$ and by $\vv_{\max} = \begin{bmatrix}\arg\max_{\vx_i}(\vx_i^\top \mM_1 \vy) \\ \vdots \\ \arg\max_{\vx_i}(\vx_i^\top \mM_H \vy) \end{bmatrix}$. We have that:
    \begin{align} \label{eq:f-g(v) two terms}
        & \E_{\vx_1,\vx_2,\vy\sim \unif(\sphere)}\left[\left\|f(\vx_1,\vx_2;\vy) - g\left(\vv\right) \right\|^2\right]  \nonumber\\
        \leq & \E_{\vx_1,\vx_2,\vy\sim \unif(\sphere)}\left[\left\|f(\vx_1,\vx_2;\vy) - g\left(\vv_{\max}\right) \right\|^2\right]  + \E_{\vx_1,\vx_2,\vy\sim \unif(\sphere)}\left[\left\|g\left(\vv_{\max}\right) - g\left(\vv\right) \right\|^2 \right]~.
    \end{align}
    We will bound each term separately. For the first term in \cref{eq:f-g(v) two terms} we can write:
    \begin{align}
        &\E_{\vx_1,\vx_2,\vy\sim \unif(\sphere)}\left[\left\|f(\vx_1,\vx_2;\vy) - g\left(\vv_{\max}\right) \right\|^2\right] \\
        = &\E\left[\left\|f(\vx_1,\vx_2;\vy) - g\left(\vv_{\max}\right) \right\|^2|\inner{\vx_1,\vx_2} \leq 0.1\right]\cdot \Pr(\inner{\vx_1,\vx_2} \leq 0.1) + \\
        + &\E\left[\left\|f(\vx_1,\vx_2;\vy) - g\left(\vv_{\max}\right) \right\|^2|\inner{\vx_1,\vx_2} > 0.1\right]\cdot \Pr(\inner{\vx_1,\vx_2} > 0.1)~.
    \end{align}
    By \cref{lem:hoeffding for majority} the first term is bounded by $\epsilon$. For the second term, note that $\left\|f(\vx_1,\vx_2;\vy) - g\left(\vv_{\max}\right) \right\|^2 \leq 2$ since the output of each function is a unit vector. Also, by standard concentration of random vectors on the unit sphere (see Section 3 in \cite{vershynin2018high}), there is a universal constant $c_3 > 0$ such that $\Pr(\inner{\vx_1,\vx_2} > 0.1) \leq \exp(-c_3 d)$. Hence, we can bound $\E\left[\left\|f(\vx_1,\vx_2;\vy) - g\left(\vv_{\max}\right) \right\|^2\right] \leq \epsilon + 2\exp(-c_3 d)$.

    We will bound the second term in \cref{eq:f-g(v) two terms} uniformly for any $\vx_1,\vx_2,\vy$. Note that $g$ is a ReLU neural network with $4$ layers, width $c_1d^2 H$ and weights bounded by $2$. Hence, we can bound its Lipschitz constant by the multiplication of the Frobenius norm of its weights matrices, which is bounded by
    $\left(4(c_1d^2 H))^4\right)$. Hence:
    \begin{align}
            \left\|g\left(\vv_{\max}\right) - g\left(\vv\right) \right\|^2 & \leq \left(4(c_1d^2 H))^4\right)\cdot \norm{\vv_{\max} - \vv}^2 \\
            & \leq \left(4(c_1d^2 H))^4\right)H\cdot \max_h\norm{\mX\sm(\mX^\top\mM_h\vy) - \arg\max_{\vx_i} (\vx_i^\top\mM_h\vy)}^2~.
    \end{align}
    There is $\delta > 0$ which depends on $\epsilon$ such that for the set:
    \begin{equation}
    A_\delta:=\{\vx_1,\vx_2,\vy\in\sd: \forall \vq_h,~ (\vx_1-\vx_2)^\top \vq_h \vq_h^\top \vy > \delta\}~,
    \end{equation}
    we have that $\Pr((\vx_1,\vx_2,\vy) \notin A_\delta) \leq \frac{\epsilon}{\left(4(c_1d^2 H))^4\right)2H}$. 
    Note that $\mX\sm(\alpha\mX^\top \vq_h \vq_h^\top \vy)\underset{\alpha\rightarrow\infty}{\longrightarrow}\arg\max_{\vx_i} (\vx_i^\top \vq_h \vq_h^\top\vy)$ uniformly on $A_\delta$ for every $\vq_h$. Hence,  we can find $\alpha > 0$ large enough such that:
    \begin{equation}
    \sup_{\vx_1,\vx_2,\vy\in\sd} \max_h\norm{\mX\sm(\mX^\top\mM_h\vy) - \arg\max_{\vx_i} (\vx_i^\top\mM_h\vy)}^2 \leq \frac{\epsilon}{ \left(4(c_1d^2 H))^4\right)H}~.
    \end{equation}
    This bounds $\E_{\vx_1,\vx_2,\vy\sim \unif(\sphere)}\left[\left\|g\left(\vv_{\max}\right) - g\left(\vv\right) \right\|^2 \right] \leq \epsilon$.

    Combining both bounds from \cref{eq:f-g(v) two terms} we have:
    \begin{equation}
    \E_{\vx_1,\vx_2,\vy\sim \unif(\sphere)}\left[\left\|f(\vx_1,\vx_2;\vy) - g\left(\vv\right) \right\|^2\right]  \leq \epsilon + \exp(-c_3d)
    \end{equation}
    where we changed the constant $c_3$ accordingly.
\end{proof}

\subsection{Proof of \texorpdfstring{\cref{thm:majority_positional}}{Theorem 7}}
\label{sec:majority_positional_proof}

\begin{proof}
We first define the construction. Let $\vq_1,\dots,\vq_H$ be such that the conclusions of \cref{lem:hoeffding for majority} are satisfied (e.g. by drawing them uniformly from the unit sphere). 
Let $\mE = \begin{bmatrix}1 & -1 & 0 \\ 0 & 0 & 0\end{bmatrix}$.
We call the second dimension of the positional encodings the ``scratch space''.
We construct the heads of the first layer as follows:
For each $h$, let 
\begin{equation}
\mM_h^{(1)} = \alpha\begin{bmatrix}\vq_h \\ 0 \\ 0\end{bmatrix} \begin{bmatrix}\vq_h^\top & 0 & 0\end{bmatrix}
\qquad
\mV_h^{(1)} = \begin{bmatrix}\vzero \\ 0 \\ 1\end{bmatrix} \begin{bmatrix}\vzero^\top & 1 & 0\end{bmatrix}
\end{equation}
The number of heads in the first layer is $H$. The weights of the second layer of the transformer are defined as:

\begin{equation} 
\mM_i^{(2)} = \begin{bmatrix}\vzero \\ 1 \\ 0\end{bmatrix} \begin{bmatrix}\vzero^\top & 0 & 1\end{bmatrix}
\qquad
\mV_i^{(2)} = \beta\begin{bmatrix}\ve_i \\ 0 \\ 0\end{bmatrix}\begin{bmatrix}\ve_i^\top & 0 & 0\end{bmatrix} 
\end{equation}
for the standard basis vectors $\ve_i$, and $\beta > 0$ will be defined later. The number of heads in the second layer is $d$. Finally, we set the output layer as $\mA = \frac1a\begin{bmatrix}\mI_d & 0 & 0 \end{bmatrix}$.

We will now prove the correctness of the construction. For the following argument, assume that each head uses hardmax instead of softmax. Note that by a similar argument used in the proof of \cref{thm:majority_wide_hidden}, this incurs an extra loss of $\epsilon$ for any $\epsilon > 0$ at the cost of increasing $\alpha$.

When the first layer is applied to the input $\vy$, the scratch space of the output of each head is $1$ if $\vx_1^\top \vq_h \vq_h^\top \vy > \vx_2^\top \vq_h \vq_h^\top \vy$ and $-1$ otherwise.
Let $s_y,s_{\vx_1},s_{\vx_1}$ be the sum of the scratch spaces of all the $H$ heads (we will in fact only use $s_\vy$). Note that $s_\vy > 0$ if the majority of the heads outputted $\vx_1$ and $s_\vy < 0$ if the majority outputted for $\vx_2$.
All other dimensions of the output are 0.
Thus, after the skip connection, the output of the first layer is
\begin{equation} T^{(1)}\left(\begin{bmatrix}\vx_1 & \vx_2 & \vy \\ 1 & -1 & 0 \\ 0 & 0 & 0\end{bmatrix}\right) = \begin{bmatrix}\vx_1 & \vx_2 & \vy \\ 1 & -1 & 0 \\ s_{\vx_1} & s_{\vx_2} & s_{\vy} \end{bmatrix} ~.
\end{equation}
For the second layer of attention, note that each head attends to $\vx_1$ if $s_{\vy} > 0$ and to $\vx_2$ otherwise.
By summing $d$ such heads, where each head corresponds to some standard basis vector, 
the output of the second layer is
\begin{equation} T^{(2)}\left(T^{(1)}\left(\begin{bmatrix}\vx_1 & \vx_2 & \vy \\ 1 & -1 & 0 \\ 0 & 0 & 0\end{bmatrix}\right)\right) = \begin{bmatrix}\vy \\ 0 \\ s_{\vy}\end{bmatrix} + \beta\begin{bmatrix}\vx_1 \\ 1 \\ s_{\vx_1} \end{bmatrix} \end{equation}
if $s_{\vy} > 0$, and the same with $\vx_2$ if $s_{\vy} > 0$.
Finally, the after the output layer, the output of the entire transformer is $\frac{1}{\beta}\vy + \vx_1$ if $s_{\vy} > 0$, or $\frac{1}{\beta}\vy + \vx_2$ otherwise.

By taking first take $\beta > \frac{1}{\epsilon} $, we get that the output of the transformer is the same as the output of the majority of the rank-$1$ attention heads of the first layer of the transformer, up to an extra error of $\epsilon$. By \cref{lem:hoeffding for majority} and taking the number of heads $H$ to be large enough, we get that the majority of the heads in the first layer approximates the target up to an error of $\epsilon$. Scaling $\epsilon$ appropriately finishes the proof.

\end{proof}

\end{document}